\documentclass{article}

\usepackage{microtype}
\usepackage{graphicx}
\usepackage{subfigure}
\usepackage{booktabs} 

\usepackage{hyperref}


\usepackage{algorithm, algpseudocode}


\usepackage[accepted]{icml2023}

\usepackage{amsmath}
\usepackage{amssymb}
\usepackage{mathtools}
\usepackage{amsthm}
\usepackage{stmaryrd}

\theoremstyle{plain}
\newtheorem{theorem}{Theorem}[section]
\newtheorem{proposition}[theorem]{Proposition}
\newtheorem{lemma}[theorem]{Lemma}

\newtheorem{definition}[theorem]{Definition}

\newtheorem{remark}[theorem]{Remark}

\usepackage[textsize=tiny]{todonotes}

\icmltitlerunning{One-Shot Federated Conformal Prediction}

\usepackage[capitalize,noabbrev]{cleveref}
\usepackage{dsfont}
\usepackage{enumitem}

\usepackage{comment}
\allowdisplaybreaks 

\newcommand{\calD}{{\cal D}}
\newcommand\calX{{\cal X}}
\newcommand\calY{{\cal Y}}

\newcommand\calA{{\cal A}}
\newcommand\calS{{\cal S}}
\newcommand\calO{{\cal O}}

\newcommand\fh{{\widehat{f}}}
\newcommand\IE{{\mathbb{E}}}
\newcommand\IR{{\mathbb{R}}}
\newcommand\IN{{\mathbb{N}}}
\newcommand\IP{{\mathbb{P}}}
\newcommand\Chat{{\widehat{\mathcal{C}}}}
\newcommand\Qh{{\widehat{Q}}}

\newcommand{\Ind}[1]{\mathds{1}\{#1\}}

\newcommand{\One}[1]{{\mathds{1}}\left\{{#1}\right\}}
\newcommand{\OneSm}[1]{{\mathds{1}}\{{#1}\}}
\newcommand{\Bin}{\mathrm{Bin}}
\newcommand{\PoisBin}{\mathrm{PoisBin}}
\newcommand{\dTV}{d_{\mathrm{TV}}}

\usepackage{color}
\usepackage{xcolor} 

\newcommand{\method}{\text{FedCP-QQ}}
\newcommand{\methodDP}{\text{FedCP$^2$-QQ}}
\newcommand{\methodAvg}{\text{FedCP-Avg}}

\renewcommand{\geq}{\geqslant}
\renewcommand{\leq}{\leqslant}

\begin{document}

\twocolumn[
\icmltitle{One-Shot Federated Conformal Prediction}



\icmlsetsymbol{equal}{*}

\begin{icmlauthorlist}
	\icmlauthor{Pierre Humbert}{yyy}
	\icmlauthor{Batiste Le Bars}{zzz}
	\icmlauthor{Aurélien Bellet}{zzz}
	\icmlauthor{Sylvain Arlot}{yyy}
\end{icmlauthorlist}

\icmlaffiliation{yyy}{Université Paris-Saclay, CNRS, Inria, Laboratoire de mathématiques d’Orsay, 91405, Orsay, France.}
\icmlaffiliation{zzz}{Université Lille, Inria, CNRS, Centrale Lille, UMR 9189, CRIStAL, F-59000 Lille}

\icmlcorrespondingauthor{Pierre Humbert}{pierre.humbert@universite-paris-saclay.fr}

\icmlkeywords{Machine Learning, ICML}

\vskip 0.3in
]



\printAffiliationsAndNotice{}  

\begin{abstract}
In this paper, we introduce a conformal prediction method to construct prediction sets in a one-shot federated learning setting. More specifically, we define a quantile-of-quantiles estimator and prove that for any distribution, it is possible to output prediction sets with desired coverage in only one round of communication. To mitigate privacy issues, we also describe a locally differentially private version of our estimator. Finally, over a wide range of experiments, we show that our method returns prediction sets with coverage and length very similar to those obtained in a centralized setting. Overall, these results demonstrate that our method is particularly well-suited to perform conformal predictions in a one-shot federated learning setting.
\end{abstract}

\section{Introduction}


Federated Learning (FL) is a recent paradigm that allows to learn from decentralized data sets stored locally by multiple agents \citep{kairouz2021advances}. FL is particularly appealing when data are highly sensitive and cannot be centralized for privacy or security reasons. So far, the design of FL algorithms has mainly focused on the training phase of machine learning: the goal is to fit models on decentralized data sets while minimizing the amount of communication or optimizing the privacy-utility trade-off \citep[see e.g.][]{mcmahan2017communication, dp_fed_sgd_user_level, li2020federated,scaffold,Noble2022a}. However, FL poses further challenges regarding model evaluation, as this step must also be done without access to centralized data. In particular, with the increasing popularity of black-box methods, deploying machine learning models in real-world applications often requires to appropriately \emph{quantify the uncertainty} of their predictions. Unfortunately, models trained with the above supervised FL algorithms only provide point predictions (e.g., class labels or regression targets). This is not sufficient in high-stakes applications like medicine \citep{begoli2019need}, where decisions may impact human lives.

In this work, we investigate the task of outputting a prediction set rather than a single point prediction in a FL setting. Formally, given some data stored by multiple agents and an additional test point $(X, Y)$, we want to construct a \emph{marginally valid} set which is likely to contain the unknown response $Y$. In other words, we want a set $\Chat(X)$ such that
\begin{equation}
	\label{eq:marg_cp}
	\IP\big(Y \in \Chat(X) \big) \geq 1-\alpha \; ,
\end{equation}
where $\alpha \in (0, 1)$ is a desired miscoverage rate. Although there exist several methods to construct such a set \citep{papadopoulos2002inductive, vovk2005algorithmic, romano2019conformalized}, they require access to a \emph{centralized} data set. They are thus incompatible with the constraints of FL, in which agents process their data locally and only interact with a central server by sharing some aggregate statistics. Constructing a valid prediction set is even more challenging in the \emph{one-shot FL} \citep{zhang2012communication,guha2019one,Bayesian-oneshot,practical-oneshot,clustering-oneshot,one-shot-jmlr} that we consider in this work, where the communication between the agents and the server is further restricted to a \emph{single round}. One-shot FL is motivated by the fact that the number of communication rounds is often the main bottleneck in FL \citep{kairouz2021advances}.

\paragraph{Contributions.}
In this paper, we present an intuitive one-shot FL method based on Conformal Prediction (CP) \citep{vovk2005algorithmic, papadopoulos2002inductive} to construct distribution-free prediction sets satisfying \eqref{eq:marg_cp}. The key step of CP methods is the ordering of \emph{scores} computed for each calibration data point. In the FL setting, this ordering step is not possible without exchanging the local data sets or performing many agent-server communication rounds. To circumvent this problem, we define a \emph{quantile-of-quantiles} estimator: each agent sends to the server a local empirical quantile and the server aggregates them by computing a quantile of these quantiles. We describe how to choose the order of the quantiles (depending on the number of agents and the size of their local data sets) to obtain a prediction set that satisfies \eqref{eq:marg_cp}. We also prove that property \eqref{eq:marg_cp} can be verified \emph{conditionally to the observed data} with a modification of the selected quantiles. While the previous results rely on certain data homogeneity assumptions, we further quantify the impact of heterogeneous (non-identically distributed) data on the performance of our algorithm. To address use cases with strong privacy constraints, we derive a version of our approach that satisfies differential privacy \cite{dwork2014algorithmic}, in which agents run the exponential mechanism to privately select their local quantile. Finally, we empirically evaluate the performance of our method on standard CP benchmarks and show that it produces prediction sets that are very close to the ones obtained when data are centralized.

\section{Background and Related Work}
\label{sec:background}


\subsection{Split Conformal Prediction}
\label{subsec:spiltCP}
Conformal Prediction (CP) is a framework to construct distribution-free prediction sets satisfying \eqref{eq:marg_cp} \citep{vovk2005algorithmic}. One of the most popular methods to perform CP in a centralized setting is the \emph{split conformal} \cite{papadopoulos2002inductive} which is at the core of our main contribution described in Section \ref{sec:main}.

To use the split conformal method (split CP), we first need to choose a score function $s:
\calX \times \calY \rightarrow \IR$, which measures the magnitude of a predictor error for a given point. Whether we are in the regression or classification setting, many different score functions exist in the literature \citep[see e.g.][]{angelopoulos2023conformal}. In regression, for instance, a common choice is the fitted absolute residual
where $\fh$ is some predictor learned on a training data set. Note that our approach does not assume a particular choice of score function, so throughout the paper, we keep the function $s$ abstract.
Then, we split the data $\calD_{n+n_{tr}} = \{(X_{1}, Y_{1}),\ldots,(X_{n+n_{tr}}, Y_{n+n_{tr}}) \}$  into
a \emph{calibration} set  $\calD_{n}^{cal}=\{(X_{1}, Y_{{1}}),\ldots,(X_{n}, Y_{n}) \}$ and a \emph{training} set $\calD_{n_{tr}}^{tr}=\{(X_{n+1}, Y_{n+1}),\ldots,(X_{n+n_{tr}}, Y_{n+n_{tr}}) \}$ with $n, n_{tr} \geq 1$. The predictor $\widehat{f}$ is fitted on $\calD_{n_{tr}}^{tr}$ and conformity scores $\calS^{cal}_{n} \triangleq\{S_{1},\ldots, S_{n}\}$ are calculated on $\calD_{n}^{cal}$ via the previously chosen score function $s$. 
Finally, given a test point $X$ and $\alpha \in (0, 1)$, we construct the conformal set
\begin{align*}
	\label{set_conf}
	\Chat(X) = \Big\{y \in \IR : s(X, y) \leq \widehat{Q}_{(\lceil (n+1)(1-\alpha)\rceil)}(\calS^{cal}_{n})\Big\} \; ,
\end{align*}
where $\widehat{Q}_{(\cdot)}(\cdot)$ is defined by
\begin{equation}
\label{def:Qk}
\widehat{Q}_{(k)}(\calS') = \widehat{Q}_{(k)} \triangleq \begin{cases}
S'_{(k)} & \text{if 
	$k \leq |\calS'| $} \\
\infty & \text{otherwise} \; ,
\end{cases}
\end{equation}
with $|\calS'|$ the size of the sample $\calS'$, and $S'_{(1)} \leq \ldots \leq S'_{(|\calS'|)}$ the order statistics of the scores $S'_
{1}, \ldots, S'_{|\calS'|}$ in $\calS'$. In other words, $\widehat{Q}_{(k)}$ outputs the $k$-th smallest value in a given set of scores. The following theorem proves that the set returned by the split CP method satisfies \eqref{eq:marg_cp} under mild assumptions.
\begin{theorem}[\citealp{vovk2005algorithmic, lei2018distribution}]
	\label{thm:cp_base}
For any $n, n_{tr} \geq 1$, 
let us consider $n+n_{tr}$ i.i.d. (or only exchangeable) random variables $(X_1, Y_1), \ldots, (X_{n+n_{tr}}, Y_{n+n_{tr}})$ from $\calX \times \calY$ and an additional test point $(X, Y)$. For any score function $s$ and any $\alpha \in (0, 1)$, the set returned by the split CP method satisfies
\begin{equation*}
    \IP\left(Y \in \Chat(X) \right) \geq 1-\alpha \; .
\end{equation*}
Furthermore, if $S_{1}, \ldots S_{n}$ are almost surely distinct, this probability is upper bounded by $1-\alpha + 1/(n + 1)$.
\end{theorem}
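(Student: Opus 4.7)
The plan is to reduce the statement to a classical fact about the rank of one exchangeable variable among others. First I would condition on the training set $\calD_{n_{tr}}^{tr}$: once $\fh$ is fixed, the calibration scores $S_i = s(X_i, Y_i)$ together with the test score $S = s(X, Y)$ are $n+1$ i.i.d.\ (resp.\ exchangeable) real random variables, because $(X_1, Y_1), \ldots, (X_n, Y_n), (X, Y)$ are i.i.d.\ (resp.\ exchangeable) and independent of $\calD_{n_{tr}}^{tr}$. Since conditional validity implies marginal validity after integrating over $\calD_{n_{tr}}^{tr}$, it suffices to prove the bound in this conditional setting, where the scores are genuinely exchangeable.

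Next I would rewrite the coverage event. Observe that $Y \in \Chat(X)$ iff $s(X, Y) \leq \widehat{Q}_{(k)}(\calS^{cal}_n)$ with $k = \lceil (n+1)(1-\alpha)\rceil$. Handle the edge case first: if $k > n$, then by definition \eqref{def:Qk} the threshold is $+\infty$ and coverage is trivially $1$. Otherwise the event becomes $S \leq S_{(k)}$, i.e., the rank of $S$ among $\{S_1, \ldots, S_n, S\}$ is at most $k$ (breaking ties in favor of inclusion, since $S \leq S_{(k)}$ holds whenever $S$ equals the $k$-th order statistic). The core of the argument is then the standard exchangeability lemma: for $n+1$ exchangeable real random variables, the rank of any chosen one is stochastically dominated by the uniform distribution on $\{1, \ldots, n+1\}$, so
\begin{equation*}
\IP\bigl(S \leq S_{(k)} \,\bigm|\, \calD_{n_{tr}}^{tr}\bigr) \geq \frac{k}{n+1} \geq 1 - \alpha,
\end{equation*}
by the definition of $k$. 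Taking expectation over $\calD_{n_{tr}}^{tr}$ gives the desired lower bound.

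For the upper bound under the distinctness assumption, the same exchangeability argument yields that the rank is exactly uniform on $\{1, \ldots, n+1\}$ (no tie-breaking ambiguity), so $\IP(S \leq S_{(k)}) = k/(n+1)$. Since $k = \lceil (n+1)(1-\alpha)\rceil < (n+1)(1-\alpha) + 1$, this gives $\IP(Y \in \Chat(X)) < 1 - \alpha + 1/(n+1)$, which is the claimed two-sided bound.

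The only subtle point, and the one I would be most careful about, is the rank/tie-breaking step: one must check that $S \leq S_{(k)}$ is equivalent to the rank of $S$ being $\leq k$ under the chosen tie-breaking convention, and that the stochastic-domination inequality survives ties. Everything else follows mechanically from conditioning on $\calD_{n_{tr}}^{tr}$ and applying the exchangeability of $S_1, \ldots, S_n, S$.
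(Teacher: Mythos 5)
The paper does not prove this statement itself---it is quoted as a background result from \citet{vovk2005algorithmic} and \citet{lei2018distribution}---and your argument is precisely the standard proof of it: reduce to the $n+1$ scores, note that the coverage event is a rank event with ties broken favorably, and use exchangeability to get $\IP(\mathrm{rank}(S)\leq k)\geq k/(n+1)\geq 1-\alpha$, with equality $k/(n+1)$ under a.s.\ distinctness, which is below $1-\alpha+1/(n+1)$. The only wording to fix is in the conditioning step: in the ``only exchangeable'' case the calibration-plus-test block need not be \emph{independent} of the training block; what you actually need (and what does hold, and what the paper implicitly uses by making all statements conditional on $\fh$) is that, given the training data, the scores $S_1,\ldots,S_n,S$ remain exchangeable---this is a cosmetic correction, not a gap.
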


Although the first CP methods were the \emph{split} and the related \emph{full} methods \cite{papadopoulos2002inductive, vovk2005algorithmic}, many extensions based upon them have been proposed recently. In regression, \citet{lei2018distribution} present a method called locally weighted CP and provide theoretical insights for conformal inference. More recently, \citet{romano2019conformalized} have developed a variant of the split CP called Conformal Quantile Regression (CQR). Other recent alternatives have been proposed \citep{kivaranovic2020adaptive, sesia2021conformal, gupta2022nested, ndiaye2022stable}. We refer to \citet{vovk2005algorithmic}, \citet{angelopoulos2023conformal} and \citet{fontana2023conformal} for in-depth presentations of CP.

\subsection{Related Work in Federated Learning}
\label{sec:back-fl}

As already mentioned, FL methods are today mostly focused on the training part of the learning process (i.e., fitting $\fh$ to the data). Nevertheless, a few recent works have considered other types of FL problems that can be related to our work. The closest related work is the one of  \citet{lu2021distribution} which, to the best of our knowledge, is the only paper claiming to perform conformal prediction in the FL setting. Their idea is to locally calculate the quantiles $\widehat{Q}_{(\lceil (n+1) (1-\alpha)\rceil)}$ for all agents and to average them in the central server. Unfortunately, they do not prove that their prediction set has valid coverage. Furthermore, their method is non-robust, especially when the size of local data sets is small, and their experiments (and ours, in Section 
\ref{sec:xps}) suggest that this set is generally too large. We show in the next sections that by considering a quantile-of-quantiles instead of an average of quantiles, the method we propose addresses these limitations. \citet{gauraha2021synergy} propose an ensemble-based CP approach that can be performed in a distributed setting. However, they assume that a shared calibration set is available on the central server, which is unrealistic in FL. Finally, we can also mention recent works on federated evaluation of classifiers \citep{cormode2022federated}, federated quantile computation \cite{andrew2021differentially, pillutla2022differentially}, and on uncertainty quantification with Bayesian FL \cite{el2021federated,kotelevskii2022fedpop} which, although related to our work, do not study CP and do not allow to obtain coverage guarantees.

\section{Quantile-of-Quantiles for Federated CP}


\label{sec:main}

In this section, we present a method to perform conformal prediction in a \emph{one-shot} FL setting  \citep{guha2019one, zhang2012communication}, where only one round of communication from the agents to the central server is allowed.

\subsection{Setup and Objective}

Consider a set of $m \in \IN^*$ agents, with their own local data, that seek to collaborate in order to compute a valid prediction set. For simplicity, we suppose that each agent has exactly $n$ calibration data points, and refer to Appendix~\ref{sec:diff_n} for the case where agents have calibration sets of different sizes. We also assume that the predictor $\fh$ is given in advance: for instance, it could be learned on a separate set of data points using standard FL algorithms such as FedAvg \citep{mcmahan2017communication}. We therefore only focus on the calibration of the prediction set and not on the training step.  As a consequence, in the following, all theoretical statements are made conditionally on $\fh$ (often implicitly).

Formally, each agent $j\in\{1,\dots,m\}$ holds a local calibration data set $\mathcal{S}^{(j)} \triangleq (S^{(j)}_1, \ldots, S^{(j)}_n)$ composed of $n$ scores, where $S^{(j)}_i = s(X^{(j)}_i, Y^{(j)}_i)$ is the score associated to the $i$-th calibration data point of agent $j$ and we want to find a particular value $\widehat{q}$ such that for a test point $(X, Y)$, the set $\Chat(X) = \left\{y \in \IR : s(X, y) \leq \widehat{q} \right\}$ contains the unknown response $Y$ with probability at least $1-\alpha$. In the centralized case, the split CP method presented in Section \ref{subsec:spiltCP} requires to order all the scores and to choose $\widehat{q}$ as the $\lceil(mn+1)(1-\alpha)\rceil$ smallest score. In one-shot FL, this global ordering step is only possible if the agents send their whole list of local scores to the server. This naive implementation of the split CP method is impractical, due to both privacy concerns and unacceptable communication costs, requiring us to design another strategy. As a single round of communication is allowed, the main difficulty is to choose what should be sent from the agents to the server, and what kind of aggregation should be done by the server to yield the desired coverage.

\subsection{Main Contribution: \method}
\looseness=-1 Our method is based on the idea that each agent $j$ should return a quantile of its local scores $\mathcal{S}^{(j)}$, in the same way as for the split CP method described in Section~\ref{subsec:spiltCP}. The main questions that then arise are (i) which quantile of the scores the agents should send, and (ii) how to aggregate them at the central server level. \citet{lu2021distribution} propose to use an empirical average, but this aggregation strategy is not satisfactory. This is obvious in the extreme case where $n=1$ (a single data point per agent): it amounts to calculating the average of the local scores, which typically fails to provide the desired coverage \eqref{eq:marg_cp}. Instead, we propose to select a quantile of the locally computed quantiles. This \emph{quantile-of-quantiles} estimator is defined below.

\begin{definition}[Quantile-of-quantiles]
For any $(\ell, k)$ in $\llbracket 1, n \rrbracket \times \llbracket 1, m \rrbracket$, the Quantile-of-Quantiles (QQ) estimator is defined by
\begin{equation}
    \label{eq:CP-QQ}
    \Qh_{(\ell, k)} \triangleq \Qh_{(k)}\left(\Qh_{(\ell)}(\mathcal{S}^{(1)}), \ldots, \Qh_{(\ell)}(\mathcal{S}^{(m)})\right)\; ,
\end{equation}
where $\Qh_{(\cdot)}(\cdot)$ is defined by Equation~\eqref{def:Qk}.
\end{definition}
In words, QQ takes for each agent the $\ell$-th smallest local score and then
takes the $k$-th smallest value of these scores. This requires a single round
of communication and thus fits the constraints of one-shot FL. The associated
plug-in prediction set is 
\begin{equation}
	\label{CPQQ_set}
    \Chat_{\ell, k}(X) = \left\{y \in \IR : s(X, y) \leq \Qh_{(\ell, k)} \right\} \, .
\end{equation}
Our objective is now to find $(\ell, k)$ such that $\IP(Y \in \Chat_{\ell, k}(X))$ is closest possible to $1-\alpha$ while being guaranteed to be above. To this aim, we derive the following result.

\begin{theorem}
\label{them:main}
Let $\{(X^{(j)}_i, Y^{(j)}_i)\}_{i,j=1}^{m,n}$ and $(X, Y)$ be i.i.d. random variables (given $\fh$). For any $(\ell,k) \in \llbracket 1, n \rrbracket \times \llbracket 1, m \rrbracket$ we have:
\begin{equation}
\label{eq:main_equa}
\begin{split}
   &\IP\left(Y \in \Chat_{\ell, k}(X)\right) \geq M_{\ell, k} \triangleq \\
   & 1 - \displaystyle \dfrac{1}{m n + 1} \sum^{m}_{j=k} \binom{m}{j} \sum^{n}_{I_{1,j}=\ell} \sum^{\ell-1}_{I^c_{1,j}=0}  \dfrac{\binom{n}{i_1} \cdots \binom{n}{i_m}}{\binom{m n}{i_1 + \cdots + i_m}}\; ,
\end{split}
\end{equation}
where $I_{1,j} = \{i_1, \ldots, i_j\}$ and $I^c_{1,j} =\{i_{j+1},\ldots,i_m\}$. Moreover, when the associated scores $\{S_i^{(j)}\}_{i,j=1}^{n,m}$ and $S\triangleq s(X,Y)$ have continuous c.d.f, \eqref{eq:main_equa} is an equality.
\end{theorem}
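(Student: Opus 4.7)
The plan is to derive the equality in the continuous case via a direct exchangeability argument on ranks, and then to deduce the inequality in general via a vanishing perturbation.

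First, I would reduce the coverage event to a rank-based statement. For each agent $j$, define $T_j \triangleq |\{i \in \llbracket 1, n \rrbracket : S^{(j)}_i < S\}|$. Since $S^{(j)}_{(\ell)} < S$ is equivalent to ``agent $j$ has at least $\ell$ local scores strictly below $S$'', i.e.\ $T_j \geq \ell$, and since $\Qh_{(\ell,k)} < S$ holds iff at least $k$ of the $m$ local $\ell$-th order statistics $S^{(j)}_{(\ell)}$ are strictly less than $S$, I obtain the identity $\{S > \Qh_{(\ell,k)}\} = \{|\{j : T_j \geq \ell\}| \geq k\}$. Note that this equivalence holds without any continuity assumption.

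Second, assume first that the scores have continuous c.d.f., so the $mn+1$ i.i.d.\ scores are almost surely distinct and all $(mn+1)!$ orderings are equiprobable. Counting orderings consistent with $(T_1, \dots, T_m) = (i_1, \dots, i_m)$ amounts to choosing, for each agent $j$, which $i_j$ of its $n$ positions are ``below $S$'' (giving $\prod_j \binom{n}{i_j}$ ways), and then arranging the $r-1 \triangleq \sum_j i_j$ values below $S$ and the $mn-r+1$ values above $S$ in their designated slots. After simplification,
\begin{equation*}
\IP(T_1 = i_1, \dots, T_m = i_m) = \frac{1}{mn+1} \cdot \frac{\binom{n}{i_1}\cdots\binom{n}{i_m}}{\binom{mn}{i_1 + \cdots + i_m}}\; .
\end{equation*}
Decomposing $\{|\{j : T_j \geq \ell\}| \geq k\}$ by the exact number $j \in \{k,\dots,m\}$ of agents satisfying $T_{j'} \geq \ell$, collapsing the $\binom{m}{j}$ equiprobable choices of agent labels by exchangeability of agents, and summing the above probability over $i_1, \dots, i_j \in \llbracket \ell, n \rrbracket$ and $i_{j+1}, \dots, i_m \in \llbracket 0, \ell-1 \rrbracket$ exactly reproduces the RHS of \eqref{eq:main_equa}. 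This yields $\IP(S > \Qh_{(\ell,k)}) = 1 - M_{\ell,k}$, hence the equality.

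For the general case, I would use a standard tie-breaking argument. Introduce $\tilde S^{(j),\epsilon}_i \triangleq S^{(j)}_i + \epsilon U^{(j)}_i$ and $\tilde S^\epsilon \triangleq S + \epsilon U$ where the $U$'s are i.i.d.\ uniform on $[0,1]$ and independent of all scores. For every $\epsilon > 0$, the perturbed scores are i.i.d.\ with continuous c.d.f., so the previous step gives $\IP(\tilde S^\epsilon > \tilde Q^{\epsilon}_{(\ell,k)}) = 1 - M_{\ell,k}$. Since $\Qh_{(\ell,k)}$ is a continuous function of the scores (order statistics are continuous), on the event $\{S > \Qh_{(\ell,k)}\}$ the strict inequality $\tilde S^\epsilon > \tilde Q^\epsilon_{(\ell,k)}$ holds for every sufficiently small $\epsilon$; Fatou's lemma applied to the indicators therefore yields $\IP(S > \Qh_{(\ell,k)}) \leq \liminf_{\epsilon \to 0^+} \IP(\tilde S^\epsilon > \tilde Q^\epsilon_{(\ell,k)}) = 1 - M_{\ell,k}$, i.e.\ the announced inequality. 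The main obstacle is the combinatorial bookkeeping in the second step: one must verify that the triple sum, the $\binom{m}{j}$ symmetry factor, and the index ranges $\llbracket \ell, n\rrbracket$ and $\llbracket 0, \ell-1\rrbracket$ line up exactly with the form stated in \eqref{eq:main_equa}, rather than with an equivalent but differently indexed expression.
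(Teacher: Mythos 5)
Your proof is correct, and it takes a genuinely different route from the paper. The paper reduces to the uniform case via the probability integral transform ($S \stackrel{d}{=} F_S^{-1}(U)$, with the inequality coming from monotonicity of $F_S^{-1}$ and the equality from its strict monotonicity under continuity), then computes $\IP(U \leq U_{(\ell,k)})$ by nesting the order-statistic c.d.f.\ formula twice and integrating, which produces Beta functions and, after the identity $\binom{a}{b} = \frac{1}{(a+1)\mathrm{B}(b+1,a-b+1)}$, the stated expression. You instead work directly with ranks: the deterministic identity $\{S > \Qh_{(\ell,k)}\} = \{|\{j : T_j \geq \ell\}| \geq k\}$, the exchangeability of the $mn+1$ scores giving $\IP(T_1 = i_1,\ldots,T_m = i_m) = \frac{1}{mn+1}\binom{n}{i_1}\cdots\binom{n}{i_m}/\binom{mn}{i_1+\cdots+i_m}$, and a symmetry collapse yielding $M_{\ell,k}$ (your combinatorial bookkeeping does line up with \eqref{eq:main_equa}, since the summand is symmetric in $(i_1,\ldots,i_m)$). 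Your approach has the advantage of exposing immediately the multivariate-hypergeometric structure that the paper only identifies later (in the appendix on fast computation), and it makes the continuous-case equality transparent without any integration. For the general case you pay a small price: the paper's quantile-transform trick handles atoms in one line, whereas you need the additive tie-breaking perturbation $S \mapsto S + \epsilon U$ plus a Fatou/liminf argument; this works (the $1$-Lipschitz property of order statistics under sup-norm perturbations, which you invoke, guarantees $\tilde S^\epsilon > \tilde Q^\epsilon_{(\ell,k)}$ for $\epsilon$ small on the event $\{S > \Qh_{(\ell,k)}\}$, and the perturbed scores are i.i.d.\ with continuous c.d.f.), but it is the heavier of the two mechanisms. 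Both arguments should be carried out conditionally on $\fh$ and then integrated, as the paper does; your write-up leaves this implicit but it does not affect validity.
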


The proof is given in Appendix~\ref{app:proof-them:main}. This theorem shows that we can lower bound the probability of coverage of our quantile-of-quantiles prediction set by a quantity $M_{\ell, k}$ that does not depend on the data distribution but only on $m$, $n$, $\ell$ and~$k$. Furthermore, the lower bound becomes an \emph{equality} when scores have a continuous c.d.f. This is the case, for instance, with the fitted absolute residual when the conditional distribution of $Y$ given $X$ has a continuous c.d.f., i.e., when the noise distribution is atomless. Note that although the theorem requires the data points to be i.i.d., in fact only the scores need to satisfy this hypothesis (conditionally to $\fh$). This is interesting since there are situations where the scores are i.i.d. even though data distributions are different across agents. In Section \ref{sec:hetero}, we further discuss the impact of data heterogeneity across agents, an important aspect of many FL applications.

Based on Theorem~\ref{them:main}, our algorithm returns $\Qh_{(\ell^*, k^*)}$ with
\begin{align}
  \label{eq:lk_star}
  (\ell^*, k^*) = { \arg\min_{\ell, k}} \left\{M_{\ell, k} : M_{\ell, k} \geq 1-\alpha\right\} \; .
\end{align}
By construction, the associated set \eqref{CPQQ_set} is \emph{marginally valid}, in the sense that it satisfies the desired coverage \eqref{eq:marg_cp}. The full procedure, called \emph{Federated Conformal Prediction with Quantile-of-Quantiles} (\method), is summarized in Algorithm~\ref{alg:CP-QQ}.

\begin{algorithm}[t]
\small
\caption{\method}
\begin{algorithmic}
    \State\textbf{Input:} Local scores $\{\mathcal{S}^{(j)}\}_{j=1}^m$, $\alpha, M$ (see Equation~\eqref{eq:main_equa}) 
    
    \vspace{.5em}
    \State $(\ell^*, k^*) \longleftarrow {\displaystyle \arg\min_{\ell, k}} \left\{M_{\ell, k} : M_{\ell, k} \geq 1-\alpha\right\} $ 
    \vspace{.5em}

    \For{$j=1, \ldots, m$}
        \State Agent $j$ sends $\Qh_{(\ell^*)}(\mathcal{S}^{(j)})$ to the central server
    \EndFor
    
    \State Central server returns $\Qh_{(k^*)}\left(\Qh_{(\ell^*)}(\mathcal{S}^{(1)}), \ldots, \Qh_{(\ell^*)}(\mathcal{S}^{(m)})\right)$
\end{algorithmic}
 \label{alg:CP-QQ}
\end{algorithm}

\paragraph{Particular cases.}
To gain more intuition on our \method~procedure, let us consider the two extreme cases $n=1$ and $n\rightarrow \infty$. When $n=1$, each agent sends its unique score to the server. Thus, by Theorem~\ref{thm:cp_base}, it suffices for the server to compute the $k$-th smallest score with $k = \lceil(m+1)(1-\alpha)\rceil$ to obtain a valid set. In the other extreme case where $n \rightarrow \infty$, if the agents send their $\ell$-th smallest score with $\ell=\lceil(n+1)(1-\alpha)\rceil$, each agent has in fact sent the true quantile of order $(1-\alpha)$ of the distribution of~$S$. The server can therefore choose any of these values and obtains a valid set. We see that in both cases, if both the agents and the server compute appropriate quantiles, we can obtain a valid set. Our method extends this idea to any values of $m$ and $n$ using Theorem \ref{them:main} and Equation \eqref{eq:lk_star}. In Appendix \ref{sec:max_machine}, we study another interesting specific case where each machine sends its maximum value, i.e., $\ell = n$.

\paragraph{Computational optimizations.}
The brute-force computation of $M_{\ell, k}$ in Equation \eqref{eq:main_equa} for all $(\ell,k)$ can be quite costly in practice. To accelerate this step, we describe in Appendix \ref{sec:fast_comput} an efficient way to compute $M_{\ell, k}$, based on the calculation of rectangular probabilities of a multivariate hypergeometric distribution.

We also note that $M=(M_{\ell, k})_{(\ell,k) \in \llbracket 1, n \rrbracket \times \llbracket 1, m \rrbracket}$ or $(\ell^*,k^*)$ can be precomputed and reused across multiple executions of \method. Indeed, as $M$ and $(\ell^*,k^*)$ are independent from the distribution of the data (Theorem~\ref{them:main}), they do not change as long as $m$ (the number of agents) and $n$ (the size of local data sets) remain fixed. This is the case for instance when computing prediction sets for multiple score functions $s$, predictors $\fh$, and miscoverage rates $\alpha$ on the same data.

\subsection{Upper Bound on the Probability of Coverage}
\label{sec:upper}
While by construction our probability of coverage is necessarily lower bounded by $1-\alpha$, it is also interesting to have an upper bound, guaranteeing that the coverage of our prediction set is not too large. In the centralized case, if the scores have a continuous c.d.f., the split CP method with a calibration set of size $mn$ gives $\IP(Y \in \Chat(X)) \leq 1-\alpha + 1/(mn+1)$ (Theorem \ref{thm:cp_base}). This means that when there is only one agent (or when agents do not collaborate), this probability is upper bounded by $1-\alpha + 1/(n+1)$.

Assuming that the scores have a continuous c.d.f., in Figure 
\ref{fig:compare_upper} we compare the two upper bounds with the value of $M_{\ell^*, k^*} = \IP(Y \in \Chat_{\ell^*, k^*}(X))$ returned by \method. Recall that, by Theorem~\ref{them:main}, $M_{\ell^*, k^*}$ is equal to the exact coverage of $\Chat_{\ell^*, k^*}(X)$. Figure~\ref{fig:compare_upper} shows that \method~returns prediction sets with coverage (in blue) comparable to the (tight) upper bound of the centralized case with $mn$ calibration points (in orange). We also see that the coverage is much larger if we consider the data of a single agent (in red), which illustrates the advantage of our method and the need for collaboration between the agents.

The form of our quantile-of-quantiles estimator does not allow us to extend the proof techniques of the centralized framework and obtain a theoretical upper bound similar to the one of Theorem~\ref{thm:cp_base}. Nevertheless, the results obtained in Figure \ref{fig:compare_upper} make us conjecture that an upper bound could be of the same order as in the centralized framework, i.e., in $1-\alpha + \calO(1/(mn+1))$.
\begin{figure}[t]
	\centering
	\includegraphics[width=1.\linewidth]{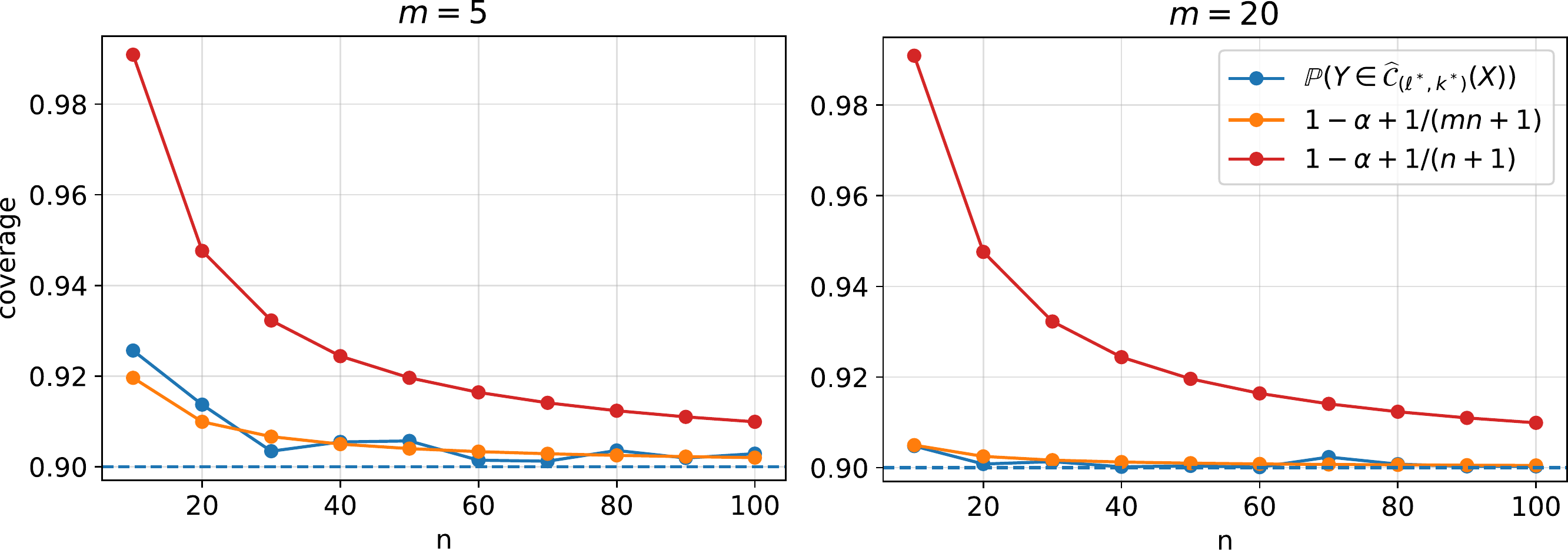}
	\vspace{-2em}
	\caption{Comparison of the exact value of $\IP(Y \in \Chat_{\ell^*, k^*}(X)) = M_{\ell^*, k^*}$ (blue) with the upper bound either when data are centralized (orange) or when there is only one agent (red). Parameters are $\alpha=0.1, m=\{5, 20\}$, and $n=\{10, \ldots, 100\}$.}
	\label{fig:compare_upper}
\end{figure}
\subsection{Conditional Coverage Guarantee}
In practice, we are interested in the coverage rate for test points when the data set is fixed. However, the guarantee in \eqref{eq:marg_cp} does not address this as the probability is also taken over the (calibration) data. In other words, it bounds the miscoverage rate \emph{on average} over all possible calibration data points (and over a training set if $\fh$ is learned). Instead, we can define the \emph{conditional} miscoverage rate as a function of the calibration data: 
\begin{equation} \label{def.alphaP(Dmn)}
    \alpha_P(\mathcal{D}_{mn}) = \IP\left(Y \notin \Chat_{\ell, k}(X) \mid \fh , \mathcal{D}_{mn}\right) \; ,
\end{equation}
with $\mathcal{D}_{mn}$ the full calibration set without the test point $(Y, X)$. While, by construction of $\Chat_{\ell, k}(X)$, the expectation of $\alpha_P(\mathcal{D}_{mn})$ is smaller than $\alpha$, the random variable $\alpha_P(\mathcal{D}_{mn})$ may have a high variance. In particular, it is possible to construct a scenario where $\IP\left(\alpha_P(\mathcal{D}_{mn}) = 1 \right) = \alpha  \text{ and } \IP\left(\alpha_P(\mathcal{D}_{mn}) = 0 \right) = 1-\alpha$ \citep{bian2022training}.

Here, we have $\IE[\alpha_P(\mathcal{D}_{mn})] = \alpha$ but a non-negligible proportion of calibration data sets might result in a poor conditional coverage even though the average coverage is still $1 - \alpha$. In practice, we want to have $\alpha_P(\mathcal{D}_{mn}) \approx \alpha$ with a probability close to~$1$ to avoid this unfavorable scenario.

In the following theorem, we show that it is possible to control the conditional miscoverage of \method.
\begin{theorem}
	\label{them:cond}
In the framework of Theorem~\ref{them:main}, if $\delta \in (0, 0.5]$ and $\ell\cdot k \geq (1-\alpha)\cdot mn$, then the conditional miscoverage rate---defined by Eq.~\eqref{def.alphaP(Dmn)}---is controlled as follows:
	\begin{equation}
	\IP\left( \alpha_P(\mathcal{D}_{mn}) \leq \alpha + \sqrt{\dfrac{\log(1/\delta)}{2mn}} \right) \geq 1-\delta \; .
	\end{equation}
\end{theorem}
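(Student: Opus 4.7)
The plan is to reduce the statement to a one-sided uniform concentration inequality for the empirical c.d.f.\ of the pooled calibration scores. Because $(X,Y)$ is independent of $\mathcal{D}_{mn}$ and $\Qh_{(\ell,k)}$ is $\mathcal{D}_{mn}$-measurable (given $\fh$),
\[
\alpha_P(\mathcal{D}_{mn}) \;=\; \IP\!\big(s(X,Y) > \Qh_{(\ell,k)} \,\big|\, \fh, \mathcal{D}_{mn}\big) \;=\; 1 - F_S\!\big(\Qh_{(\ell,k)}\big),
\]
where $F_S$ denotes the c.d.f.\ of $S = s(X,Y)$ given $\fh$. So it suffices to show that, with probability at least $1-\delta$ over the draw of $\mathcal{D}_{mn}$, $F_S(\Qh_{(\ell,k)}) \geq 1 - \alpha - \sqrt{\log(1/\delta)/(2mn)}$.

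The key deterministic observation is a lower bound on the rank of $\Qh_{(\ell,k)}$ in the pooled sample. By the definition~\eqref{eq:CP-QQ} of the quantile-of-quantiles, at least $k$ agents satisfy $\Qh_{(\ell)}(\mathcal{S}^{(j)}) \leq \Qh_{(\ell,k)}$, and for each such agent at least $\ell$ of its $n$ local scores are $\leq \Qh_{(\ell)}(\mathcal{S}^{(j)}) \leq \Qh_{(\ell,k)}$. Hence at least $k\ell$ of the $mn$ pooled scores are $\leq \Qh_{(\ell,k)}$, so the empirical c.d.f.\ $\widehat{F}_{mn}$ of the pooled sample satisfies
\[
\widehat{F}_{mn}\!\big(\Qh_{(\ell,k)}\big) \;\geq\; \frac{k\ell}{mn} \;\geq\; 1-\alpha,
\]
where the last inequality uses the hypothesis $\ell k \geq (1-\alpha) mn$.

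To transfer this bound from $\widehat{F}_{mn}$ to $F_S$, I apply the one-sided Dvoretzky--Kiefer--Wolfowitz inequality in its Massart form to the i.i.d.\ pooled scores: for every $\epsilon \geq \sqrt{(\log 2)/(2mn)}$,
\[
\IP\!\left(\sup_t \big(\widehat{F}_{mn}(t) - F_S(t)\big) > \epsilon\right) \;\leq\; e^{-2mn\epsilon^2}.
\]
The choice $\epsilon = \sqrt{\log(1/\delta)/(2mn)}$ meets Massart's threshold precisely because $\delta \leq 1/2$ (equivalently, $\log(1/\delta) \geq \log 2$), and the tail is then bounded by $\delta$. On the complementary event, evaluating the uniform bound at $t = \Qh_{(\ell,k)}$ and combining with the rank estimate gives $F_S(\Qh_{(\ell,k)}) \geq 1 - \alpha - \epsilon$, which rearranges to the theorem's claim. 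The only nontrivial step is the rank lower bound in the second paragraph: it is where the specific structure of the quantile-of-quantiles is used, and it is exactly what makes the product hypothesis $\ell k \geq (1-\alpha) mn$ natural; everything else---the conditioning reduction, the DKW--Massart invocation, and the explanation of the $\delta \leq 1/2$ restriction---is routine.
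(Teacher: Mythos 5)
Your proof is correct, and it rests on the same two ingredients as the paper's: the almost-sure rank lower bound (at least $\ell k$ of the $mn$ pooled scores lie below $\Qh_{(\ell,k)}$, which is exactly where the hypothesis $\ell k \geq (1-\alpha)mn$ enters) and the one-sided DKW inequality with Massart's constant, whose validity threshold is precisely the reason for the restriction $\delta \leq 1/2$. The difference is in the middle step: the paper follows the Bian--Barber template, bounding $\alpha_P(\mathcal{D}_{mn})$ by the conditional probability of an event on the test score $S$ relative to the right-tail empirical c.d.f., and then invoking that $\bar{F}_S(S)$ is a valid (super-uniform) p-value; you instead use the identity $\alpha_P(\mathcal{D}_{mn}) = 1 - F_S\bigl(\Qh_{(\ell,k)}\bigr)$, legitimate because $\Qh_{(\ell,k)}$ is $\mathcal{D}_{mn}$-measurable and $(X,Y)$ is independent of $\mathcal{D}_{mn}$ given $\fh$, and evaluate the uniform DKW bound at the data-dependent point $\Qh_{(\ell,k)}$. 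Your route is slightly more direct, as it dispenses with the p-value lemma and with the manipulation of right-tail empirical c.d.f.s; the paper's phrasing stays closer to the centralized training-conditional analysis it cites, which makes the comparison with that literature more transparent, but the two arguments are otherwise equivalent in strength and in the conditions they require.
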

Theorem~\ref{them:cond} is proved in Appendix~\ref{app.pr.them:cond}. 
It states that the probability that a particular data set results in a conditional miscoverage rate much higher than $\alpha$ vanishes with the number of data points used for calibration. A similar bound is obtained in the centralized setting \citep{vovk2012conditional, bian2022training} for the split method. However, note that Theorem~\ref{them:cond} holds only for couples $(\ell, k)$ verifying a condition not necessarily verified by the couple $(\ell^*, k^*)$ used by \method. Nevertheless, our experiments suggest that this could still be true for $(\ell^*, k^*)$, up to a slight modification of the bound. However, similarly to the upper bound on the probability of coverage (see Section~\ref{sec:upper}), the proof of this statement is difficult because it requires to study the rank of $\Qh_{(\ell, k)}$ in the full data set which, contrary to the centralized case, is a random variable. In the proof of Theorem~\ref{them:cond}, we rely on an almost sure lower bound for this rank, which is conservative and negatively impacts the final result. In the centralized case, the rank is almost surely fixed and this greatly simplifies the theoretical analysis.
 
\subsection{Impact of Heterogeneous Data}
\label{sec:hetero}
An important challenge in FL is to deal with data heterogeneity across agents \citep{li2020federated,kairouz2021advances,bars2022refined}. This heterogeneity can yield different distributions of scores across agents and thus affects the coverage of the set returned by CP methods. To better understand these effects, we no longer assume that all the variables are drawn from the same distribution. Instead, we only suppose that the local data points of agent $j$ are drawn i.i.d. from an \emph{agent-specific distribution} with a test point also drawn from a potentially different distribution.

As we do not have any information on the underlying distributions of the scores, we study how data heterogeneity affects the coverage of the set returned by \method, i.e., we quantify how much we lose in coverage if we apply the same strategy as in the i.i.d. case. Intuitively, the more the distributions of the scores $\{S^{(j)}\}_j$ are similar and close to the one of $S$, the less we lose in coverage. This is made precise in the following result.

\begin{proposition}
\label{prop:hetero}
Assume that the calibration data $\{(X^{(j)}_i, Y^{(j)}_i)\}_{i,j=1}^{m,n}$ and the test point $(X, Y)$ are such that, given $\fh$, the corresponding scores $\{ S_i^{(j)}\}_{i,j=1}^{n,m} , S$ are independent, and that for every $j \in \llbracket 1, m \rrbracket$, $\{ S_i^{(j)}\}_{i=1}^n$ are i.i.d. 
Let $\{\widetilde{S}^{(j)}_i\}^{n, m}_{i, j=1}, \widetilde{S}$ be i.i.d. random variables (given $\fh$). Define, for every $j \in \llbracket 1, m \rrbracket$, $p_j^*(S) = \IP(S^{(j)}_{(\ell^\star)} \leq S \vert S )$ and $\tilde{p}^*(\widetilde{S}) = \IP(\widetilde{S}^{(1)}_{(\ell^\star)} \leq \widetilde{S} \vert \widetilde{S})$. 
Then, we have 
\begin{align*}
&\IP\big(Y \in \Chat_{\ell^\star, k^\star}(X) \big) 
\geq 1-\alpha 
\\
&\qquad - \IE \biggl[ \dTV \Bigl( 
\PoisBin \bigl( p^*(S) \bigr) 
\, , \, 
\Bin\bigl( m , \tilde{p}^*(\widetilde{S}) \bigr) 
\Bigr)  \biggr] 
\, ,
\end{align*}
where $\dTV (\cdot, \cdot)$ is the total-variation (TV) distance, 
$\PoisBin$ the Poisson-Binomial distribution and $\Bin$ the binomial distribution.
\end{proposition}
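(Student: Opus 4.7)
The plan is to rewrite the coverage as a tail probability of a sum of conditionally independent Bernoullis---Poisson-Binomial in the heterogeneous case and Binomial in the i.i.d. reference case---and to absorb the gap between the two distributions into the expected total variation distance. Setting $V_j := S^{(j)}_{(\ell^\star)}$, the event $\{S \leq V_{(k^\star)}\}$ is equivalent to $\{\#\{j : V_j < S\} \leq k^\star - 1\}$, so using $\OneSm{V_j < S} \leq \OneSm{V_j \leq S}$ pointwise I would lower bound
\begin{align*}
\IP(Y \in \Chat_{\ell^\star, k^\star}(X)) \geq \IP(N \leq k^\star - 1) , \quad N := \sum_{j=1}^m \OneSm{V_j \leq S} ,
\end{align*}
with equality when the scores have a continuous c.d.f. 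By the mutual independence of $\{S_i^{(j)}\}_{i,j}$ and $S$ given $\fh$, the $V_j$'s remain independent given $S$, with $\IP(V_j \leq S \mid S) = p_j^*(S)$; hence $N \mid S \sim \PoisBin(p^*(S))$.

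Second, I would repeat the construction on the i.i.d. reference sample $\{\widetilde{S}^{(j)}_i\}_{i,j}, \widetilde{S}$: setting $\widetilde{N} := \sum_j \OneSm{\widetilde{S}^{(j)}_{(\ell^\star)} \leq \widetilde{S}}$, the i.i.d. assumption gives $\widetilde{N} \mid \widetilde{S} \sim \Bin(m, \tilde{p}^*(\widetilde{S}))$. Since one is free to choose the distribution of the reference sample (only its i.i.d. structure and marginal form enter the bound), one may take the $\widetilde{S}$'s to be continuous; then the reduction becomes an equality and Theorem~\ref{them:main} applied to the reference sample yields $\IP(\widetilde{N} \leq k^\star - 1) = M_{\ell^\star, k^\star} \geq 1 - \alpha$, the last inequality coming from the definition of $(\ell^\star, k^\star)$ in Eq.~\eqref{eq:lk_star}.

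Third, I would realize $S$ and $\widetilde{S}$ on a common probability space---any joint law works, the bound being tightest for the best coupling---and combine the triangle inequality with the defining bound $|\mu(A) - \nu(A)| \leq \dTV(\mu, \nu)$:
\begin{align*}
\IP(N \leq k^\star - 1)
&\geq \IP(\widetilde{N} \leq k^\star - 1) - \IE \bigl[ \bigl| \IP(N \leq k^\star - 1 \mid S) - \IP(\widetilde{N} \leq k^\star - 1 \mid \widetilde{S}) \bigr| \bigr] \\
&\geq \IP(\widetilde{N} \leq k^\star - 1) - \IE \bigl[ \dTV \bigl( \PoisBin(p^*(S)) , \Bin(m, \tilde{p}^*(\widetilde{S})) \bigr) \bigr] .
\end{align*}
Chaining the two displays with the lower bound $\IP(\widetilde{N} \leq k^\star - 1) \geq 1-\alpha$ obtained in the second step produces the proposition.

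The one subtlety is the bookkeeping of strict versus non-strict inequalities when the score c.d.f. is not continuous: the reduction in the first step must be arranged so that replacing $<$ by $\leq$ inside the indicator yields a \emph{lower} bound on the coverage---which is precisely the direction we need. Under continuous scores this obstacle disappears, every inequality becomes an equality, and the argument collapses to the triangle-inequality estimate above combined with Theorem~\ref{them:main}.
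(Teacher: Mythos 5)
Your proof is correct and takes essentially the same route as the paper's: reduce the coverage to the tail of a count that is conditionally Poisson-Binomial given $S$, compare it with the conditionally Binomial count built from the i.i.d. reference sample via the total-variation distance inside an expectation over a common probability space, and invoke Theorem~\ref{them:main} together with the definition of $(\ell^\star,k^\star)$ in Eq.~\eqref{eq:lk_star} to produce the $1-\alpha$ term. The only divergence is minor: the paper bounds the reference miscoverage $\IP\bigl(\Qh_{(\ell^\star,k^\star)}(\widetilde{\calS}^{(1)},\ldots,\widetilde{\calS}^{(m)}) < \widetilde{S}\bigr) \leq 1-M_{\ell^\star,k^\star}$ directly from the inequality part of Theorem~\ref{them:main} instead of choosing a continuous reference distribution, while your explicit bookkeeping of strict versus non-strict indicators (so that the Bernoulli parameters match $p_j^*$ and $\tilde p^*$ as defined) is a sound, and arguably more careful, treatment of a point the paper's proof passes over.
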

Proposition~\ref{prop:hetero} is proved in Appendix~\ref{app.pr.prop:hetero}. The general idea of this result is that when variables are i.i.d., probabilities on order statistics only depend on the c.d.f. of a certain binomial distribution, whereas when the variables are independent but with different distributions, the binomial needs to be replaced by a Poisson-Binomial distribution. The inequality indicates that, in the heterogeneous case, the coverage is reduced by the TV distance between the two distributions. We note that this distance can be upper bounded in specific cases (see Appendix~\ref{app.pr.prop:hetero}) and that it is equal to $0$ when all the data are i.i.d and $S = \widetilde{S}$. We leave to future work the precise characterization of cases where the TV distance is negligible in front of $1-\alpha$.

\section{Differentially Private \method}
\label{sec:privacy}


While FL methods are often informally claimed to mitigate privacy issues, they still leak information about the local data sets during the execution of the algorithm. In the case of \method, it is easy to see how revealing a particular quantile of the local score distribution may leak sensitive information. In this section, we propose a privacy-preserving version of \method~based on Differential Privacy (DP) \cite{dwork2014algorithmic}, a mathematical notion of privacy that essentially requires that the output distribution of a randomized algorithm is not too sensitive to a small modification of the input data set. In particular, we consider the strong \emph{Local DP} (LDP) model where agents do not trust the central server and must locally privatize the messages they send.

Formally, for any $\varepsilon>0$, a randomized algorithm $\calA$ is said to be $\varepsilon$-LDP if for any two local data sets $\calS$ and $\calS'$ that differ in a single data point (we call them \emph{neighboring}), and any set of possible outputs $O$, we have:
\begin{equation}
\label{eq:dp}
\IP \bigl( \calA(\calS)\in O \bigr) \leq \exp{(\varepsilon)}\IP \bigl( \calA(\calS')\in O \bigr) \;.
\end{equation}
A smaller $\varepsilon$ therefore yields a better privacy. In our specific framework, $\calS$ and $\calS'$ correspond to two neighboring calibration data sets of an agent $j$ and $\calA(\calS)$ to the information sent by $j$ to the central server.

Our approach builds upon the (centralized) differentially private quantile mechanism recently introduced by \citet{Angelopoulos_2022} and summarized in Algorithm \ref{alg:DP-ordered}. The main idea is to apply the exponential mechanism \cite{mcsherry2007mechanism} to a discretization of the scores into bins and with an appropriate choice of utility function. It requires to fix a number of bins $B \in \mathbb{N}$, an upper bound on the scores $S_{\max}$ and a set of points $0 = e_0 < e_1 < \cdots < e_{B-1} < e_B = S_{\max}$ defining the bins $I_b=(e_{b-1},e_b]$. Algorithm \ref{alg:DP-ordered} is $\varepsilon$-DP by a direct application of the exponential mechanism with utility function $w_b$ and sensitivity $\Delta_q$.
\begin{algorithm}[t]
    \small
        \caption{Differentially Private Quantile}
        \begin{algorithmic}
            \State\textbf{Input:} Scores $(S_1,\ldots,S_n) \in \IR^n$, quantile $q \in (0,1)$, privacy level $\varepsilon > 0$, bins $\{I_1, \ldots, I_B\}$
            
                \For{$i=1, \ldots, n$}
                    \State Compute the discretized score $\bar{S}_i = e_b$
                    such that $S_i \in I_b$
                \EndFor
                \For{$b=1, \ldots, B$}
                    \State Compute the weight $w_b=\max\Big\{\frac{|\{i : 
                    \bar{S}_i < e_b\}|}{q}, \frac{|\{i : \bar{S}_i > e_b\}|}{1-q}\Big\}$
                \EndFor
                    \State $\Delta_q \longleftarrow \max\{\frac{1}{q}, \frac{1}{1-q}\}$
                    \State\textbf{Output:} Bin $e_b$ with probability
                    $e^{-\frac{\varepsilon w_b}{2\Delta_q}}/\sum_{b'=1}^Be^{-
                    \frac{\varepsilon w_{b'}}{2\Delta_q}}$ 
            \end{algorithmic}
     \label{alg:DP-ordered}
\end{algorithm}
\begin{algorithm}[t]
    \small
        \caption{\methodDP}
        \begin{algorithmic}
            \State\textbf{Input:} Local scores $\{\mathcal{S}^{(j)}\}_{j=1}^m$, miscoverage level $\alpha$, $M$ (see Equation~\eqref{eq:main_equa}), privacy level $\varepsilon > 0$, bins $\{I_1, \ldots, I_B\}$, $\gamma \in (0,1)$ \vspace{.5em}
            \State The server finds $(\ell_\gamma,k_\gamma)$ as in
            \method~(Algorithm \ref{alg:CP-QQ}) with coverage level $\frac{1-\alpha}{1 - \gamma\alpha}$
            
            \State $q \longleftarrow \max\Big\{\frac{\ell_\gamma + \ell_{\text{cor}}}{n},\frac{1}{2}\Big\}$ with $\ell_{\text{cor}}$ from Eq. \eqref{eq:l-comp}
            \For{$j=1, \ldots, m$}
            \State Agent $j$ sends $\widehat{Q}_{j}^{\varepsilon}$, the output of Alg. \ref{alg:DP-ordered} with $\mathcal{S}^{(j)}$, to the server. \looseness=-1
            \EndFor
            \State\textbf{Output:} The server orders $\widehat{Q}_{1}^{\varepsilon}, \ldots, \widehat{Q}_{m}^{\varepsilon}$ and outputs the $k_\gamma$-th value denoted $\widehat{Q}_{(k_\gamma)}^{\varepsilon}$.
            \end{algorithmic}
     \label{alg:CPP-QQ}
\end{algorithm}
\paragraph{\methodDP~ method.}
Our private algorithm, called \textit{Federated Conformal Private Prediction} (FedCP$^2$)-QQ, is an extension of \method~(Algorithm \ref{alg:CP-QQ}) with two key modifications: (i) exact local quantile computations are replaced by calls to DP Quantile (Algorithm~\ref{alg:DP-ordered}), and (ii) the orders of client and server-level quantiles are adjusted to guarantee the desired coverage. More precisely, if the central server asks for the $\ell$-th smallest score of each agent, then the agents use Algorithm~\ref{alg:DP-ordered} to return a randomized bin around the true quantile $\Qh_{(\ell)}(\calS^{(j)})$. To achieve the desired coverage $1-\alpha$ despite the randomness due to privacy, the server computes $(\ell_\gamma, k_\gamma)$ such that $\IP(S \leq \Qh_{(\ell_\gamma, k_\gamma)})$ is above but close to $\frac{1-\alpha}{1-\gamma \alpha}$, where $\gamma \in (0,1)$ is a free parameter. Because the agents might return bins smaller than the one of the requested $\ell_\gamma$-th score, the central server further compensates by asking agents for their $(\ell_\gamma + \ell_{\text{cor}})$-th smallest score with
\begin{equation}
    \label{eq:l-comp}
    \ell_{\text{cor}} = \left\lceil \frac{2}{\varepsilon}\log{ \left( \frac{B}{1-(1-\gamma \alpha)^\frac{1}{m}} \right) } \right\rceil \; .
\end{equation}
Note that the smaller the privacy parameter $\varepsilon$ (more privacy), the bigger the correction $ \ell_{\text{cor}}$. At first sight, one could think that $B$ should be taken small to reduce the correction. In practice, it should also be taken sufficiently large to avoid aggressive rounding that could lead to a large final prediction set. We refer to \citet[Section~4.2]{Angelopoulos_2022} for an in-depth discussion on the selection of the number of bins $B$. The following theorem ensures that Algorithm~\ref{alg:CPP-QQ} preserves privacy and allows to construct prediction sets that satisfy the desired coverage. The proof is given in Appendix~\ref{app:privacy-thme}.

\begin{theorem}
\label{thme:private}
For any $\varepsilon>0$, Algorithm \ref{alg:CPP-QQ} satisfies $\varepsilon$-LDP. Moreover, denoting $\Chat_\varepsilon(X) = \{y \in \IR : s(X, y) \leq \widehat{Q}^\varepsilon \} $ with $\widehat{Q}^\varepsilon$ the output of the algorithm, we have 
\[ 
\IP \bigl( Y \in \Chat_\varepsilon(X) \bigr) \geq 1- \alpha 
\; . 
\]
\end{theorem}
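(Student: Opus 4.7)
The proof has two parts, one for each claim of the theorem.

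For the \textbf{$\varepsilon$-LDP} claim, each agent $j$ only transmits the single message $\widehat{Q}^{\varepsilon}_j$, which is the output of Algorithm~\ref{alg:DP-ordered} applied to $\mathcal{S}^{(j)}$; the subsequent ordering and selection of the $k_\gamma$-th value at the server is pure post-processing and therefore preserves LDP. It thus suffices to check that Algorithm~\ref{alg:DP-ordered} is itself $\varepsilon$-DP. This is a direct instance of the exponential mechanism with utility $-w_b$ and sensitivity $\Delta_q = \max\{1/q, 1/(1-q)\}$: replacing one $S_i$ changes each of $|\{i : \bar S_i < e_b\}|$ and $|\{i : \bar S_i > e_b\}|$ by at most one, so that $w_b$ changes by at most $\Delta_q$.

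For the \textbf{coverage} claim, the plan is to introduce the ``good'' event
\[ G \triangleq \bigcap_{j=1}^{m} \Bigl\{ \widehat{Q}^{\varepsilon}_j \geq \Qh_{(\ell_\gamma)}(\mathcal{S}^{(j)}) \Bigr\}. \]
On $G$, monotonicity of order statistics gives $\widehat{Q}^{\varepsilon}_{(k_\gamma)} \geq \Qh_{(\ell_\gamma, k_\gamma)}$, hence $\Chat_{\ell_\gamma, k_\gamma}(X) \subseteq \Chat_\varepsilon(X)$. The internal randomness of Algorithm~\ref{alg:DP-ordered} is independent of the test point $(X,Y)$ given the calibration data $\mathcal{D}_{mn}$, so $G$ and $\{Y \in \Chat_{\ell_\gamma, k_\gamma}(X)\}$ are conditionally independent given $\mathcal{D}_{mn}$. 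Combining this factorization with Theorem~\ref{them:main} applied at the inflated level $(1-\alpha)/(1-\gamma\alpha)$ (which is achieved by $(\ell_\gamma, k_\gamma)$ by construction in Algorithm~\ref{alg:CPP-QQ}) yields
\[ \IP\bigl(Y \in \Chat_\varepsilon(X)\bigr) \geq \IE\bigl[\IP(G \mid \mathcal{D}_{mn})\bigr] \cdot \frac{1-\alpha}{1-\gamma\alpha}, \]
so it suffices to prove $\IP(G \mid \mathcal{D}_{mn}) \geq 1-\gamma\alpha$ almost surely, which delivers the desired lower bound $1-\alpha$.

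To bound $\IP(G \mid \mathcal{D}_{mn})$, I analyze Algorithm~\ref{alg:DP-ordered} for a single agent. Any bin $e_b$ violating $G_j$ must satisfy $|\{i : \bar S^{(j)}_i > e_b\}| \geq n - \ell_\gamma + 1$, giving $w_b \geq (n - \ell_\gamma + 1)/(1-q)$, whereas any bin straddling the exact $nq$-th order statistic has $w_b \leq n$. Since $q \geq 1/2$ we have $\Delta_q = 1/(1-q)$, and a short algebraic step using $1-q = (n-\ell_\gamma-\ell_{\text{cor}})/n$ shows that the exponential-mechanism probability ratio between any bad bin and this reference bin is at most $\exp(-\varepsilon(\ell_{\text{cor}}+1)/2)$. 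Union-bounding over the at most $B$ bad bins and plugging in the choice of $\ell_{\text{cor}}$ from Eq.~\eqref{eq:l-comp} yields $\IP(\overline{G_j} \mid \mathcal{D}_{mn}) \leq 1-(1-\gamma\alpha)^{1/m}$, and independence of the DP randomness across agents upgrades this to $\IP(G \mid \mathcal{D}_{mn}) \geq 1-\gamma\alpha$.

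The main obstacle I expect is the weight-ratio computation, particularly tracking three technicalities: (i) the discretization $\bar S_i \in (e_{b-1}, e_b]$ rounds scores upward, so the ``good'' event on $\bar S$ implies the analogous event on the raw $S$ (needed to justify $\Chat_{\ell_\gamma, k_\gamma}(X) \subseteq \Chat_\varepsilon(X)$); (ii) one must actually exhibit a reference bin with $w_b \leq n$ for every data set; and (iii) the somewhat convoluted expression for $\ell_{\text{cor}}$ must be verified to exactly absorb both the factor $B$ from the union bound and the $m$-fold product over independent agents. The remainder is an assembly of standard ingredients: the exponential mechanism for privacy, post-processing for composition, and Theorem~\ref{them:main} for the ``inflated'' non-private coverage.
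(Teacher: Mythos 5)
Your proposal is correct and follows essentially the same route as the paper: privacy via a single call to the exponential mechanism plus post-processing, and coverage via the event $\bigcap_{j=1}^{m}\{\widehat{Q}^{\varepsilon}_j \geq \Qh_{(\ell_\gamma)}(\mathcal{S}^{(j)})\}$, the factorization conditionally on the calibration data, Theorem~\ref{them:main} at the inflated level $\frac{1-\alpha}{1-\gamma\alpha}$, and a per-agent bound $(1-\gamma\alpha)^{1/m}$ combined across agents by independence of the local privacy noise. The only (inessential) difference is that you re-derive the exponential-mechanism utility estimate inline (bad bins have weight at least $(n-\ell_\gamma+1)/(1-q)$ against a reference bin of weight at most $n$, then a union bound over $B$ bins absorbed by $\ell_{\mathrm{cor}}$), whereas the paper simply invokes the utility result of \citet{Angelopoulos_2022} restated as Lemma~\ref{lemma:utility-dp}; your computation is exactly what that lemma encapsulates, and your stated caveats (upward rounding of $\bar S_i$, existence of the reference bin, the role of $\ell_{\mathrm{cor}}$) are precisely the points the paper's proof handles.
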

\looseness=-1
\textbf{Choosing $\gamma$.}
Intuitively, in order to be equivalent to the non-private \method, $\gamma$ should tend to $0$ and the privacy parameter $\varepsilon$ should tend to infinity. To select $\gamma$ automatically for any given $\varepsilon>0$, we propose a grid-search strategy. We look for the $\gamma$ that brings the smallest amount of correction, which we evaluate using the pre-computed table $M$. More precisely, for a given $\gamma$, we evaluate $M_{\ell_\gamma + \ell_{\text{cor}}, k_\gamma}$ which is the coverage obtained by the non-private \method~estimator $\Qh_{(\ell_\gamma + \ell_{\text{cor}}, k_\gamma)}$. Note that this coverage is not the one of our private estimator since each agent might return a score smaller than the $(\ell_\gamma + \ell_{\text{cor}})$-th smallest. To find the best $\gamma$, we look at the one that brings the smaller coverage $M_{\ell_\gamma + \ell_{\text{cor}}, k_\gamma}$ over the grid. To gain more intuition on the degree of correction brought by the additional randomness of the private setting, we represent in Figure \ref{fig:private_l} the quantity $M_{\ell_\gamma + \ell_{\text{cor}}, k_\gamma}$ found for the best $\gamma$ and for different values of $n$ and $\varepsilon$. This plot shows how fast the correction decreases as $n$ and $\varepsilon$ increase.

\begin{figure}[t]
    \centering
    \includegraphics[width=1.\linewidth]{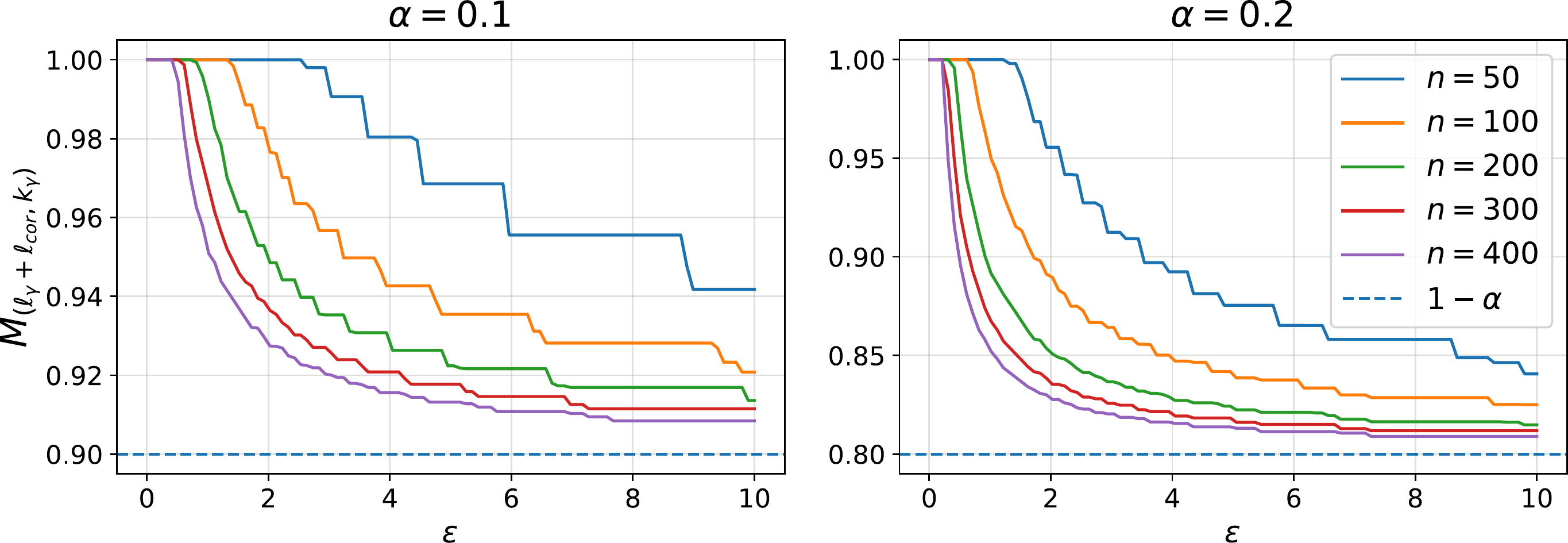}
    \vspace{-2em}
    \caption{Degree of compensation $M_{\ell_\gamma + \ell_{\text{cor}}, k_{\gamma}}$ for different values of $\alpha$, $n$ and $\varepsilon$ when $m=10$. We clearly observe that $M_{\ell_\gamma + \ell_{\text{cor}}, k_{\gamma}}$ tends to the desired coverage $1-\alpha$ (dashed lines) as $n$ and $\varepsilon$ tends to $+\infty$, which means that the compensation vanishes. }
    \label{fig:private_l}
    \vspace{-1em}
\end{figure}

\textbf{Privacy amplification by shuffling or aggregation.} To achieve better privacy-utility trade-offs, it is common in FL to relax the LDP model and instead assume that the agents' messages are sent to a secure computation function whose output is received by the central server. This is sometimes referred to as \emph{Distributed DP} \citep{kairouz2021advances}. Two standard secure computation primitives are compatible with \methodDP: secure shuffling \citep{shuffling} and secure aggregation \cite{bonawitz2017practical}. Secure shuffling outputs a random permutation of the messages, which still allows the server to compute the desired quantile. For secure aggregation (which outputs the sum of the messages), each agent can encode its private quantile as a one-hot vector of size $B$ indicating the corresponding bin. The sum of these vectors is then sufficient for the server to find the bin corresponding to the $k_{\gamma}$-th smallest score. In both cases, $\varepsilon$ is reduced by a factor of $\calO(1/\sqrt{m})$. In other words, if one of the previous privacy amplification schemes is used, we can replace $\varepsilon$ by $\varepsilon \sqrt{m}$ (up to a constant) and therefore reduce the correction $\ell_{\mathrm{cor}}$ by a factor $\calO(\sqrt{m})$, while still satisfying the same privacy guarantees. Detailed privacy amplification formulas are provided by \citet{shuffling} and \citet{amp_by_aggreg}.

\begin{remark}
\label{rem:privacy-for-all}
\methodDP~provides privacy guarantees with respect to the calibration data. To provide privacy guarantees with respect to the data used to train the model, one should train the model using locally differentially private algorithms \citep[see e.g.][]{dp_fed_sgd_user_level,dp_fedavg_user_level,Noble2022a}. Note that the training and calibration data sets are disjoint, and that \methodDP~only post-processes the private model to compute the calibration scores. Therefore, if model training satisfies $\varepsilon_1$-LDP and \methodDP~satisfies $\varepsilon_2$-LDP, the full pipeline satisfies $\max (\varepsilon_1,\varepsilon_2)$-LDP thanks to parallel composition.
\end{remark}

\section{Experiments}
\label{sec:xps}


In this section, we evaluate \method~on synthetic and real regression data sets. Additional experiments on unbalanced data sets and on \methodDP~are presented in Appendices \ref{sec:diff_n} and~\ref{sec:private_xp}. The code of our two methods is available at \url{https://github.com/pierreHmbt/FedCP-QQ}.

Depending on the experiments, we use the split CP method presented in Section \ref{sec:background} or its popular variant Conformalized Quantile Regression (CQR) \citep{romano2019conformalized}, which is directly compatible with our approach. For split CP, $\fh$ is a standard regressor, the score function $s$ is $s(X, Y) = \lvert Y - \fh(X) \rvert$, and the resulting prediction set is an interval of constant length $[\fh(X) \pm \hat{q}]$. In CQR, $\fh$ is replaced by a couple $(\fh_{\alpha/2}, \fh_{1-\alpha/2})$ where $\fh_{\beta}$ is a quantile regressor of order $\beta$ \citep{koenker1978regression} and $s(X, Y)= \max(\fh_{\alpha/2}(X)-Y, Y - \fh_{1-\alpha/2}(X))$. In contrast to split CP, CQR returns sets of the form $[\fh_{\alpha/2}(X) - \hat{q}, \fh_{1-\alpha/2}(X) + \hat{q}]$ which have a size adaptive to heteroscedasticity.

For both split CP and CQR, we use \method~to find the value of $\widehat{q}$ (calibration step). We compare it with the centralized baseline (Equation \ref{def:Qk}) and \methodAvg, the federated approach proposed by~\citet{lu2021distribution}. Recall that the latter simply averages the $m$ quantiles of order $\lceil (n+1)(1-\alpha) \rceil/n$ sent by the agents  (see Section \ref{sec:back-fl}).
\begin{figure}[t]
    \centering
    \includegraphics[width=.73\linewidth]{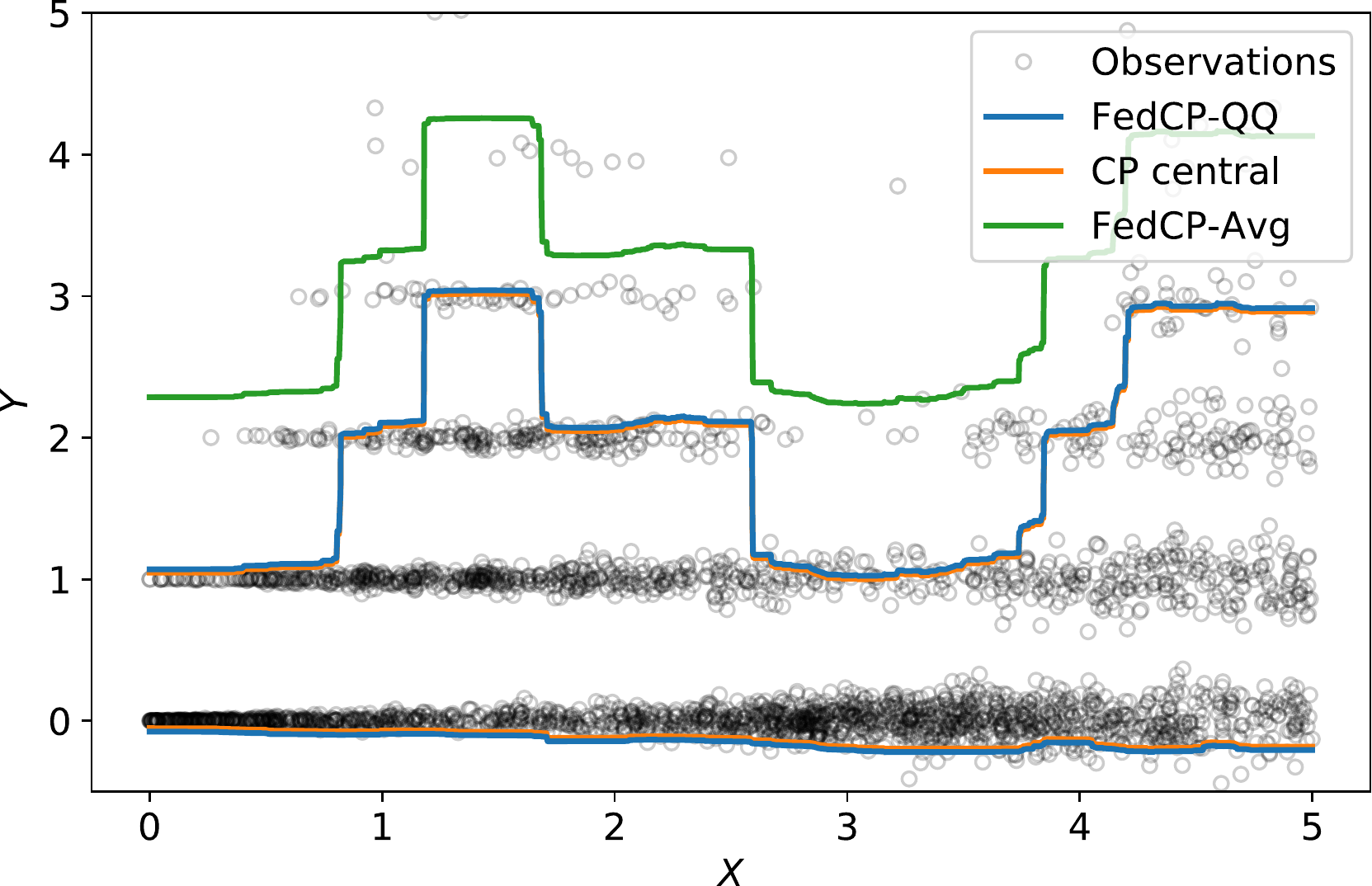}
    \vspace{-1em}
    \caption{Prediction intervals on simulated data with \method~(ours), centralized, and \methodAvg~calibrations. The lower bound of the set returned by \methodAvg~is beyond the figure.}
    \label{fig:synthetic_xp_CQR_QQ}
    \vspace{-1em}
\end{figure}
\begin{figure*}[t]
    \centering
    \includegraphics[width=.495\linewidth]{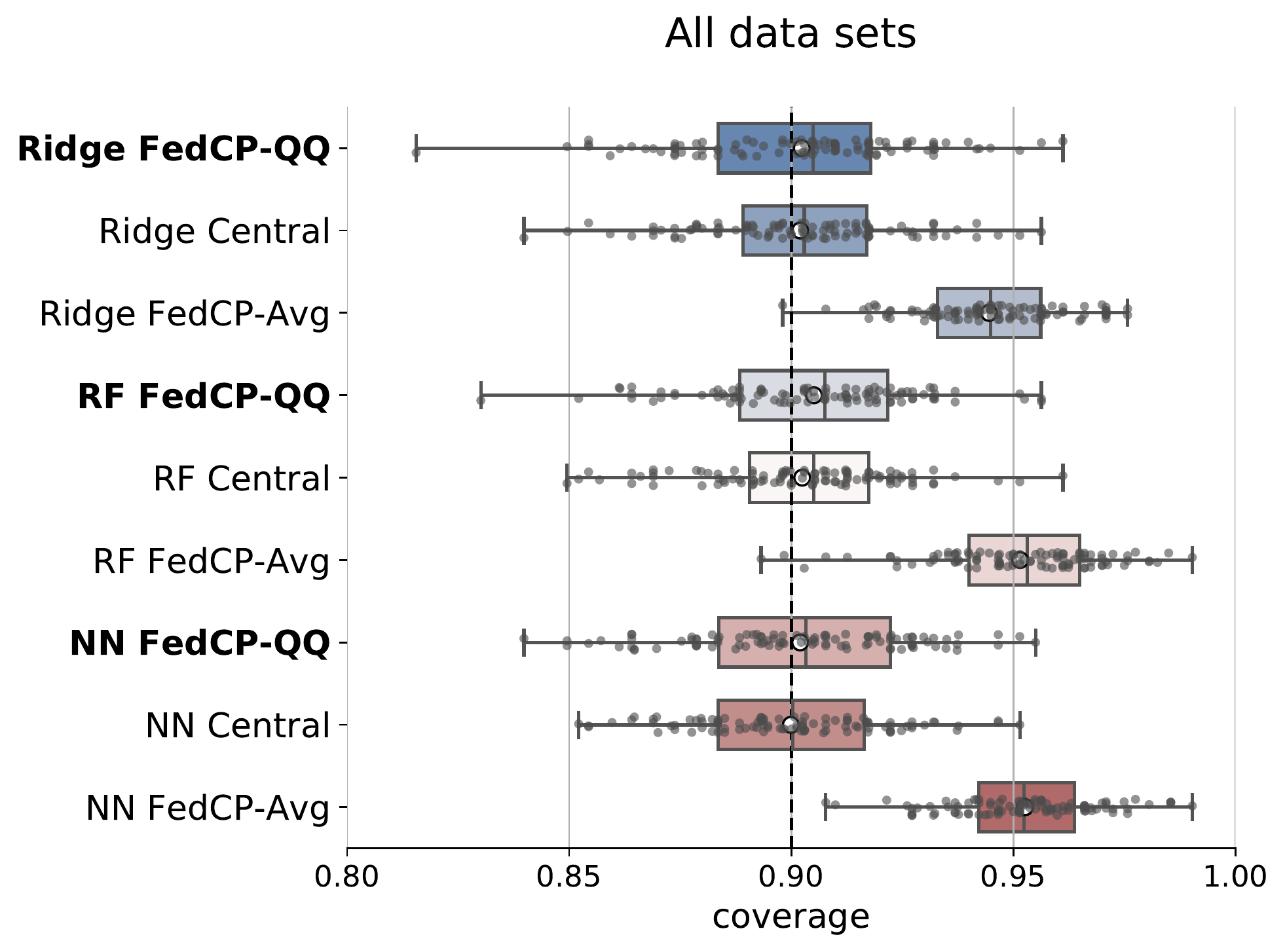}
    \includegraphics[width=.495\linewidth]{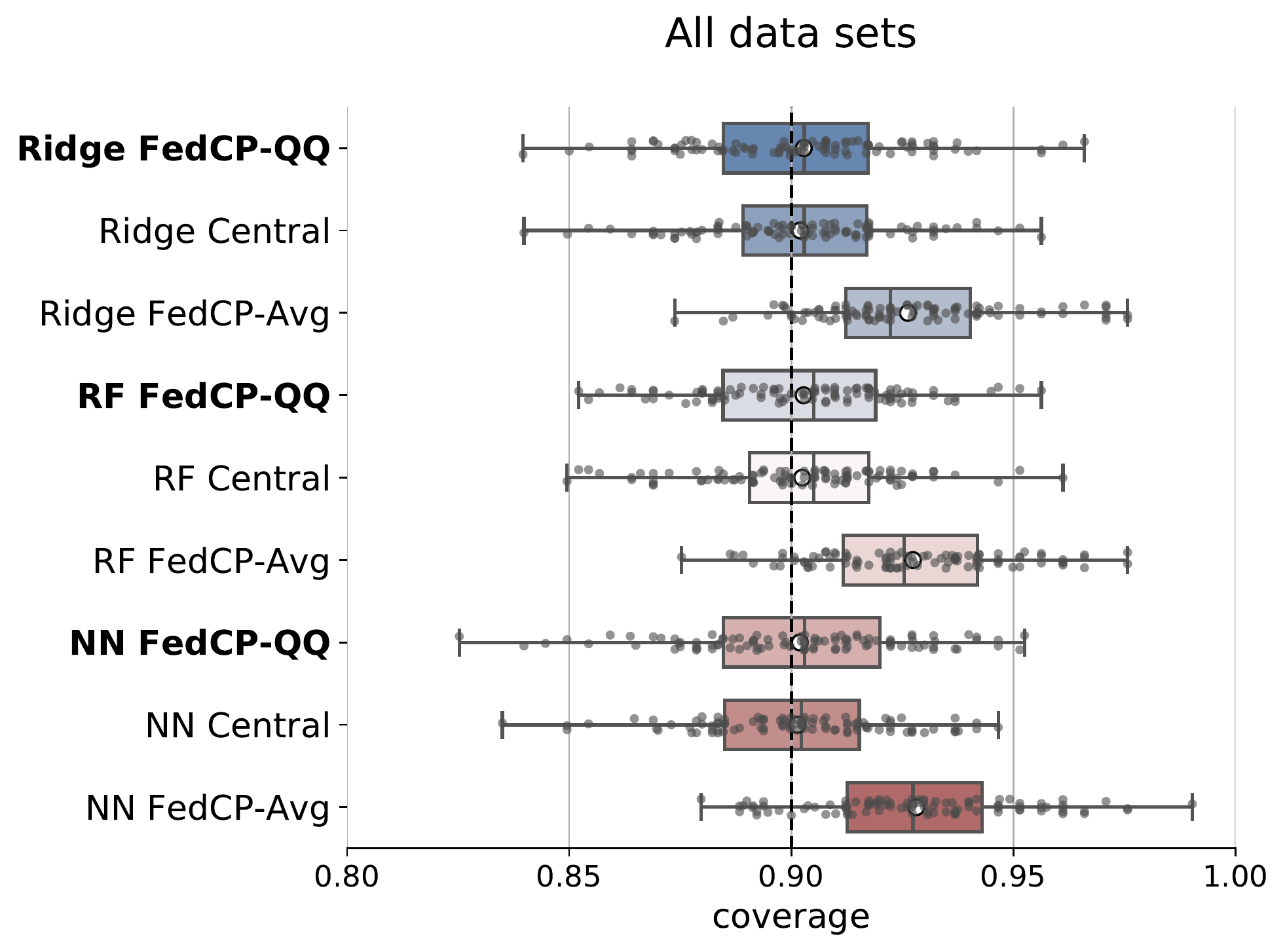}
    \vspace{-1em}
    \caption{Empirical coverages of prediction intervals ($\alpha= 0.1$) constructed by various methods. On the left, when $m \gg n$. On the right, when $m \ll n$. Our method \method~is shown in bold font. The white circle represents the mean.}
    \label{fig:real_xp_all}
\end{figure*}
\subsection{Synthetic Data} \label{sec.simus.synthetic}
\textbf{Data set.}
We draw $2000$ independent, univariate random variables $X_i$ from a uniform distribution on  $[1, 5]$. Following \citet{romano2019conformalized}, the response variable is sampled as
\begin{align*}
    Y_i \, \vert \, X_i \sim \text{Pois}(\sin^2(X_i) + 0.1) & + 0.03 \cdot X_i \varepsilon_{1, i} \\
    & + 25 \cdot\One{U_i < 0.01}\varepsilon_{2, i} \; ,
\end{align*}
where $\text{Pois}(\lambda)$ is the Poisson distribution with mean $\lambda$, $\varepsilon_{1,i}$ and $\varepsilon_{2,i}$ are i.i.d. standard Gaussian variables, and $U_i$ is uniform on the interval $[0, 1]$. Note that the last term of the equation can generate outliers. Then, we split the data set into two disjoint subsets: one for training and one for calibration. To simulate a FL scenario, the calibration set is divided into $m=50$ disjoint subsets of size $n=20$. Finally, we generate a test set of size $5000$ with the same properties.

\looseness=-1 We construct the prediction sets using the CQR approach where the estimation of the (quantile) regression function is made with quantile regression forests \citep{meinshausen2006quantile}. The number of trees in the forest is set to $1000$, the two parameters controlling the coverage rate on the training data are tuned using cross-validation and the remaining hyperparameters are set as done by \citet{romano2019conformalized}.

\textbf{Results.} Figure \ref{fig:synthetic_xp_CQR_QQ} illustrates the performance of the different methods when $\alpha=0.1$. We see that the set returned by \method~when the data are decentralized is almost identical to the one obtained when the data are centralized. This is not the case for \methodAvg~which outputs a larger set. This may be due to the presence of outliers in the data and because the mean (the server aggregation strategy for \methodAvg) is not robust. On the contrary, by using a quantile function to aggregate the agents' quantiles, \method~is robust to outliers and produces smaller yet valid sets. In the next subsection, we show that the same behavior is observed on real data sets.

\subsection{Real Data}
\label{sec:xps:real}

\textbf{Data sets.} We evaluate our method on five public-domain regression data sets also considered by \citet{romano2019conformalized} and \citet{sesia2021conformal}: physicochemical properties of protein tertiary structure (bio) \citep{rana2013physicochemical}; bike sharing (bike) \cite{bikeshare}; communities and crimes (community) \citep{2011crime}; Tennessee’s student teacher achievement ratio (star) \citep{achilles2008tennessee}; and concrete compressive strength (concrete) \citep{yeh1998modeling}.

In this section, we use (i) split-CP with ridge regression---the regularization parameter is tuned by cross-validation; (ii) CQR with quantile Regression Forests (RF)---the hyper-parameters are the ones used in Section~\ref{sec.simus.synthetic}; and (iii) CQR with Neural Networks (NN) for quantile regression \citep{taylor2000quantile}---the architecture and the parameters are those used by \citet{romano2019conformalized}.

The prediction sets, with a miscoverage rate fixed to $\alpha=0.1$, are either calibrated with CP in the centralized setting or in a FL setting using \method~and \methodAvg. For each experiment, we split the full data set into three parts: a training set ($40 \%$), a calibration set ($40 \%$), and a test set ($20\%$). To simulate a FL scenario, we also split the calibration set in $m$ disjoint subsets of equal size $n$. We consider scenarios where $m \gg n$, and $m \ll n$. Their exact values for each data set are given in Appendix \ref{app:add-xp}. All features are then standardized to have zero mean and unit variance. For each method, we compute the empirical coverage obtained on the test set and the average length of the conformal set. These two metrics are collected over $20$ different training-calibration-test random splits.

\textbf{Results.}
Figure \ref{fig:real_xp_all} displays the boxplots of the empirical coverages obtained by each method over all the data sets and all the $20$ different random splits (one point represents the empirical coverage obtained on one random split of one data set). Results on individual data sets are presented in Appendix~\ref{app:add-xp}, as well as boxplots of the lengths of the intervals obtained. The first observation we can make is that, on average (white circle), \method~does return intervals whose coverage is greater than $0.90$ (the desired coverage), without being too far from it. More importantly, our method returns prediction sets with coverage and length very similar to those returned by centralized calibration. In Figure~\ref{fig:real_xp_all} for instance, we see that the mean (white circle) and standard-deviation (size of the box) of the coverages obtained with \method~and the centralized baseline have comparable values, with a slightly larger standard-deviation for \method. The same kind of observation can be made concerning the length of the prediction sets (see figures in Appendix~\ref{app:add-xp}). Finally, it is interesting to note that, with \method, we obtain similar results for $m \gg n$ and $m \ll n$. This is in contrast to \methodAvg, which yields sets with higher coverages and lengths on all data sets and is therefore strictly inferior to our method. Note that Appendix~\ref{sec:private_xp} provides additional results about our DP algorithm \methodDP, showing how the coverage varies with the privacy parameter $\varepsilon$.
Overall, these experiments support the fact that \method~is a well-suited method to perform the calibration step of CP in a decentralized setting, placing it as the only one adapted to the context of (one-shot) FL.

\section{Discussion}


This paper introduces the method \textit{Federated Conformal Prediction with Quantile-of-Quantiles} (\method) to output valid distribution-free prediction sets in a one-shot Federated Learning context. In addition to the analysis and discussion about the different properties of our method, we also introduce \methodDP, a private version of \method~based on Local Differential Privacy. Multiple experiments highlight that our method returns prediction sets with coverage and length close to those returned in a centralized setting, supporting the fact that \method~is a well-suited method for (one-shot) FL scenarios.

This work brings many important future research directions. Among them, we expect that new proof techniques could lead to better theoretical guarantees, notably regarding conditional coverage and the private estimator. Our paper focuses on the calibration step, making it particularly suited for \emph{split}-based conformal methods. However, it would be interesting to study how our FL approach could be extended to the full conformal or the nested conformal methods \cite{gupta2022nested}. Finally, an interesting line of research is the derivation of specific estimators for cases where local data sets are not identically distributed.

\section*{Acknowledgements}

This work was supported in part by the French Agence Nationale de la Recherche under grants ANR-20-CE23-0015 (Project PRIDE) and ANR-17-CE23-0011 ({\sc Fast-Big}). Batiste Le Bars is supported by an Inria-EPFL fellowship. 
Sylvain Arlot is also supported by Institut Universitaire de France (IUF). 

\vskip 0.2in
\bibliography{biblio.bib}
\bibliographystyle{icml2023}

\appendix
\onecolumn
\begin{center}
	{\Large \textbf{Appendix}}
\end{center}


\section{Supplementary Discussions}

\subsection{\method~with Different $n_i$}
\label{sec:diff_n}

In the main article, we assumed for simplicity that all agents had the same amount of data $n$. Our method is in fact generalizable to the case where the agents have data sets of different sizes $n_1, \ldots, n_m$. In this case, the random variables $(\Qh_{(\ell)}(\mathcal{S}^{(1)}), \ldots, \Qh_{(\ell)}(\mathcal{S}^{(m)}))$ are no longer identically distributed as they are computed on data sets of different sizes. Hence, we need to use the cdf of INID data---see \citep[Equation~(16)]{balakrishnan2007permanents}. The right-hand side of Equation \eqref{eq:main_equa} becomes
\begin{align*}
M_{\ell_1, \ldots, \ell_m, k} &= 1 - \dfrac{1}{n_1 + \cdots + n_{m} + 1} \sum^{m}_{j=k} \sum_{A \in \mathcal{P}_j} \sum^{n_{a_1}}_{i_1=\ell_{a_1}} \cdots \sum^{n_{a_{j}}}_{i_{j}=\ell_{a_{j}}} \sum^{\ell_{a_{j+1}} - 1}_{i_{j+1}=0} \cdots \sum^{\ell_{a_m}-1}_{i_m=0} \dfrac{\binom{n_{a_1}}{i_1} \cdots \binom{n_{a_m}}{i_m}}{\binom{n_{1} + \cdots + n_{m}}{i_1 + \cdots + i_m}} \; ,
\end{align*}
where $\mathcal{P}_{j}$ is the set of  subsets of $\{1, \ldots ,m \}$ of size $j$, $A = \{a_1, \ldots, a_{j}\} \in \mathcal{P}_{j}$, and $A^c = \{a_{j+1}, \ldots, a_{m}\}$ such that $a_1 < a_2 < \ldots < a_{j}$ and $a_{j + 1 } < a_{j + 2 } < \ldots < a_m$. It can be computed in the same way as for the case where $n_1=\cdots = n_m =n$ (see Appendix \ref{sec:fast_comput}). An important difference that appears if we want to apply the methodology of \method~presented in the paper is that we now have to find different values for $\ell_1,\ldots,\ell_m$ since the local sample sizes are different. Although possible, computing $M_{\ell_1, \ldots, \ell_m, k}$ for all possible values of $(\ell_1, \ldots, \ell_m, k)$ to find the smallest one above $1-\alpha$ can be very time-consuming.

In practice, we propose to directly fix $\ell_j=\lceil (1-\alpha)(n_j+1)\rceil$ as it would be similarly done in the classical (centralized) split methodology. Hence, the previous probability function only needs to be computed for the different values of $k=1,\ldots,m$, thereby reducing significantly the computation at the cost of being slightly less close to $1-\alpha$. Note that this strategy can also be used in the context of the main paper, i.e., when $n_j=n$ and $\ell_j = \ell$.

We made an additional experiment with such unbalanced data sets using the setting of the synthetic experiments but with different sizes for each local data set. We set $\ell_j=\lceil (1-\alpha)(n_j+1)\rceil$ and find the value of $k$ such that the coverage is greater than $0.9$. Results are displayed in Figure~\ref{fig_unbalance} and, as expected, the coverage is respected.

\begin{figure}[H]
	\centering
	\includegraphics[width=.45\linewidth]{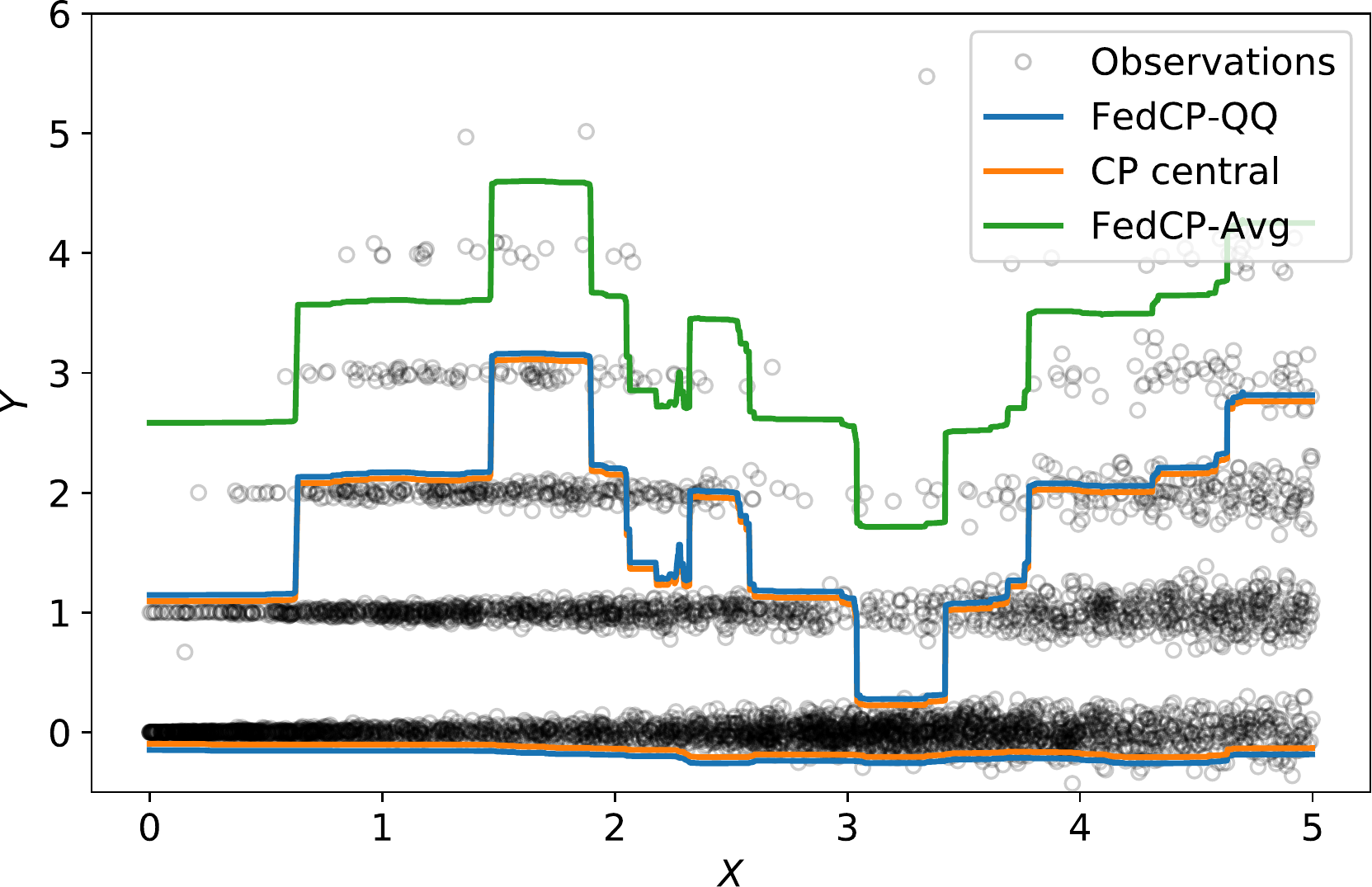}
	\vspace{-1em}
	\caption{Prediction intervals on simulated data (unbalanced case) with \method~(ours), centralized, and \methodAvg~calibrations. The lower bound of the set returned by \methodAvg~is beyond the figure.}
	\label{fig_unbalance}
\end{figure}

\subsection{Computation of Equation (\ref{eq:main_equa})}
\label{sec:fast_comput}

Let us recall that the right-hand side of Equation \eqref{eq:main_equa} is 
\begin{align*}
& M_{\ell,k} = 1 - \displaystyle \dfrac{1}{m n + 1} \sum^{m}_{j=k} \binom{m}{j} \sum^{n}_{i_1=\ell} \cdots \sum^{n}_{i_j=\ell} \sum^{\ell-1}_{i_{j+1}=0} \cdots \sum^{\ell-1}_{i_m=0} \dfrac{\binom{n}{i_1} \cdots \binom{n}{i_m}}{\binom{m n}{i_1 + \cdots + i_m}} 
= 1 - \displaystyle \dfrac{1}{m n + 1} \sum^{m}_{j=k} \binom{m}{j} \sum^{n}_{I_{1,j}=\ell} \sum^{\ell-1}_{I^c_{1,j}=0}  \dfrac{\binom{n}{i_1} \cdots \binom{n}{i_m}}{\binom{m n}{i_1 + \cdots + i_m}} \; ,
\end{align*}
where $I_{1,j} = \{i_1, \ldots, i_j\}$ and $I^c_{1,j} =\{i_{j+1},\ldots,i_m\}$. The time complexity of its brute-force computation is too high. In this section, we therefore provide an efficient algorithm to compute it. In the first step, we rewrite the summations to bring out the mass function of a multivariate hypergeometric distribution:
\begin{align}
    \label{eq:sum_fast}
    & \sum^{n}_{I_{1,j}=\ell} \sum^{\ell-1}_{I^c_{1,j}=0} \dfrac{\binom{n}{i_1} \cdots \binom{n}{i_m}}{\binom{m n}{i_1 + \cdots + i_m}} = \sum_{r \in \tilde{R}_j} \sum^{n}_{I_{1,j}=\ell} \sum^{\ell-1}_{I^c_{1,j}=0} \underbrace{\dfrac{\binom{n}{i_1} \cdots \binom{n}{i_m}}{\binom{m n}{i_1 + \cdots + i_m}} \Ind{i_1 + \cdots + i_m = r}}_{\text{mass of a multivariate hypergeometric distribution}} \; \; ,
\end{align}
with $\tilde{R}_j = \{j\ell, \ldots ,jn + (m-j)(\ell-1)\}$. The summation in $I_{1,j}$, and $I^c_{1,j}$ therefore computes rectangular probabilities and can be rewritten as follows
\begin{align*}
    p_r(\mathbf{a}, \mathbf{b}) 
    &\triangleq \IP(a_1 \leq H_1 \leq b_1, \cdots, a_m \leq H_m \leq b_m) \; ,
    \\
    \text{where } 
    (a_i,b_i) &= \begin{cases} 
    (\ell,n) \qquad &\text{if } i \in \{ 1 , \ldots , j \} \\
    (0,\ell-1) \qquad &\text{if } i \in \{ j+1, \ldots , m\}  \; ,
    \end{cases}
\end{align*}
and $(H_1, \ldots, H_m)$ follows a multivariate hypergeometric distribution with parameters $(\{n, \ldots, n\}, r)$. 
By a direct application of Bayes’ theorem we obtain \citep[Equations~(2) and (5)]{lebrun2013efficient}:
\begin{align*}
    p_r(\mathbf{a}, \mathbf{b}) &= \IP \left( \sum^m_{i=1} T_i = r \right) \dfrac{\prod^m_{i=1} \IP(a_i \leq W_i \leq b_i)}{\IP(\sum^m_{i=1} W_i = r)} \; ,
\end{align*}
where for any $t \in (0, 1)$ and for all $1 \leq i \leq m$, the random variables $W_i$ follow a binomial distribution $\mathcal{B}(n, t)$ and $T_i = (W_i \mid a_i \leq W_i \leq b_i)$ follows a truncated binomial distribution.

As there exists efficient algorithms to compute both $\IP(a_i \leq W_i \leq b_i)$ and $\IP(\sum^m_{i=1} W_i = r)$, the only difficulty remains the evaluation of $\IP(\sum^m_{i=1} T_i = r)$. One approach is to multiply the generating probability functions of the $T_i$ and then extract the coefficient of degree $r$. This algorithm has a time complexity of $\mathcal{O}(m r \log(r))$ if the multiplications are done using an FFT based algorithm. This strategy still remains costly for large values of $m$ or $r$, and more advanced algorithms have been proposed by \citet{lebrun2013efficient}.

Note finally that since $M_{\ell,k}$ is a non-decreasing function of both $\ell$ and $k$, one can find $(\ell^*,k^*)$ without computing $M_{\ell,k}$ for all values of~$(\ell,k)$. Figures \ref{fig:heat_map} and~\ref{fig:rev2-fig3} illustrate that $M_{\ell,k}$ actually needs to be computed for only a few values of~$(\ell,k)$.

\begin{figure}
	\centering
	\includegraphics[width=.495\linewidth]{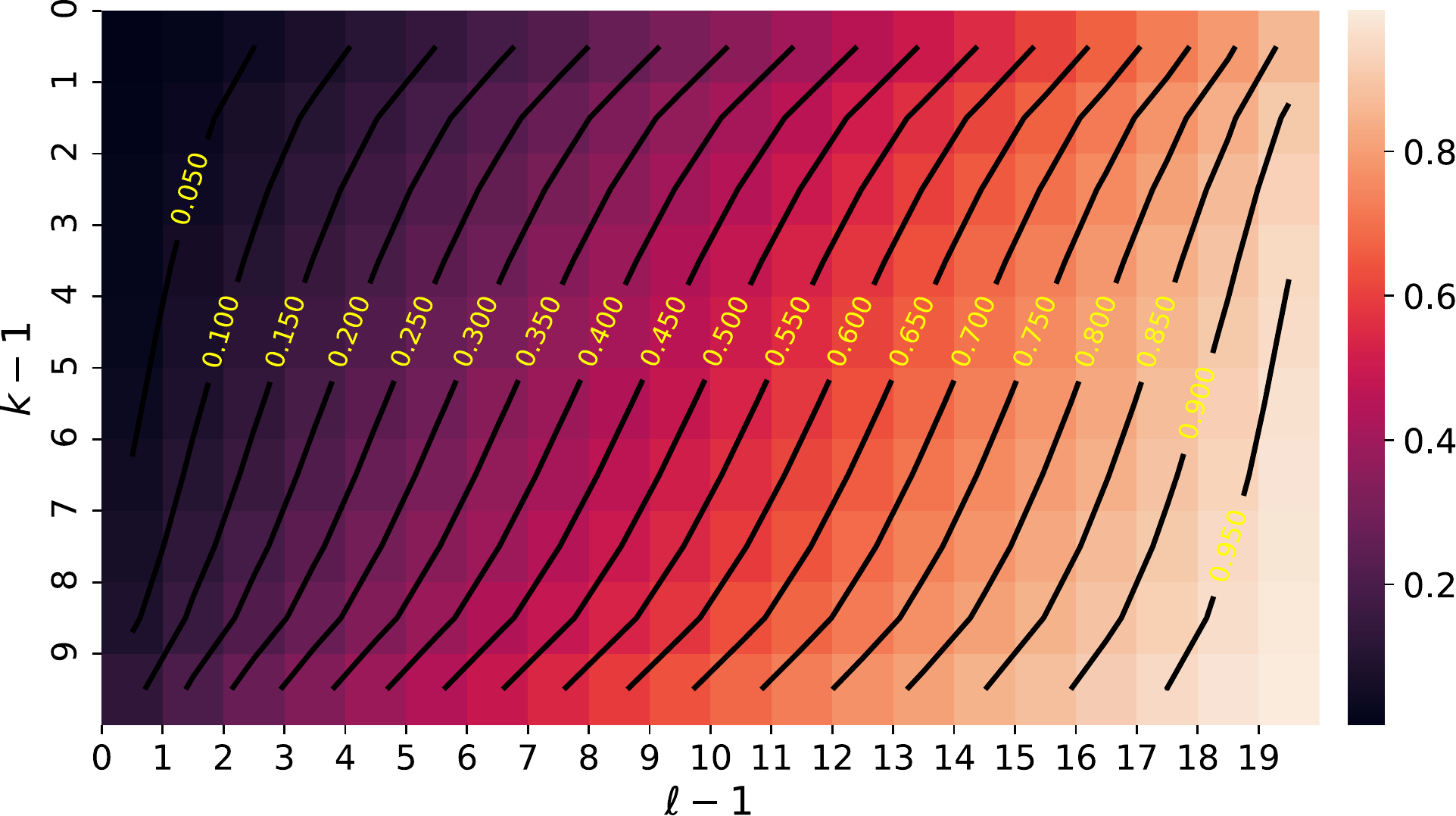}
	\vspace{-1em}
	\caption{Heat-map representation of $(M_{\ell,k})_{1 \leq \ell \leq n , 1 \leq k \leq m}$ for $m=10$ and $n=20$. }
	\label{fig:heat_map}
\end{figure}

\begin{figure}
	\centering
	\includegraphics[width=.495\linewidth]{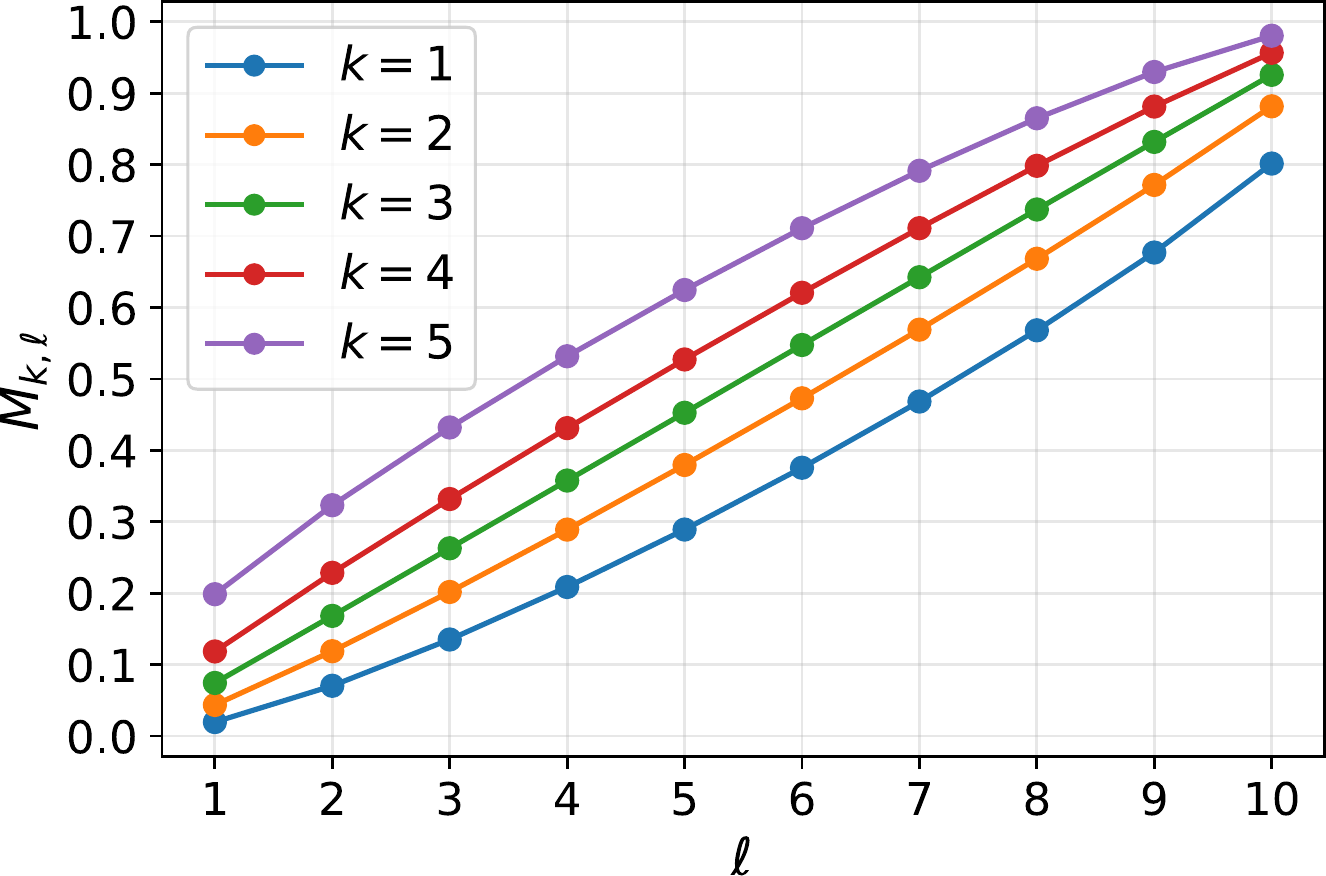}
 \includegraphics[width=.495\linewidth]{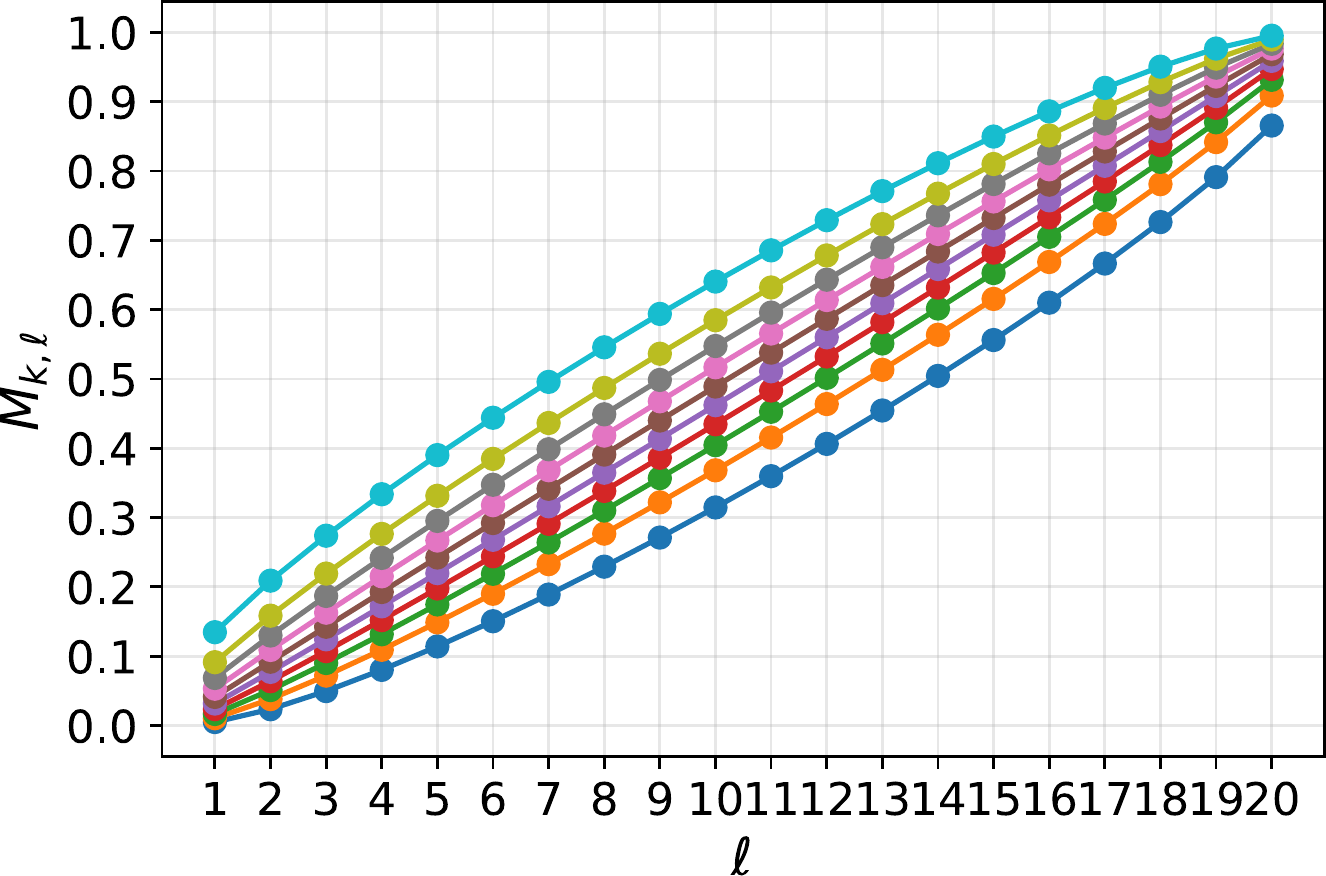}
	\vspace{-1em}
	\caption{Values of $(M_{\ell,k})_{1 \leq \ell \leq n , 1 \leq k \leq m}$ when $(m, n) = (5, 10)$ (Left panel) and $(10, 20)$ (Right panel). One color per value of $k$.}
	\label{fig:rev2-fig3}
\end{figure}

\subsection{Reporting the Maximum of Each Agent}
\label{sec:max_machine}
Another particular case of interest is when $\ell = n$, i.e., agents send their maximum value. Using the fact that the c.d.f. of the maximal value of $n$ i.i.d random variables with common c.d.f. $F$ is $F^n$, it is possible to give a simpler formula for $M_{n, k}$.

\begin{proposition}
\label{prop:maxQQ}
 For every $m, n \geq 1$ and $k \in \llbracket 1, m \rrbracket$, we have
\begin{align*}
    M_{n, k} &= \dfrac{ \Gamma(k + 1/n)}{\Gamma(k)} \cdot \dfrac{\Gamma(m+1)}{\Gamma(m+1/n+1)} \; ,
\end{align*}
where $M_{n,k}$ is defined by Eq.~\eqref{eq:main_equa} and 
$\Gamma$ is the Gamma function: 
for any complex number $z$ such that $\Re(z) > 0$, 
$\Gamma(z) = \int_0^{+\infty} t^{z-1} \mathrm{e}^{-t}\mathrm{d}t$. 
	
Furthermore, when $k = k_m \triangleq \lceil m (1-\alpha)^n \rceil$, we have
\begin{align*}
    \lim_{m\to\infty} \IP\left(Y \in \Chat_{n, k_m}(X)\right) \geq 1-\alpha \; . 
\end{align*}
\end{proposition}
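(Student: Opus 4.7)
The plan is to exploit the fact that, by Theorem~\ref{them:main}, the quantity $M_{n,k}$ is distribution-free (it depends only on $m,n,\ell,k$) and, as soon as the scores have a continuous c.d.f., coincides with the true coverage probability $\IP(S \leq \Qh_{(n,k)})$. I therefore reduce to the most convenient model: i.i.d.\ $\mathrm{Uniform}(0,1)$ calibration scores $U_i^{(j)}$ and a test score $U$. Let $V_j = \max_{1 \leq i \leq n} U_i^{(j)}$; its c.d.f.\ on $[0,1]$ is $v \mapsto v^n$, and the probability integral transform $W_j := V_j^n$ yields i.i.d.\ $\mathrm{Uniform}(0,1)$ variables. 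Since $w \mapsto w^{1/n}$ is strictly increasing, order statistics commute with the transform and $V_{(k)} = W_{(k)}^{1/n}$.

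Next, because $U$ is $\mathrm{Uniform}(0,1)$ and independent of the $V_j$'s, the coverage reduces to $\IP(U \leq V_{(k)}) = \IE[V_{(k)}] = \IE\bigl[W_{(k)}^{1/n}\bigr]$. The order statistic $W_{(k)}$ follows a $\mathrm{Beta}(k, m-k+1)$ distribution, so $\IE[W_{(k)}^{1/n}]$ is a Beta integral equal to $B(k+1/n,\, m-k+1)/B(k,\, m-k+1)$. Expanding the Beta functions in Gamma form and cancelling the common factor $\Gamma(m-k+1)$ yields
\[
M_{n,k} \;=\; \frac{\Gamma(k+1/n)}{\Gamma(k)} \cdot \frac{\Gamma(m+1)}{\Gamma(m+1/n+1)} \, ,
\]
which is the first claim of the proposition.

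For the asymptotic statement, I use the standard Gamma-ratio $\Gamma(x+a)/\Gamma(x) \sim x^a$ as $x \to \infty$. This gives $\Gamma(k_m+1/n)/\Gamma(k_m) \sim k_m^{1/n}$ and $\Gamma(m+1)/\Gamma(m+1/n+1) \sim m^{-1/n}$, hence $M_{n,k_m} \sim (k_m/m)^{1/n}$. With $k_m = \lceil m(1-\alpha)^n \rceil$, one has $k_m / m \to (1-\alpha)^n$, so $M_{n,k_m} \to 1-\alpha$. Combining with the inequality $\IP\bigl(Y \in \Chat_{n,k_m}(X)\bigr) \geq M_{n,k_m}$ from Theorem~\ref{them:main}, which is valid with no continuity assumption on the scores, one obtains $\liminf_{m \to \infty} \IP\bigl(Y \in \Chat_{n,k_m}(X)\bigr) \geq 1-\alpha$ as desired.

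The main technical point to be careful about is the initial reduction: the closed form is really an algebraic identity between the multi-index sum defining $M_{n,k}$ in Equation~\eqref{eq:main_equa} and a Gamma ratio, and it would be painful to prove by direct manipulation of the sum. The probabilistic route via the uniform model and the equality case of Theorem~\ref{them:main} makes this essentially free. After that, everything is routine: a single Beta integral for the closed form and a standard Gamma-ratio asymptotic for the limit.
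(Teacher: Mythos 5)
Your proof is correct, and for the closed-form identity it takes a genuinely different and cleaner route than the paper. The paper also reduces to the uniform model via Theorem~\ref{them:main}, but then computes $1-M_{n,k}=\int_0^1 F_{U_{(n,k)}}(t)\,\mathrm{d}t$ using the explicit c.d.f. $\sum_{j=k}^m\binom{m}{j}(t^n)^j(1-t^n)^{m-j}$; after the substitution $u=t^n$ this leaves a sum of Beta functions $\frac{1}{n}\sum_{j=k}^m\binom{m}{j}\mathrm{B}(j+1/n,\,m-j+1)$, which the authors collapse by proving (by induction) the auxiliary identity $\sum_{j=0}^{k-1}\Gamma(j+1/n)/\Gamma(j+1)=n\,k\,\Gamma(k+1/n)/\Gamma(k+1)$. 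Your observation that $V_{(k)}=W_{(k)}^{1/n}$ with $W_{(k)}\sim\mathrm{Beta}(k,m-k+1)$, so that $M_{n,k}=\IE[V_{(k)}]=\IE[W_{(k)}^{1/n}]=\mathrm{B}(k+1/n,m-k+1)/\mathrm{B}(k,m-k+1)$, replaces that whole telescoping computation with a single fractional moment of a Beta distribution and makes the auxiliary lemma unnecessary; this is a real simplification, and it also explains \emph{why} the answer is a clean Gamma ratio. The asymptotic part of your argument (Gamma-ratio asymptotics giving $M_{n,k_m}\sim(k_m/m)^{1/n}\to 1-\alpha$, combined with the distribution-free lower bound of Theorem~\ref{them:main}) is essentially identical to the paper's, and your phrasing with $\liminf$ is if anything slightly more careful than the statement, since without continuity of the score c.d.f. only the lower bound, not the limit itself, is guaranteed.
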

Proposition~\ref{prop:maxQQ} is proved in Appendix~\ref{app.pr.prop:maxQQ}. It shows that when each agent sends the maximum to the central server, by taking the $k_m$-th smallest value of these maximums with $k_m \geq m(1-\alpha)^n$, the server obtains a valid coverage of $(1-\alpha)$. Note that for a fixed $m$, $k_m$ decreases to $0$ when $n$ grows to infinity. This is expected since, intuitively, if the number of points per agent increases, the maximums also increase, and the server must compensate by taking a very small quantile of these values to obtain a coverage close to $(1-\alpha)$.

\section{Additional Experimental Results}
\subsection{Results on Individual Data Sets}
\label{app:add-xp}
We present in Figures \ref{fig:real_xp_bio} to~\ref{fig:real_xp_concrete_low_m} the results of the experiments of Section~\ref{sec:xps:real} on individual data sets.

\begin{figure}[H]
	\centering
	\includegraphics[width=.495\linewidth]{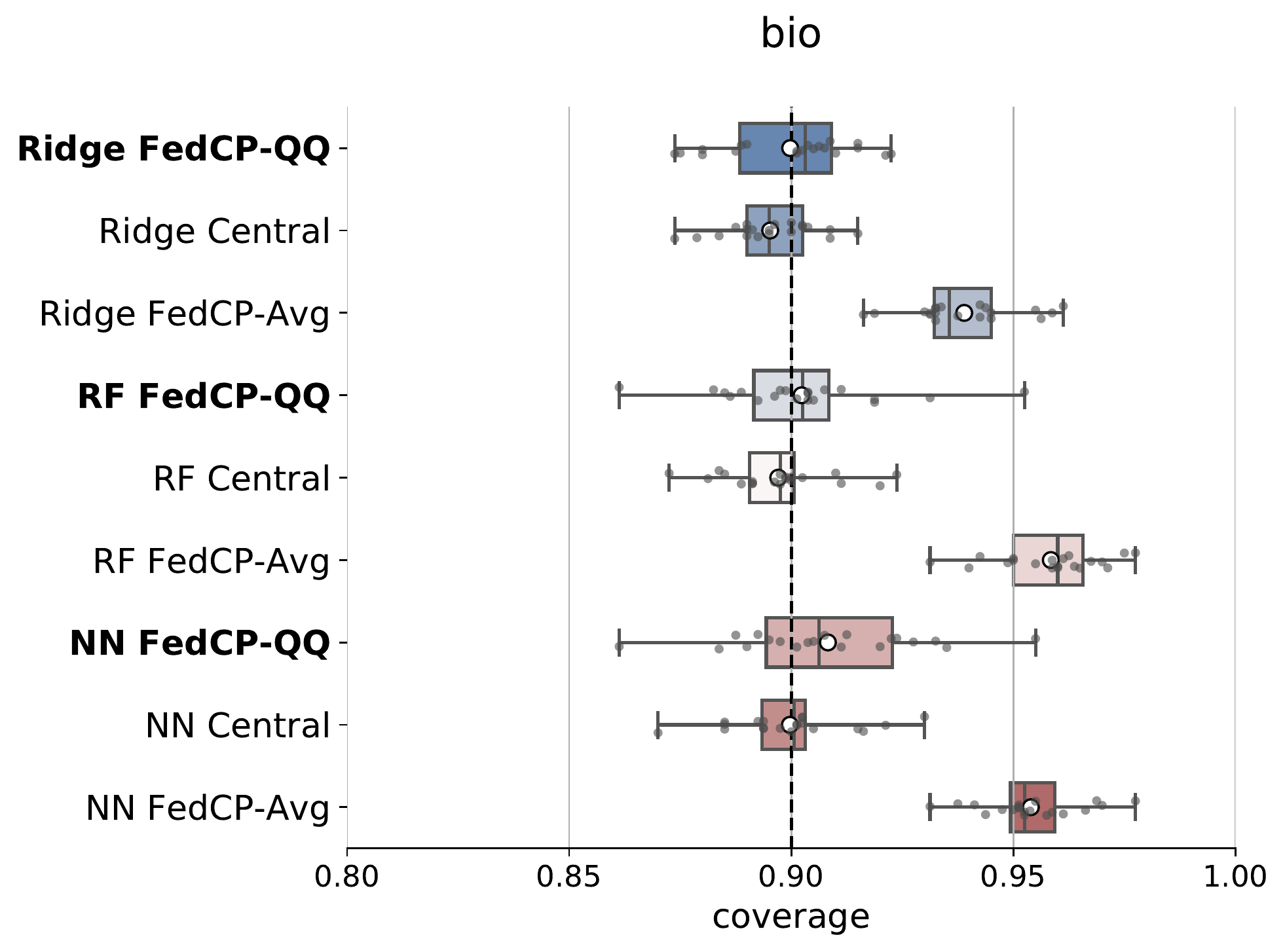}
	\includegraphics[width=.495\linewidth]{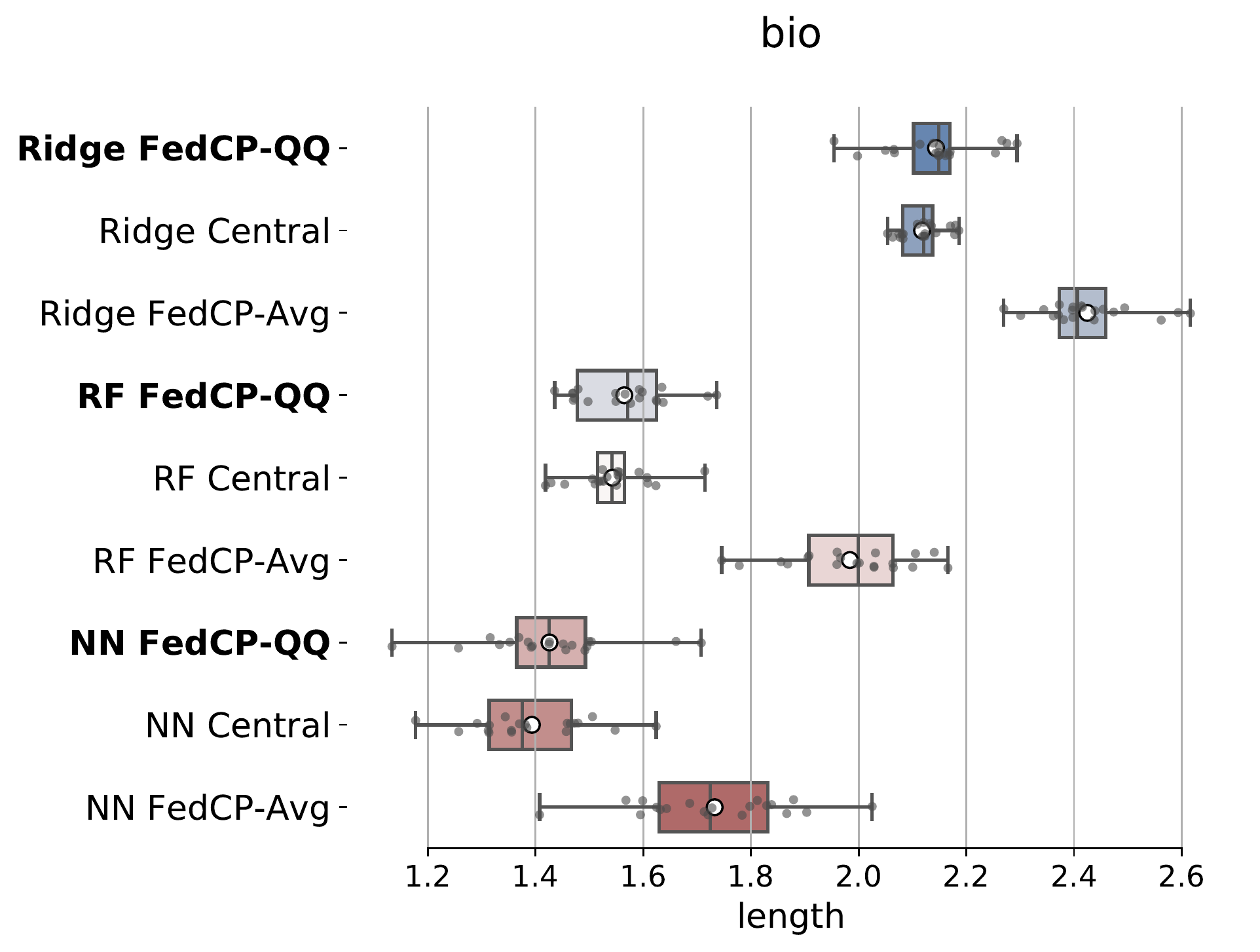}
	\vspace{-1em}
	\caption{Coverage (left) and average length (right) of prediction intervals for $20$ random training-calibration-test splits. The miscoverage is $\alpha=0.1$, and the calibration set is split into $m=100$ disjoint subsets of equal size $n=10$. The white circle represents the mean and the name of the data set is located at the top of each plot.}
	\label{fig:real_xp_bio}
\end{figure}
\begin{figure}[H]
	\centering
	\includegraphics[width=.495\linewidth]{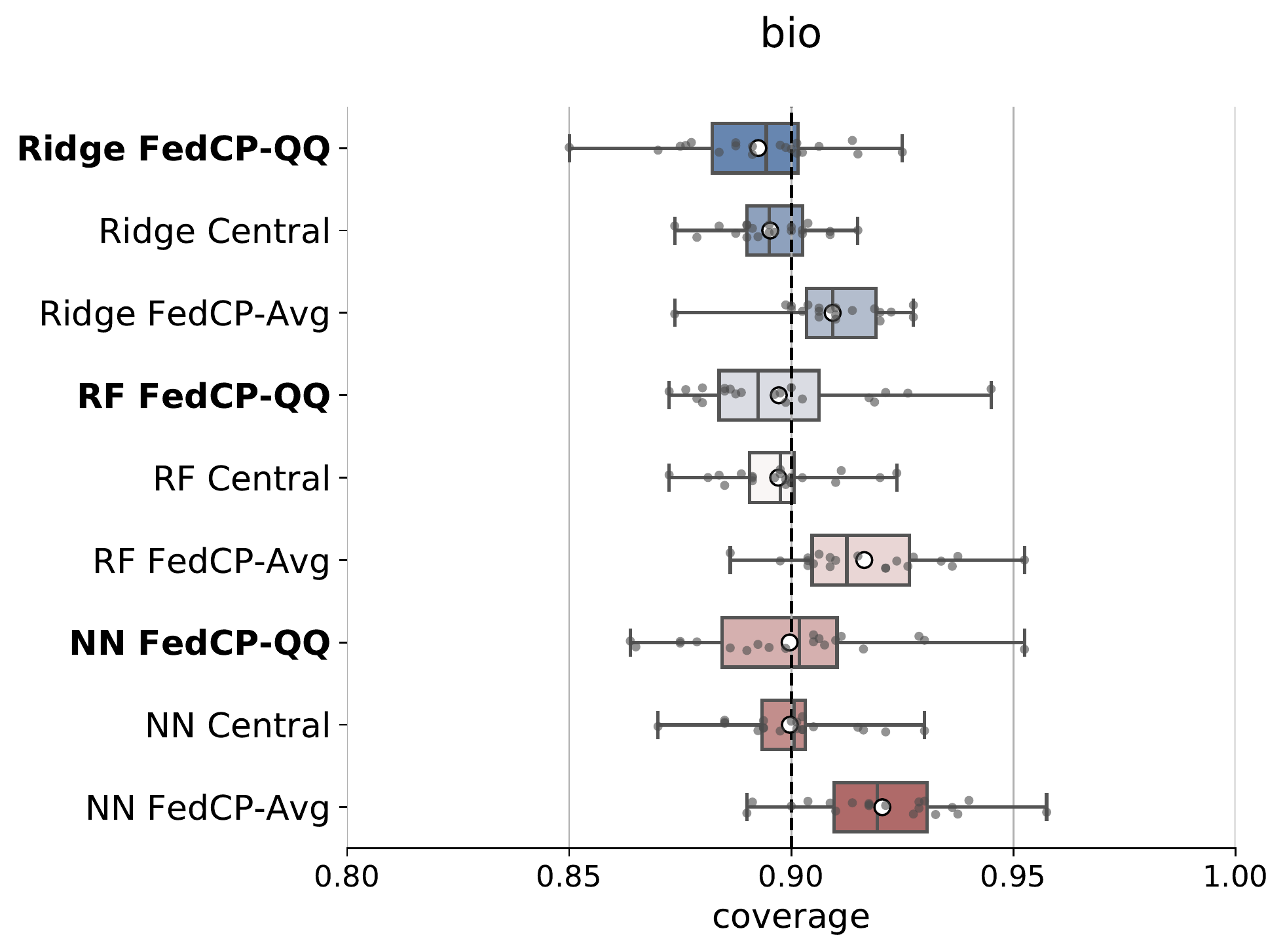}
	\includegraphics[width=.495\linewidth]{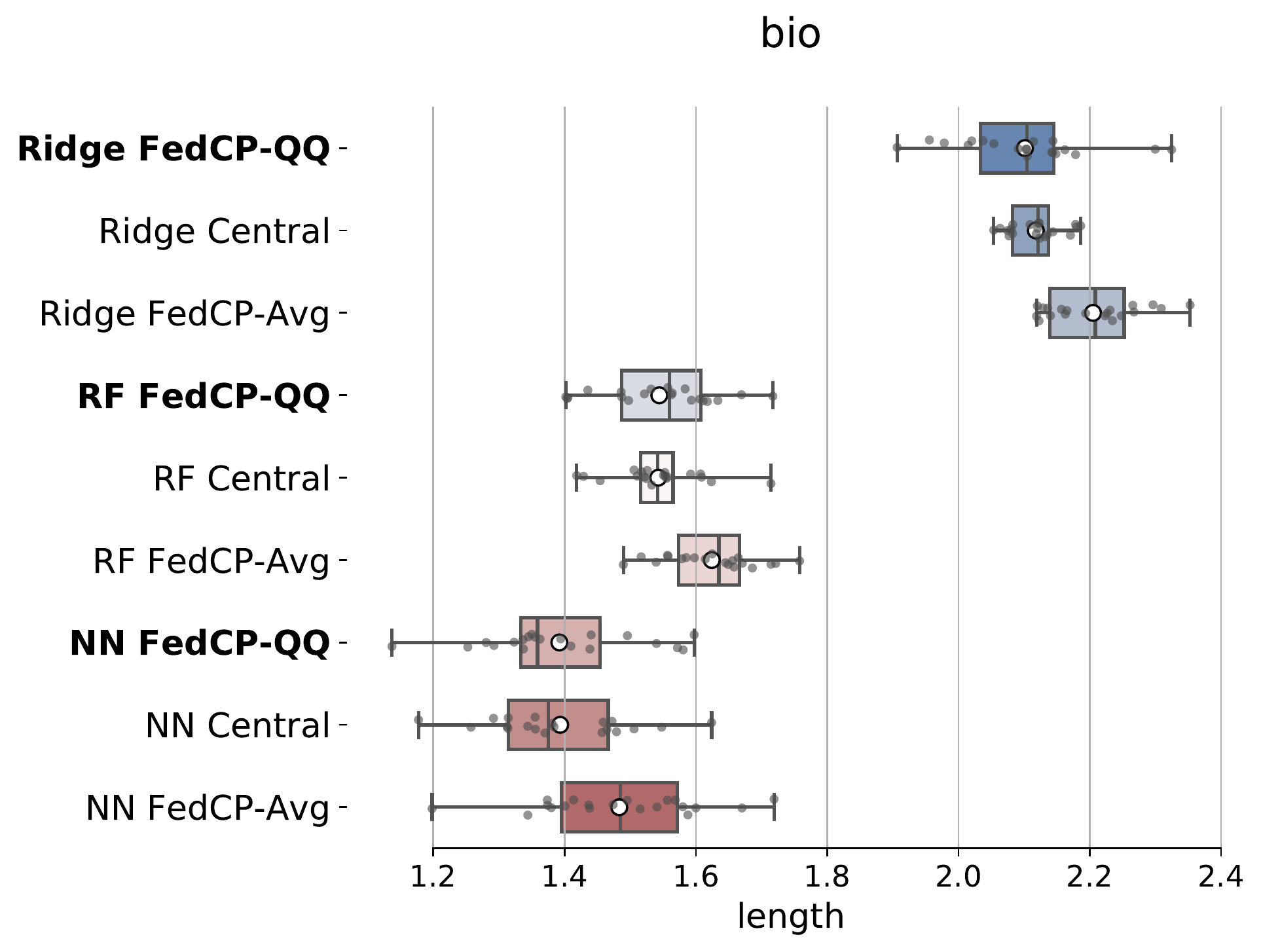}
	\vspace{-1em}
	\caption{Coverage (left) and average length (right) of prediction intervals for $20$ random training-calibration-test splits. The miscoverage is $\alpha=0.1$, and the calibration set is split into $m=10$ disjoint subsets of equal size $n=100$. The white circle represents the mean and the name of the data set is located at the top of each plot.}
	\label{fig:real_xp_bio_low_m}
\end{figure}

\begin{figure}[H]
	\centering
	\includegraphics[width=.495\linewidth]{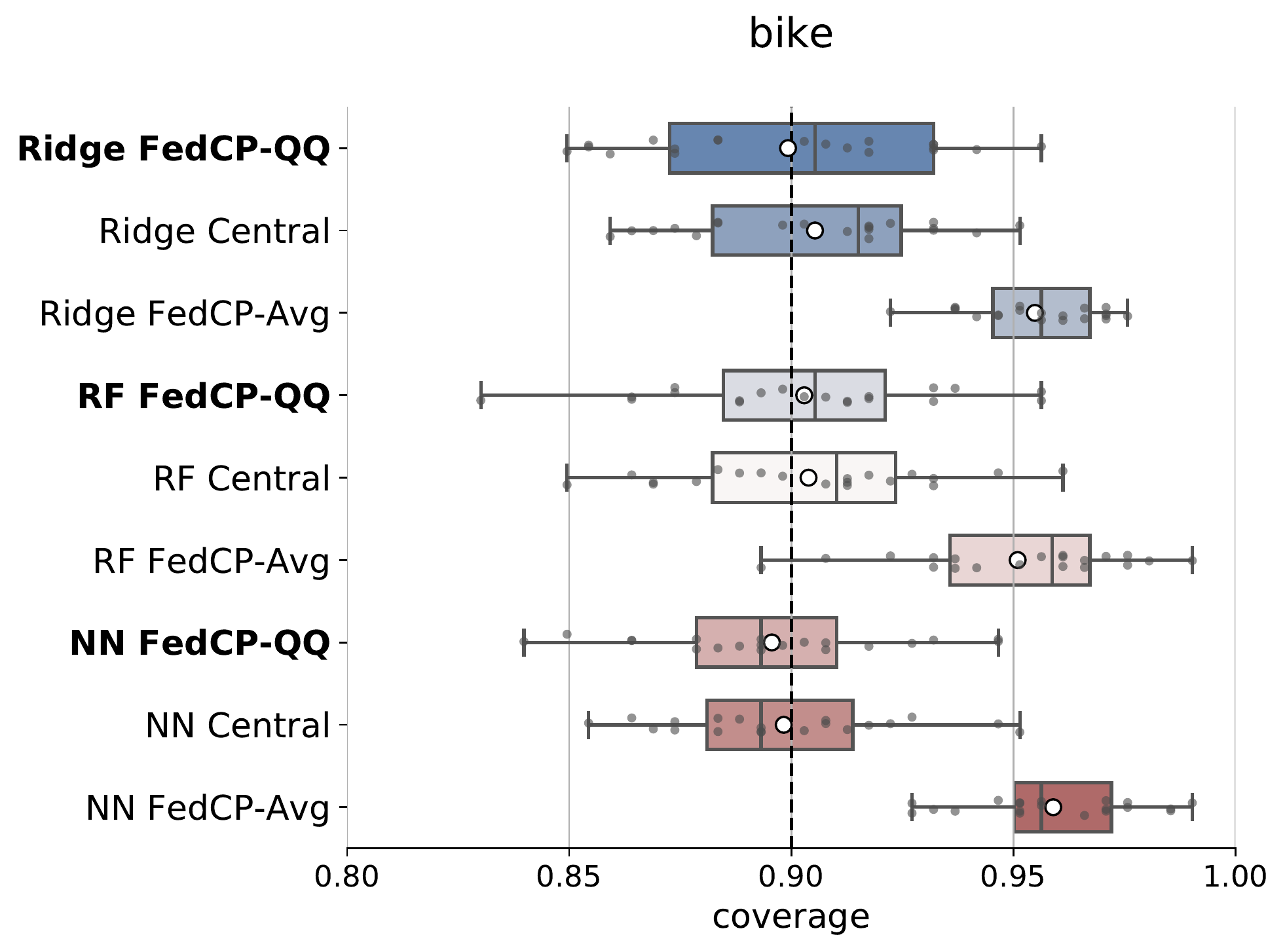}
	\includegraphics[width=.495\linewidth]{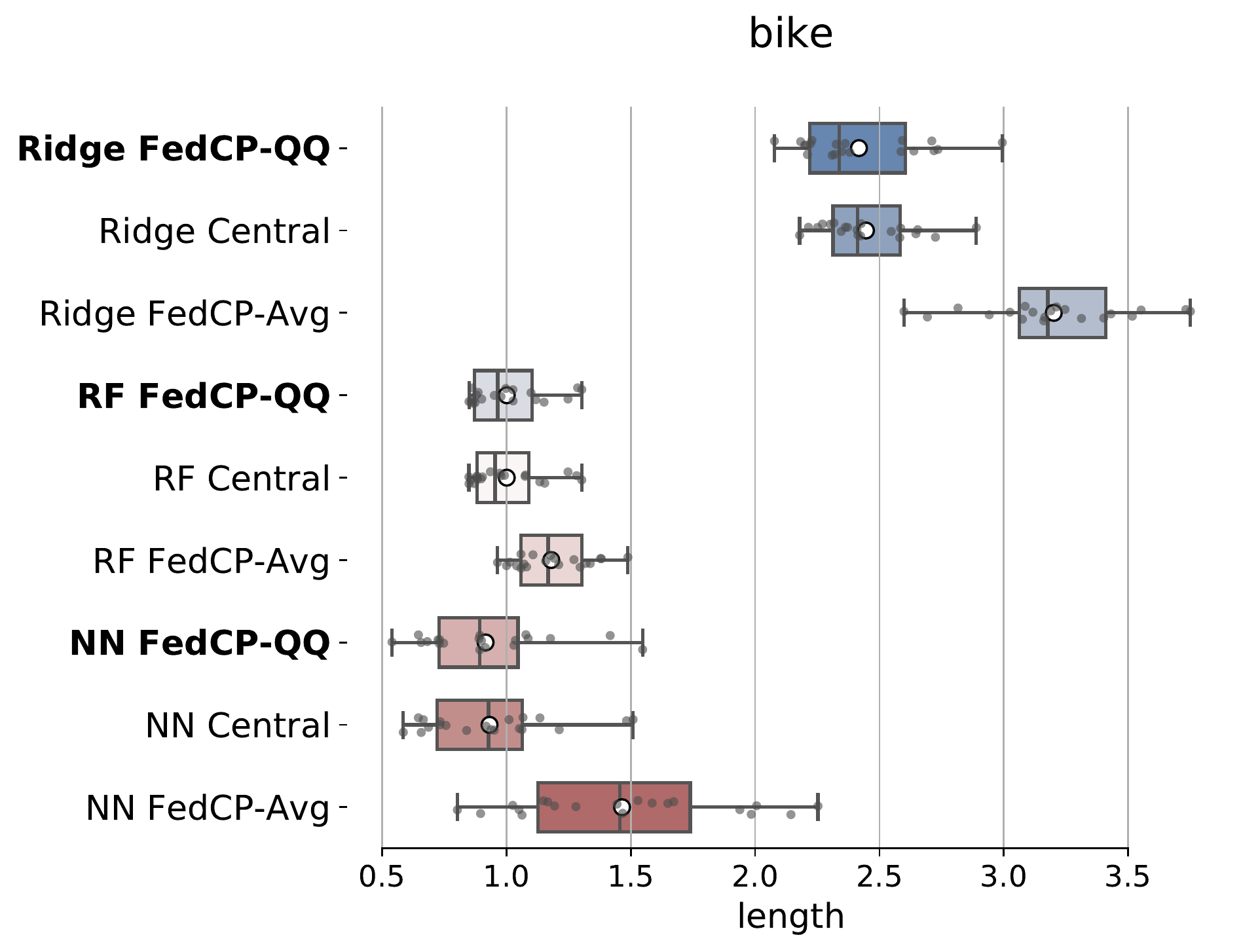}
	\vspace{-1em}
	\caption{Same as Figure \ref{fig:real_xp_bio} (see its caption) 
 with $m=100$ and $n=10$.}
	\label{fig:real_xp_bike}
\end{figure}

\begin{figure}[H]
	\centering
	\includegraphics[width=.495\linewidth]{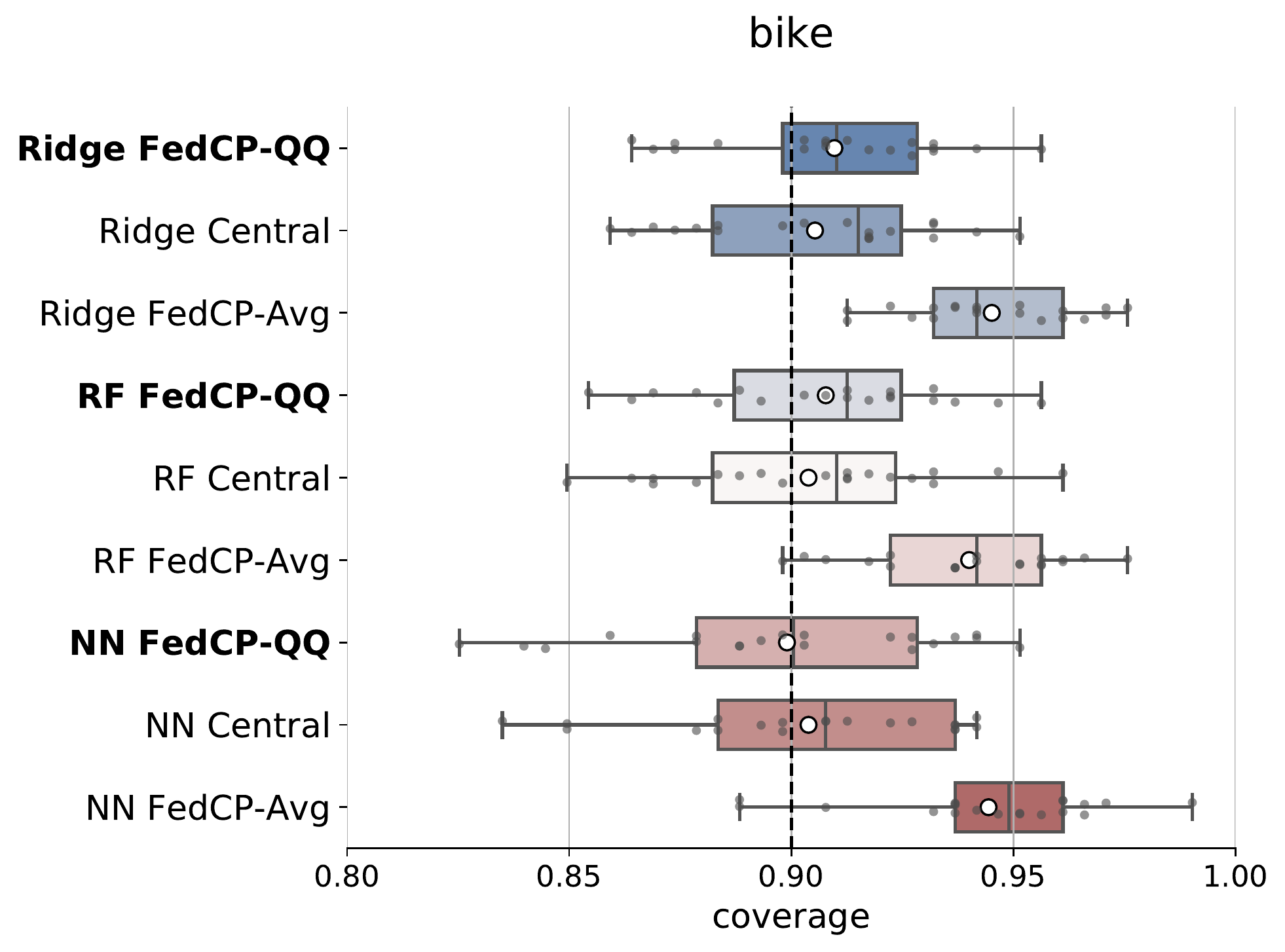}
	\includegraphics[width=.495\linewidth]{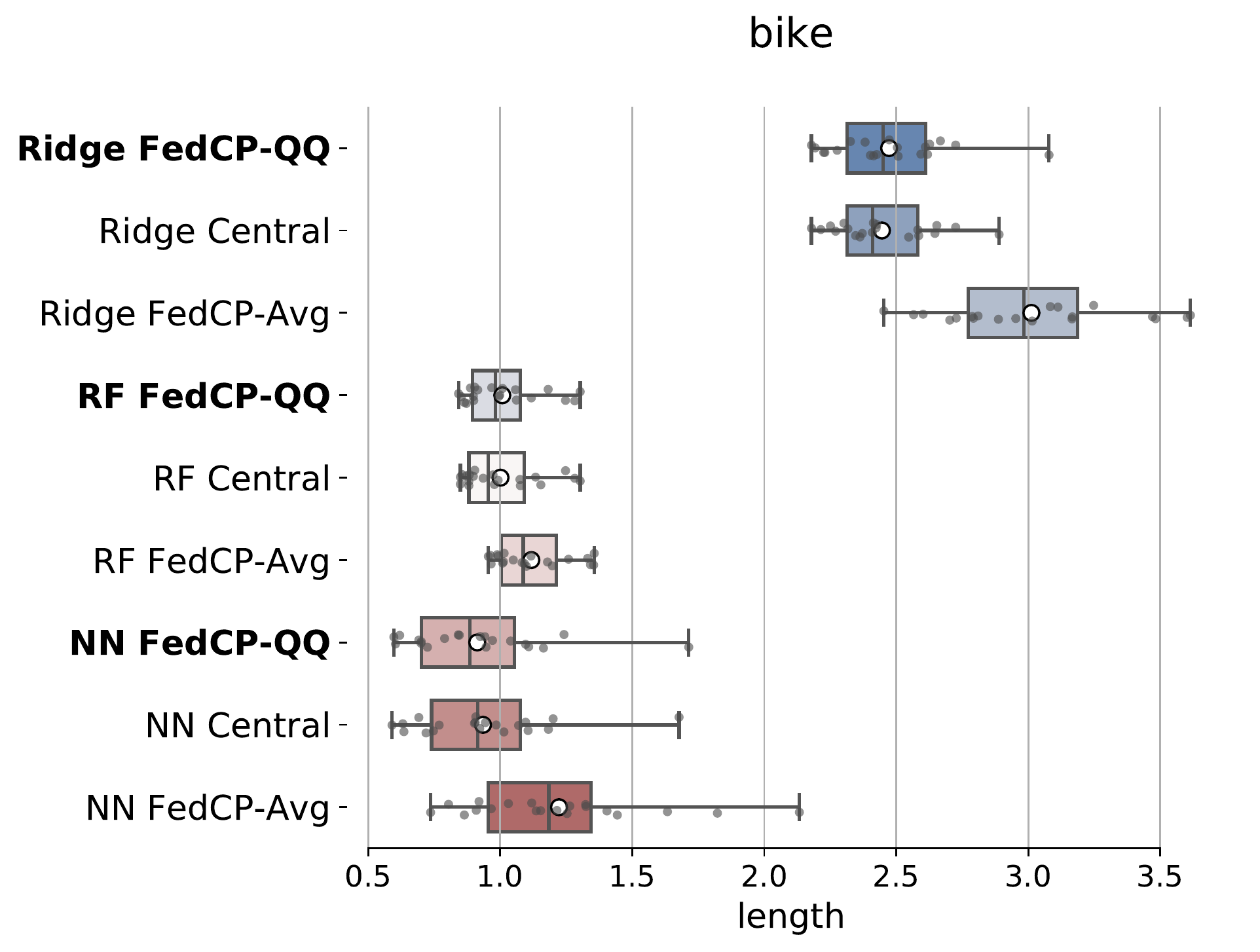}
	\vspace{-1em}
	\caption{Same as Figure \ref{fig:real_xp_bio_low_m} (see its caption) with $m=10$ and $n=100$.}
	\label{fig:real_xp_bike_low_m}
\end{figure}

\begin{figure}[H]
	\centering
	\includegraphics[width=.495\linewidth]{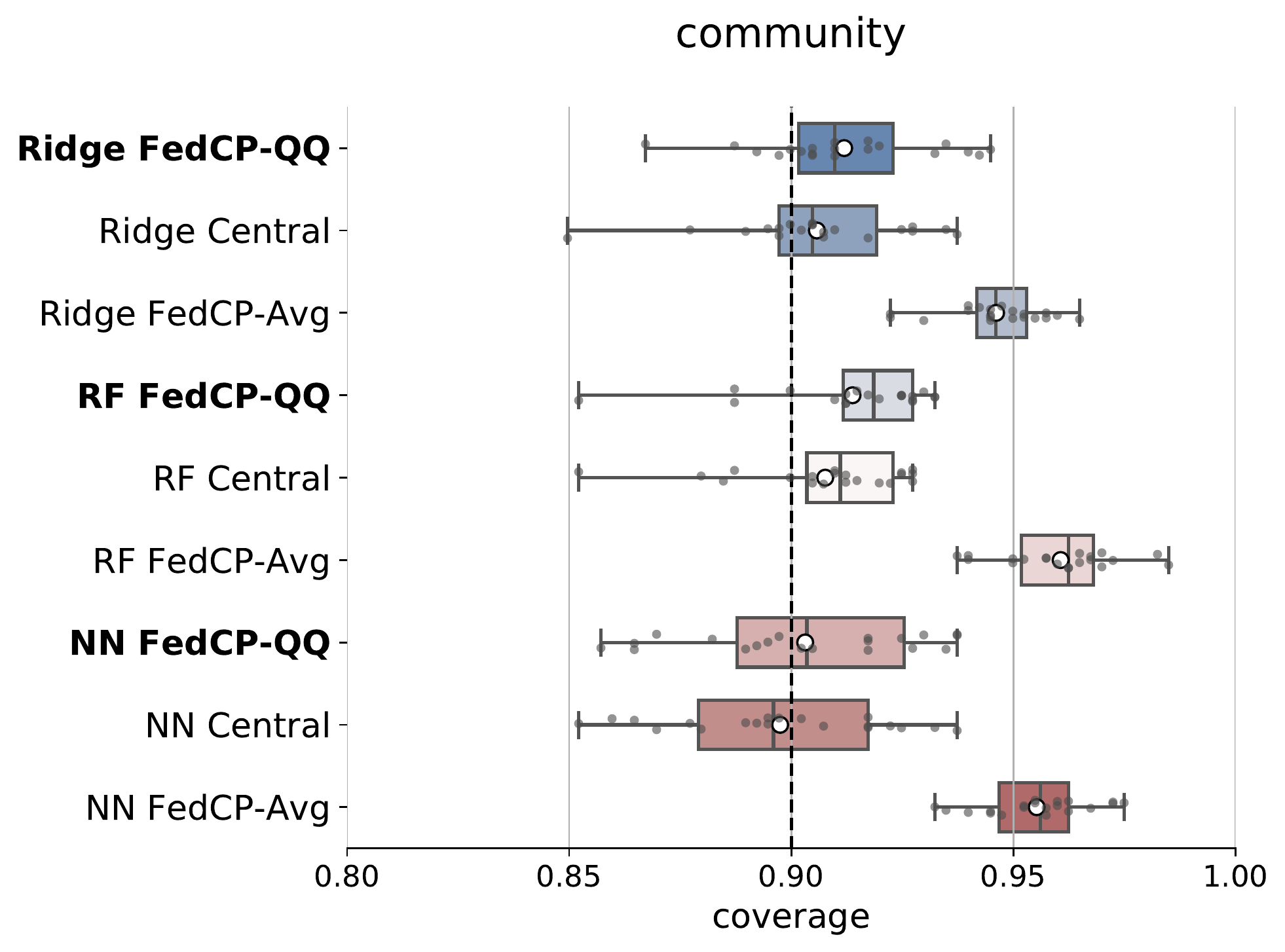}
	\includegraphics[width=.495\linewidth]{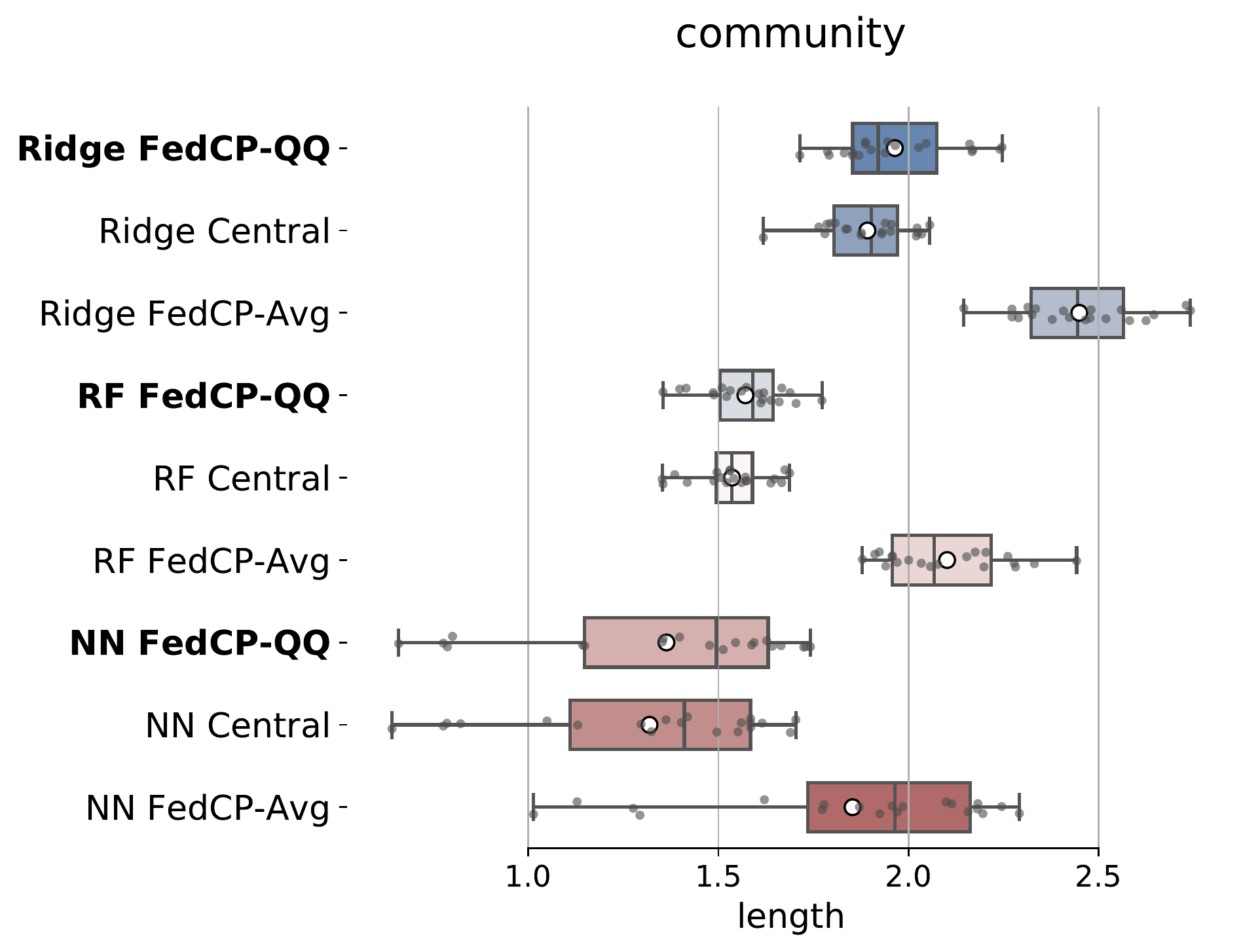}
	\vspace{-1em}
	\caption{Same as Figure \ref{fig:real_xp_bio} (see its caption) 
 with $m=80$ and $n=10$.}
	\label{fig:real_xp_community}
\end{figure}

\begin{figure}[H]
	\centering
	\includegraphics[width=.495\linewidth]{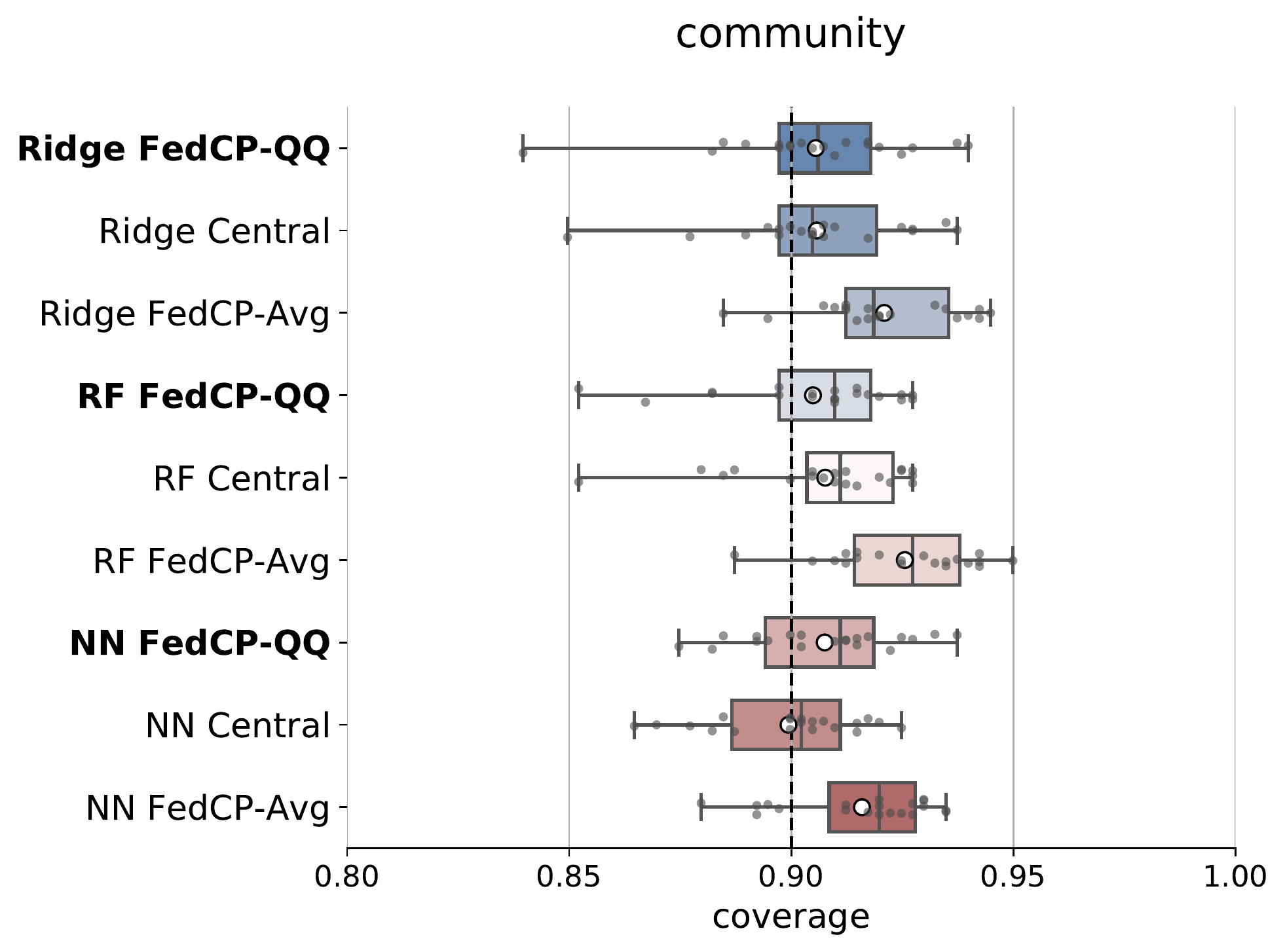}
	\includegraphics[width=.495\linewidth]{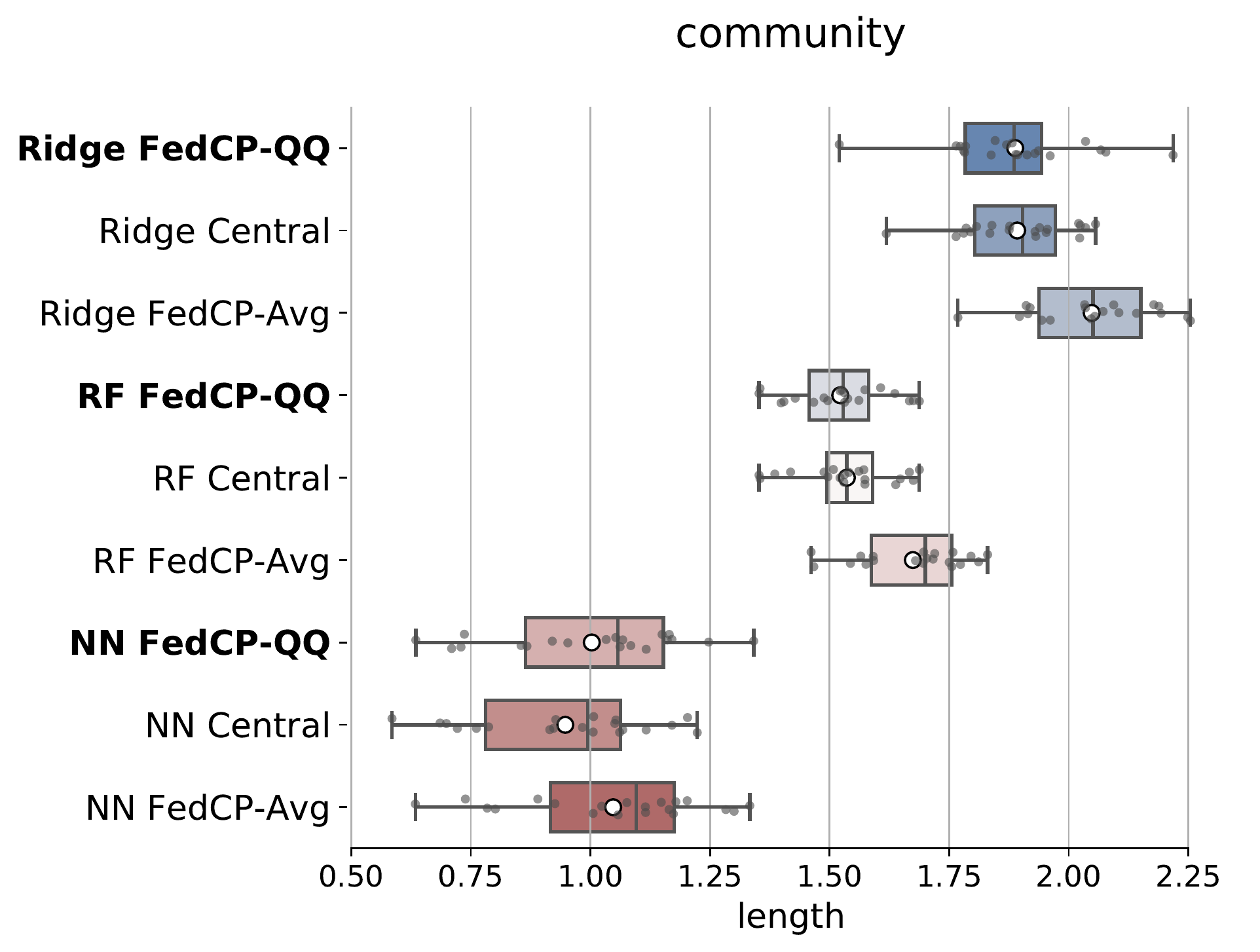}
	\vspace{-1em}
	\caption{Same as Figure \ref{fig:real_xp_bio_low_m} (see its caption) with $m=10$ and $n=80$.}
	\label{fig:real_xp_community_low_m}
\end{figure}

\begin{figure}[H]
	\centering
	\includegraphics[width=.495\linewidth]{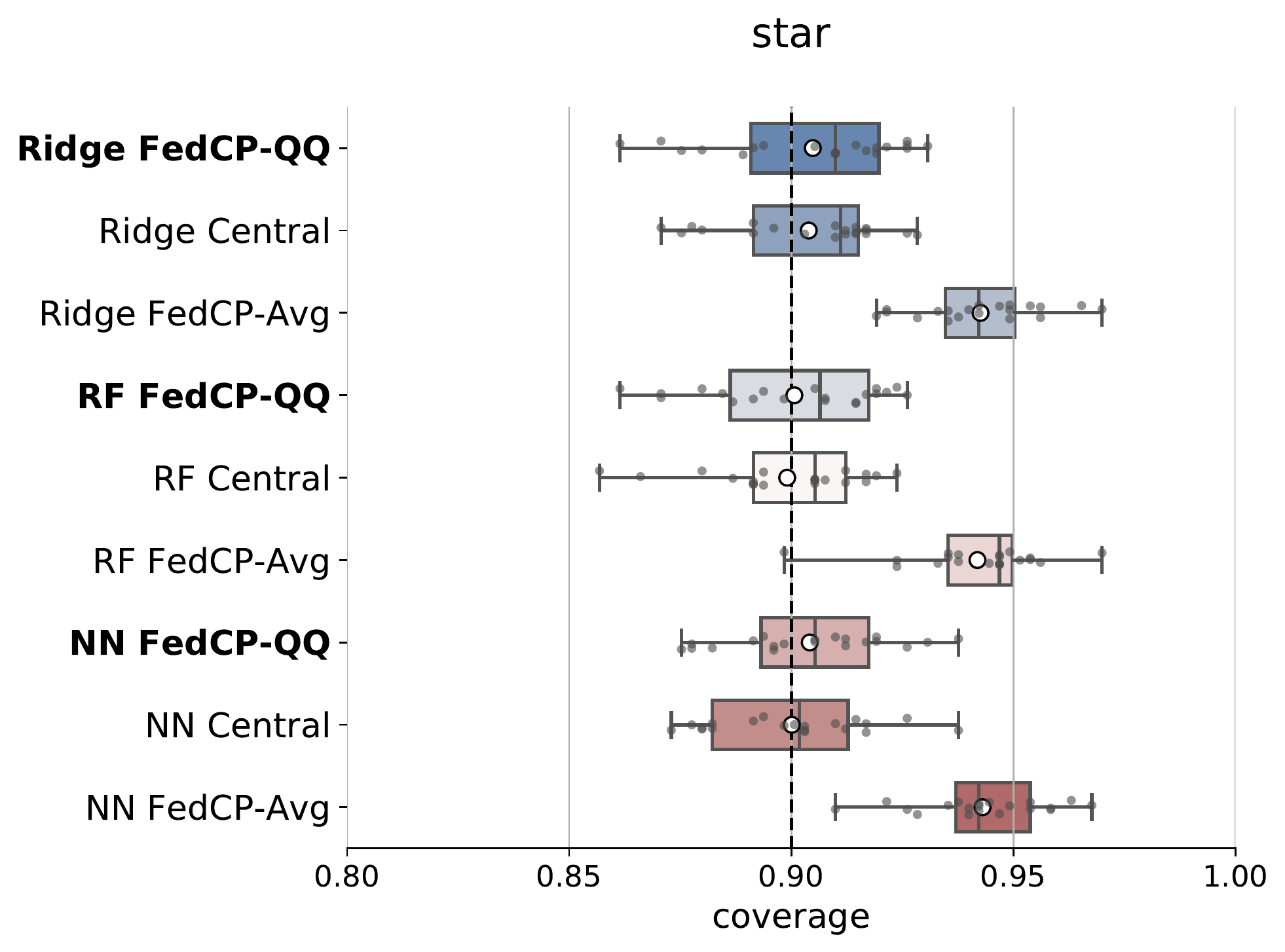}
	\includegraphics[width=.495\linewidth]{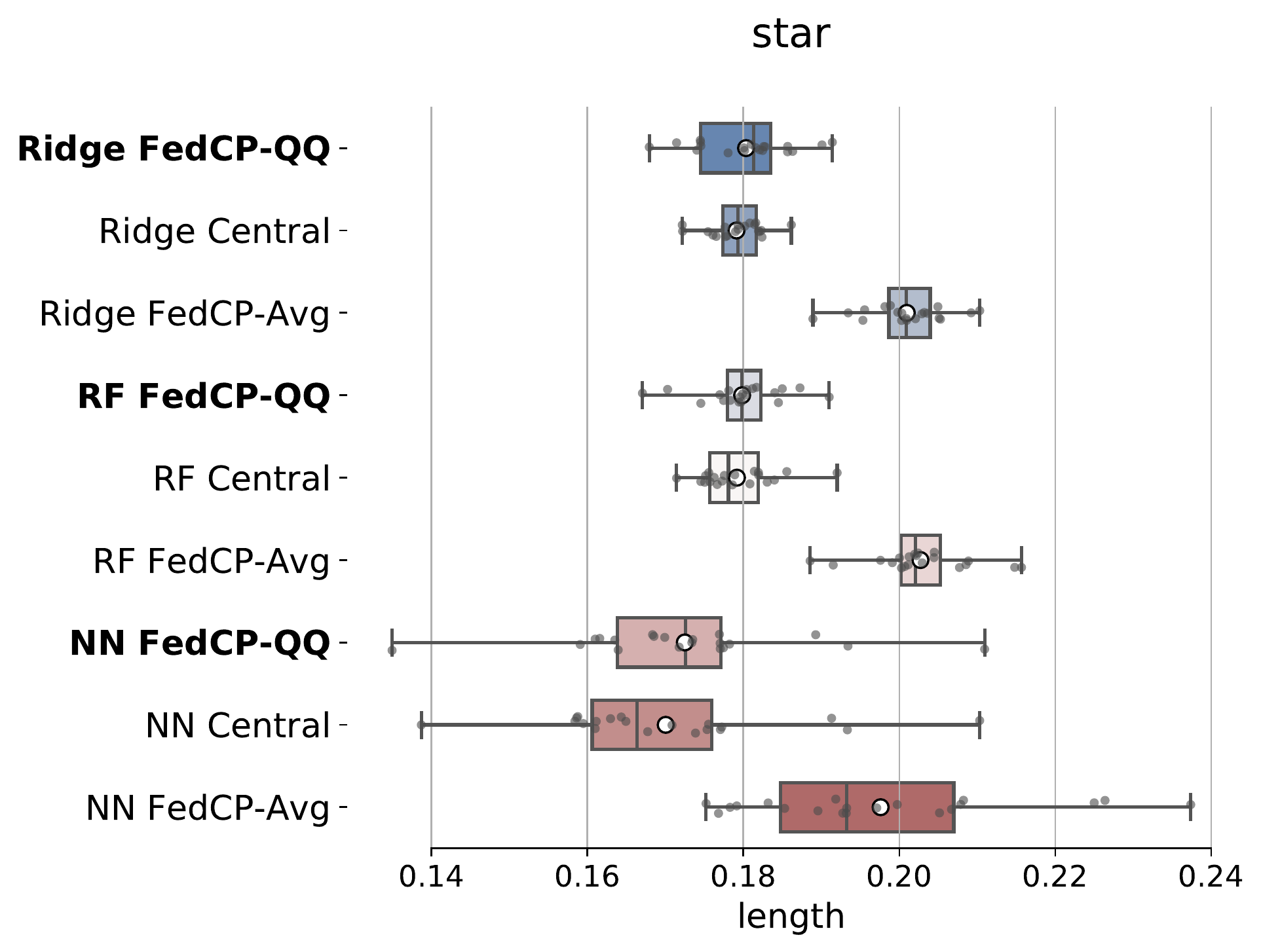}
	\vspace{-1em}
	\caption{Same as Figure \ref{fig:real_xp_bio} (see its caption) 
 with $m=80$ and $n=10$.}
	\label{fig:real_xp_star}
\end{figure}

\begin{figure}[H]
	\centering
	\includegraphics[width=.495\linewidth]{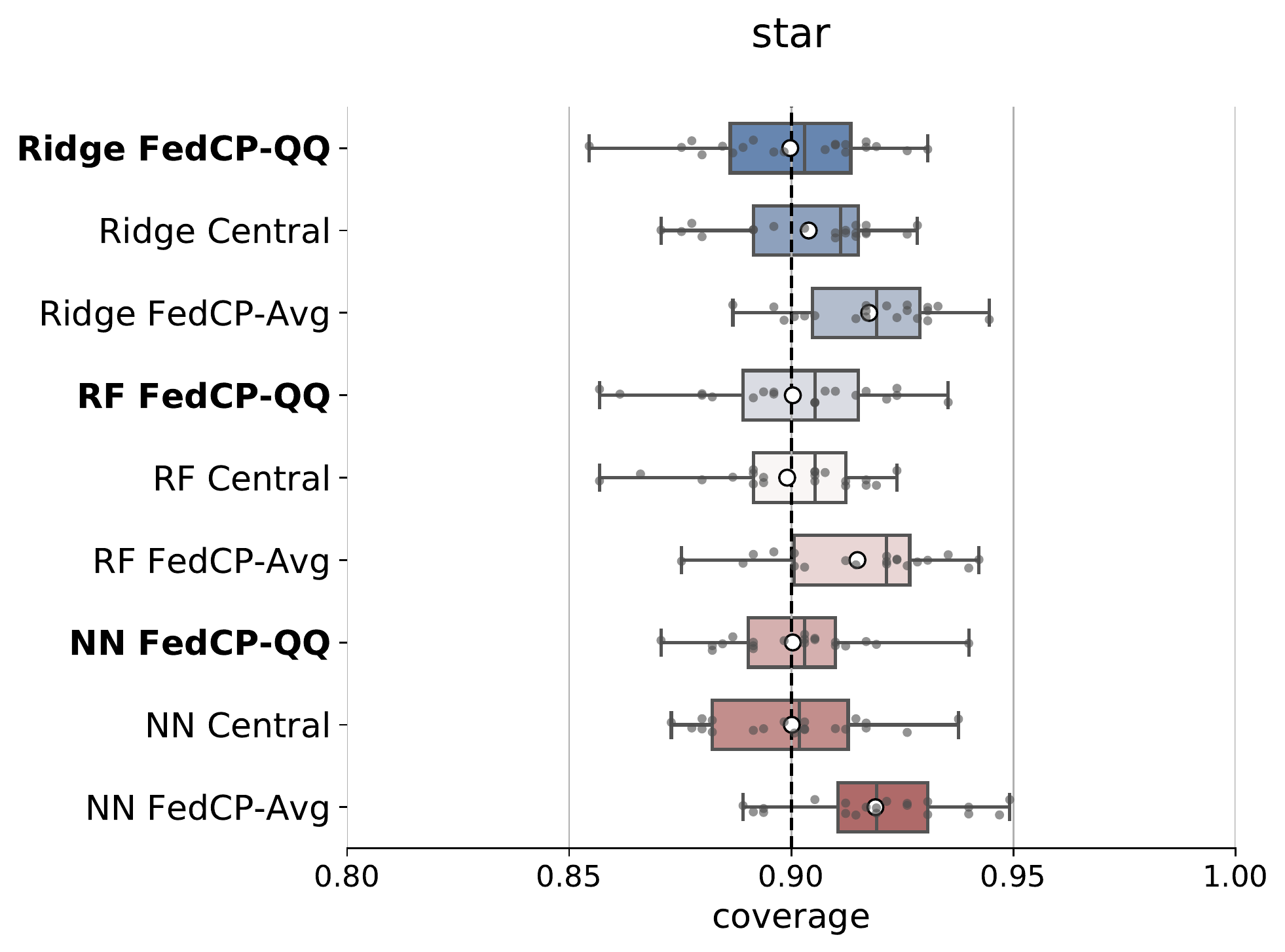}
	\includegraphics[width=.495\linewidth]{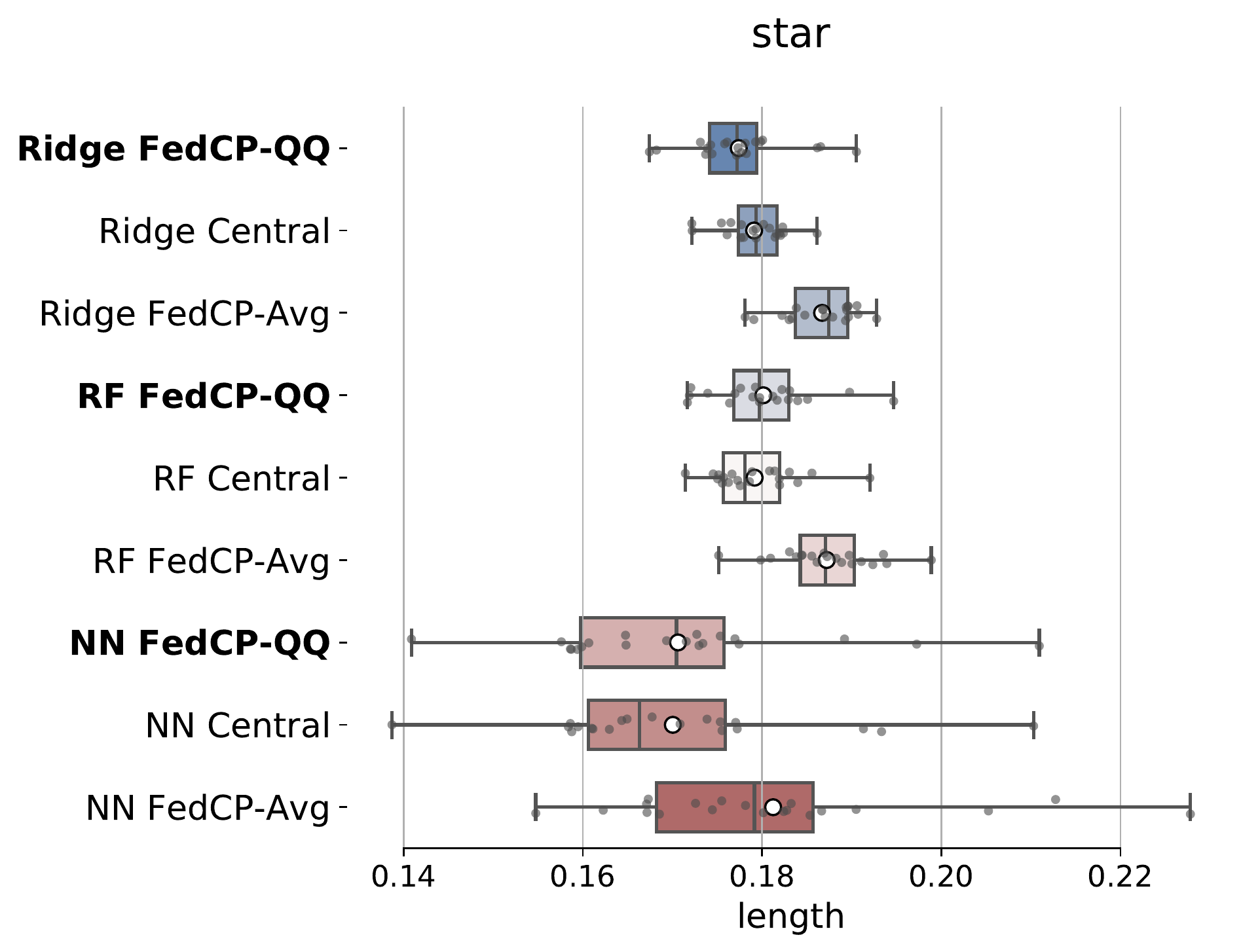}
	\vspace{-1em}
	\caption{Same as Figure \ref{fig:real_xp_bio_low_m} (see its caption) with $m=10$ and $n=80$.}
	\label{fig:real_xp_star_low_m}
\end{figure}

\begin{figure}[H]
	\centering
	\includegraphics[width=.495\linewidth]{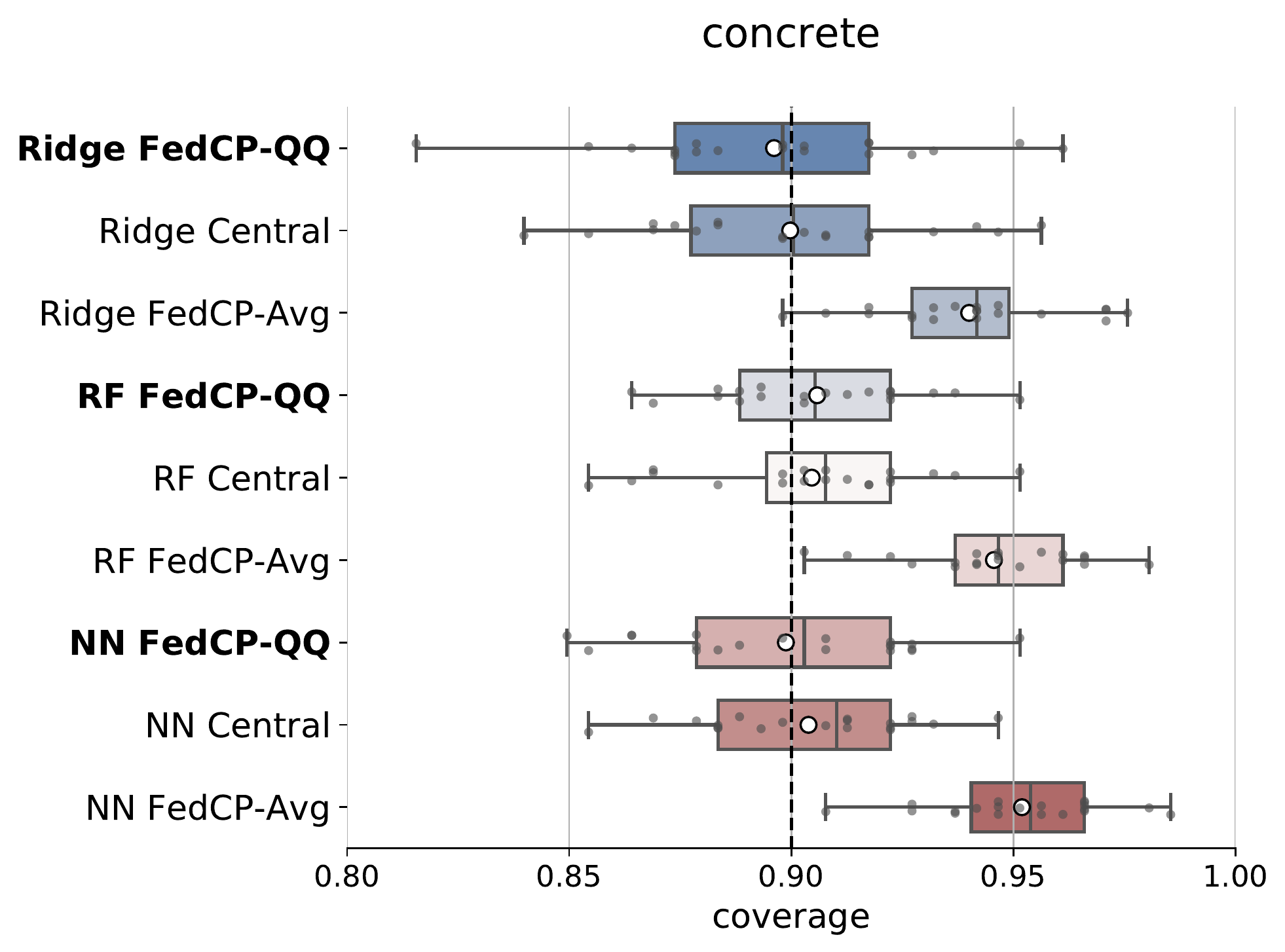}
	\includegraphics[width=.495\linewidth]{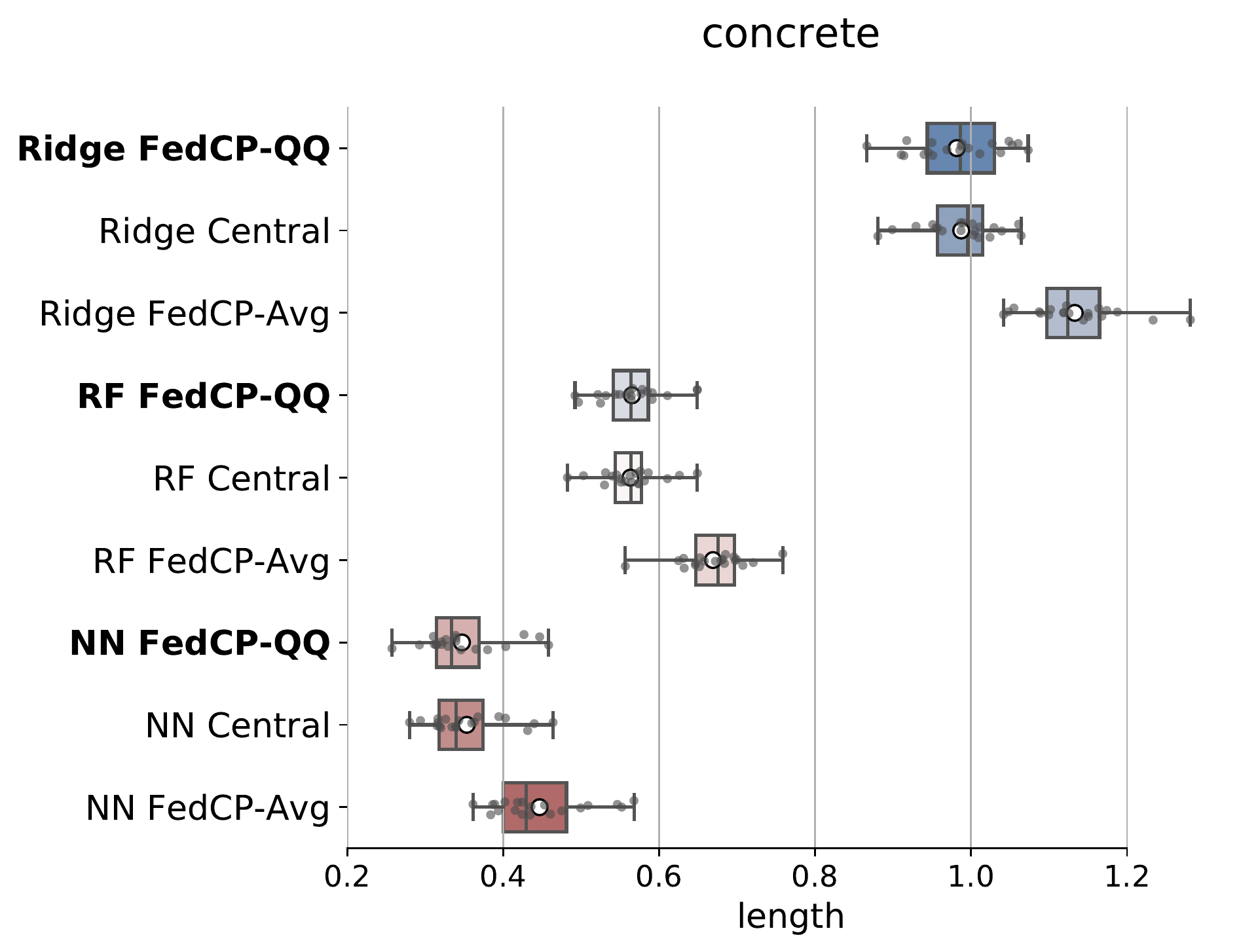}
	\vspace{-1em}
	\caption{Same as Figure \ref{fig:real_xp_bio} (see its caption) 
 with $m=40$ and $n=10$.}
	\label{fig:real_xp_concrete}
\end{figure}

\begin{figure}[H]
	\centering
	\includegraphics[width=.495\linewidth]{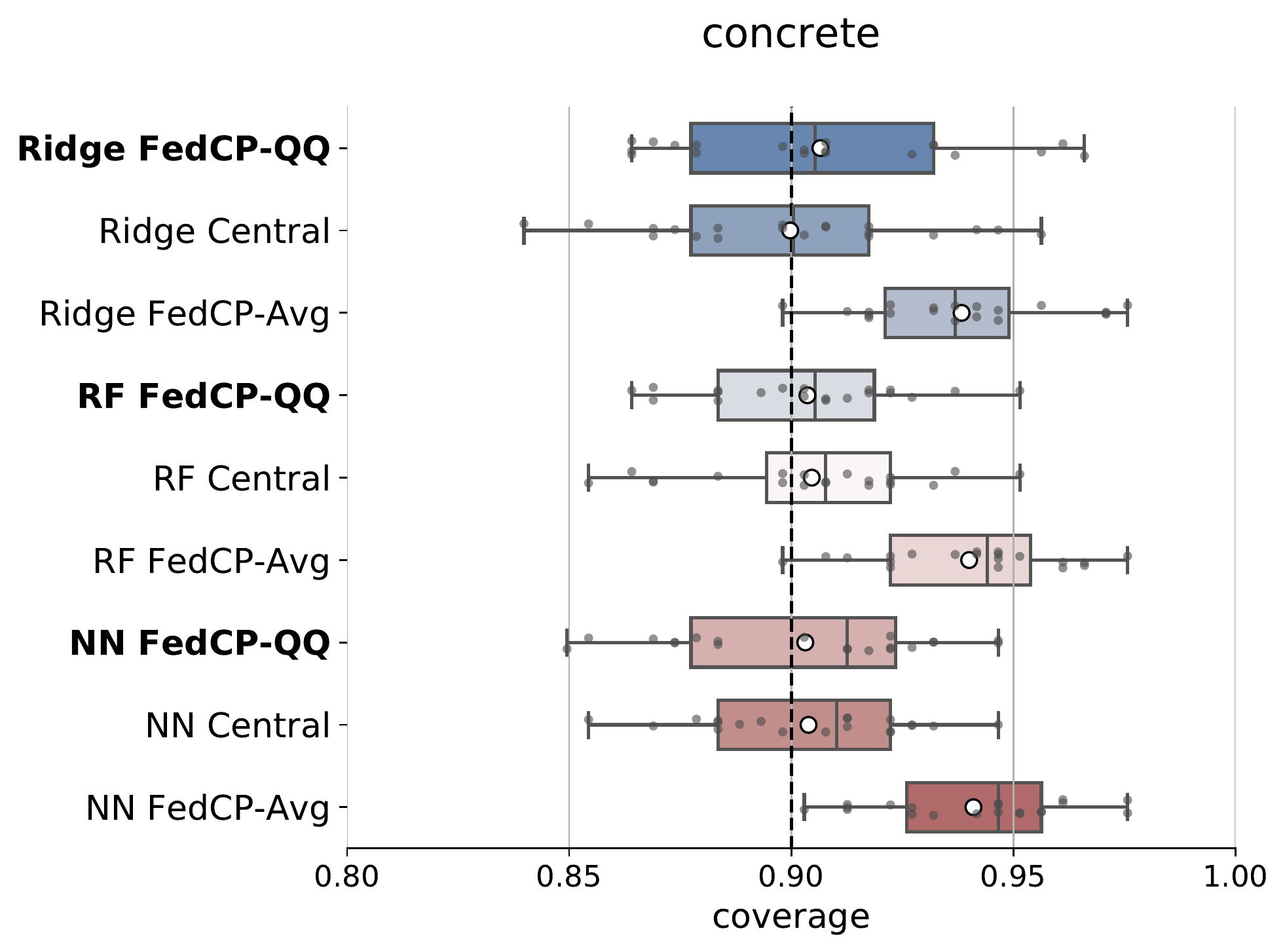}
	\includegraphics[width=.495\linewidth]{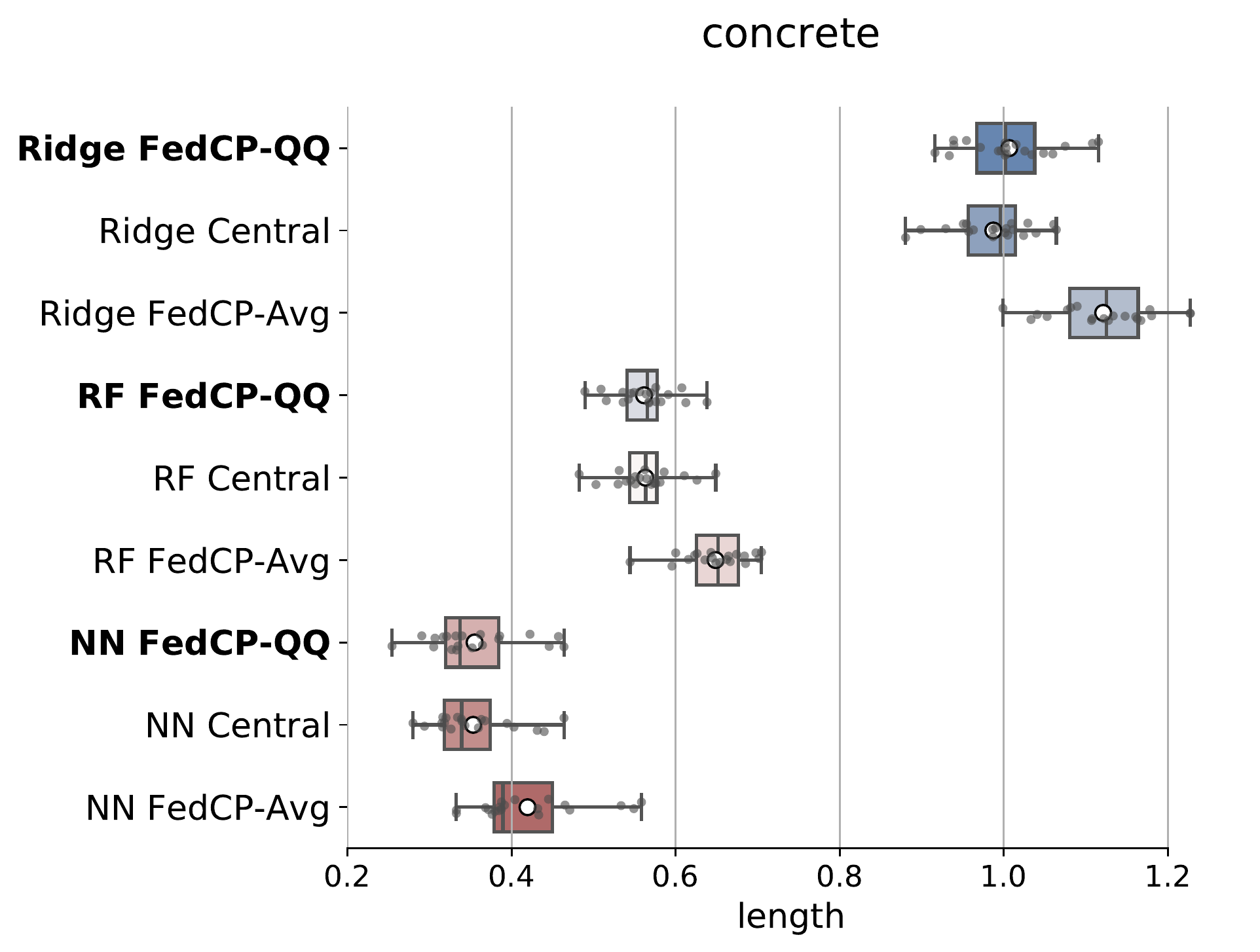}
	\vspace{-1em}
	\caption{Same as Figure \ref{fig:real_xp_bio_low_m} (see its caption) with $m=10$ and $n=40$.}
	\label{fig:real_xp_concrete_low_m}
\end{figure}

\subsection{Experiments with Differential Privacy}
\label{sec:private_xp}
For the sake of completeness, we also evaluate the quality of our private algorithm \methodDP~described in Section~\ref{sec:privacy} on the bio and bike data sets with $m=5$ and $n=200$. The predictor is a quantile RF, the number of bins is set to $B=100$, $S_{max}$ is fixed to the largest score (no clipping), and $\varepsilon = 10, 5, 1$.

Figure~\ref{fig:real_xp_bike_private} displays the empirical coverages obtained over $20$ different random splits. As expected from Theorem~\ref{thme:private}, we observe that on average the desired coverage at 0.90 is well satisfied. However, we also see that the coverages become quickly conservative as the privacy parameter $\varepsilon$ decreases. This suggests that the different corrections introduced to compensate for the extra randomness due to privacy may be overly strong. Finally, we note that these results would be significantly improved with the privacy amplification strategies discussed in Section \ref{sec:privacy}.

\begin{figure}[H]
	\centering
	\includegraphics[width=.495\linewidth]{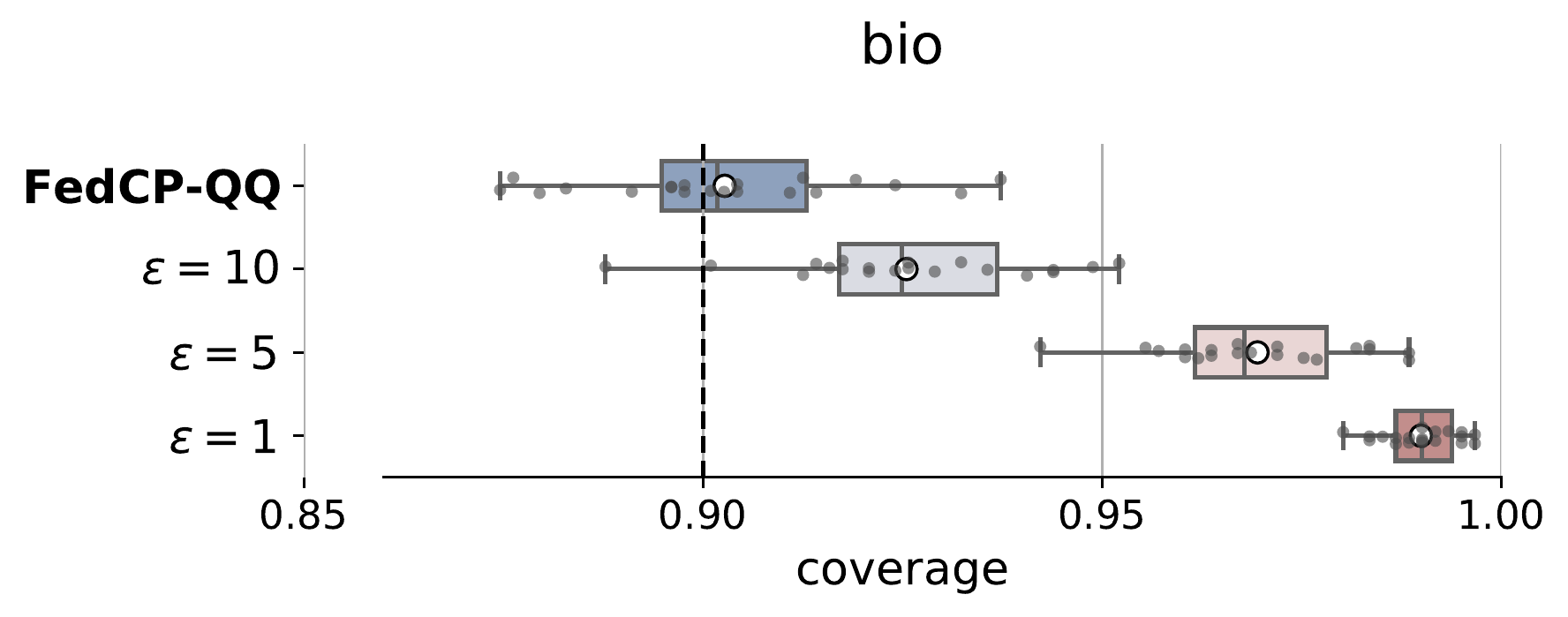}
	\includegraphics[width=.495\linewidth]{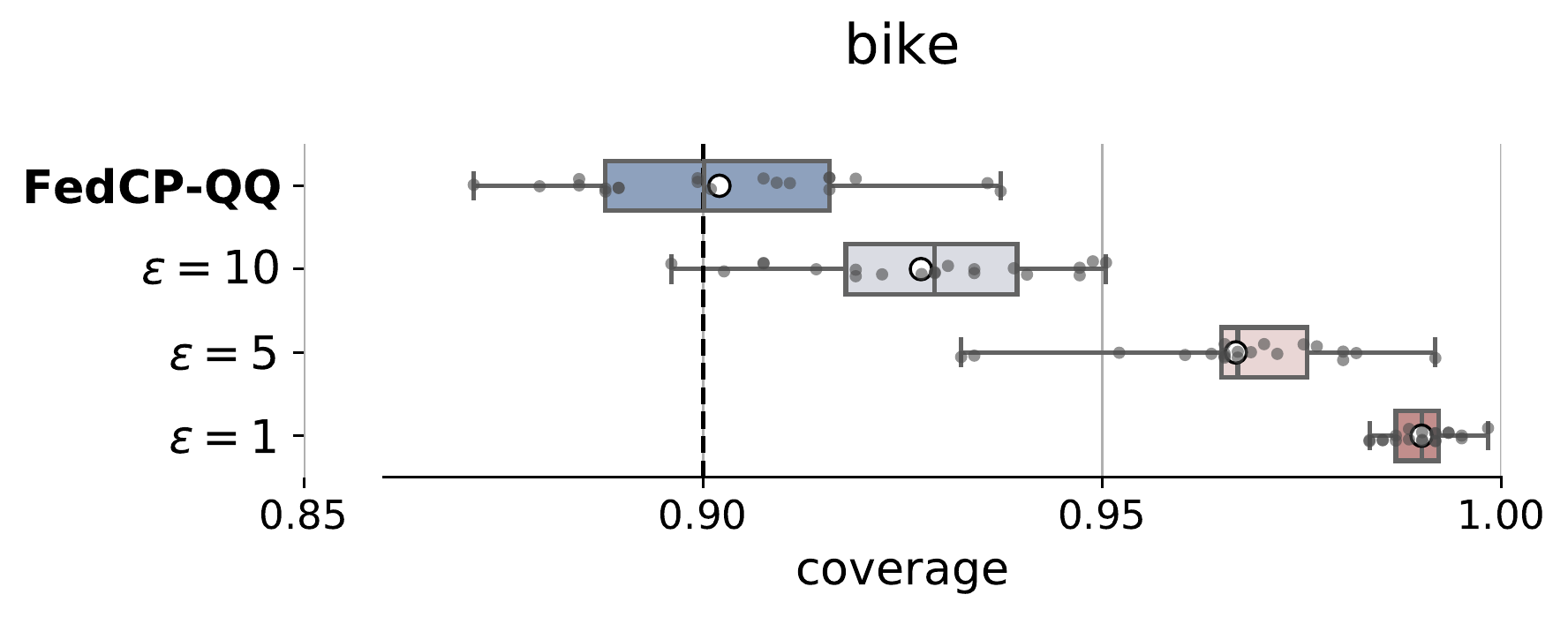}
	\caption{Empirical coverages of prediction intervals ($\alpha= 0.1$) constructed by \method~and its private version \methodDP~for $\varepsilon=10, 5, 1$. On top, coverages for the bio data set, and, on the bottom for the bike data sets. The white circle represents the mean.}
	\label{fig:real_xp_bike_private}
\end{figure}

\section{Proofs}

\subsection{Proof of Theorem \ref{them:main}}
\label{app:proof-them:main}
The proof of our results heavily relies on order statistics. We refer to \citet{david2004order} for an in-depth presentation. We begin by recalling the following important result.

\begin{lemma}
\label{lem.cdf-densite.stat-ordre}
Let $X_1, \ldots, X_n$ be some i.i.d. sample drawn from a continuous distribution with c.d.f. $F_X$ and density $f_X$. If we denote by $X_{(1)} \leq \cdots \leq X_{(n)}$ the corresponding ordered sample, for every $\ell \in \llbracket 1, n \rrbracket$, the c.d.f. and density of $X_{(\ell)}$ are respectively given by
\begin{align*}
    F_{X_{(\ell)}}(x) & = \sum^n_{i=\ell} \binom{n}{i} F_X(x)^i \bigl[ 1 - F_X(x) \bigr]^{n-i} \; ,\\
    f_{X_{(\ell)}}(x) & = \dfrac{n!}{(\ell-1)!(n-\ell)!} f_X(x) F_X(x)^{\ell-1} \bigl[ 1 - F_X(x) \bigr]^{n-\ell} \; .
\end{align*}
\end{lemma}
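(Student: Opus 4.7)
The plan is to treat the c.d.f.\ and the density separately, beginning with the c.d.f.\ via a binomial counting argument and then obtaining the density either by differentiation or by a direct ``placement'' argument. The main observation is that ordering is in bijection with counting: the event $\{X_{(\ell)} \leq x\}$ is exactly the event that at least $\ell$ of the $X_i$ lie in $(-\infty, x]$. None of these steps should be difficult---the result is classical and the continuity assumption handles ties with probability zero---so the emphasis of the write-up should be on making the counting transparent rather than on any delicate analytic estimate.

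For the c.d.f., I would introduce the random variable $N(x) = \sum_{i=1}^n \mathds{1}\{X_i \leq x\}$. By the i.i.d.\ assumption, each indicator is Bernoulli with parameter $F_X(x)$, so $N(x) \sim \mathrm{Bin}(n, F_X(x))$. Since almost surely no two of the $X_i$ coincide (by continuity of $F_X$), the equivalence
\begin{equation*}
\{X_{(\ell)} \leq x\} = \{N(x) \geq \ell\}
\end{equation*}
holds almost surely. Taking probabilities on both sides and plugging in the binomial p.m.f.\ gives
\begin{equation*}
F_{X_{(\ell)}}(x) = \mathbb{P}(N(x) \geq \ell) = \sum_{i=\ell}^n \binom{n}{i} F_X(x)^i \bigl[1 - F_X(x)\bigr]^{n-i},
\end{equation*}
which is the first formula.

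For the density, I see two natural routes and would likely present the telescoping-derivative one because it is the most mechanical. Differentiating each term of the c.d.f.\ using the product rule gives
\begin{equation*}
\frac{d}{dx}\binom{n}{i} F_X^i (1-F_X)^{n-i} = f_X(x)\binom{n}{i}\Bigl[ i F_X^{i-1}(1-F_X)^{n-i} - (n-i) F_X^i (1-F_X)^{n-i-1}\Bigr],
\end{equation*}
and using the identities $\binom{n}{i} i = n\binom{n-1}{i-1}$ and $\binom{n}{i}(n-i) = n\binom{n-1}{i}$, the sum over $i = \ell, \ldots, n$ telescopes; all terms cancel in pairs except the one corresponding to $i = \ell$, leaving exactly $n \binom{n-1}{\ell-1} f_X(x) F_X(x)^{\ell-1}[1-F_X(x)]^{n-\ell}$, which simplifies to the claimed formula. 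The one subtle step to state cleanly is the telescoping cancellation; the alternative route---a direct combinatorial argument counting the $\frac{n!}{(\ell-1)!\,1!\,(n-\ell)!}$ ways to designate one of the $X_i$ to lie in $[x, x+dx]$, $\ell-1$ of them to lie in $(-\infty, x)$, and $n-\ell$ in $(x+dx, +\infty)$---can be sketched in a line as a sanity check. No serious obstacle is expected.
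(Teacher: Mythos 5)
Your proof is correct. Note that the paper does not actually prove this lemma: it is recalled as a classical fact about order statistics with a pointer to \citet{david2004order}, so there is no in-paper argument to compare against. Your derivation is the standard one --- the identity $\{X_{(\ell)} \leq x\} = \{N(x) \geq \ell\}$ with $N(x) \sim \mathrm{Bin}(n, F_X(x))$ in fact holds deterministically (no tie-breaking needed), and the telescoping differentiation via $\binom{n}{i} i = n\binom{n-1}{i-1}$ and $\binom{n}{i}(n-i) = n\binom{n-1}{i}$ correctly leaves only the $i=\ell$ boundary term, yielding $n\binom{n-1}{\ell-1} f_X(x) F_X(x)^{\ell-1}[1-F_X(x)]^{n-\ell}$ as claimed. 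The only pedantic caveat is that the density formula holds almost everywhere (wherever $F_X' = f_X$), which is all that is required of a density.
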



We can now prove Theorem \ref{them:main}.

First, remark that if, conditionally to $\fh$, $(X^{(1)}_1, Y^{(1)}_1), \ldots, (X^{(m)}_n, Y^{(m)}_n), (X, Y)$ are i.i.d., then, conditionally to $\fh$, the associated scores $S_1^{(1)} , \ldots , S_n^{(m)}, S$ are i.i.d. We denote by $F_S$ their c.d.f. (given $\fh$), and make the proof conditionally to $\fh$.

We know that $F^{-1}_S$ is non-decreasing and that if $U \sim U_{[0, 1]}$, $F^{-1}_S(U)$ has the same distribution as $S$ (given $\fh$). Therefore, if $U_1^{(1)} , \ldots , U_n^{(m)} , U$ are independent with a uniform distribution over $[0,1]$, and independent from the data, and if
\[
U_{(\ell,k)} \triangleq \Qh_{(k)} \left( \Qh_{(\ell)} \left( \{ U_i^{(1)} \, , \, i=1,\ldots, n \} \right)  , \ldots , \Qh_{(\ell)} \left( \{ U_i^{(m)} \, , \, i=1,\ldots, n \} \right)   \right)
\]
denotes the corresponding QQ estimator, then $F_S^{-1} (U_{(\ell,k)})$ has the same distribution as $\Qh_{(\ell, k)}$ (given $\fh$). We obtain that 
\begin{align}
\label{eq.pr.them:main.1}
    \IP\left(Y \in \Chat(X) \mid \fh \right) = \IP\left(S \leq \Qh_{(\ell, k)} \mid \fh \right) &= \IP\left(F^{-1}_S(U) \leq F^{-1}_S(U_{(\ell, k)}) \mid \fh \right) \geq \IP\left(U \leq U_{(\ell, k)} \mid \fh \right) \; . 
\end{align}
Furthermore, if $F_S$ is continuous, $F^{-1}_S$ is increasing, and 
\begin{equation} 
\IP\left(F^{-1}_S(U) \leq F^{-1}_S(U_{(\ell, k)}) \mid \fh \right) = \IP\left(U \leq U_{(\ell, k)} \mid \fh \right)
\, . 
 \label{eq.pr.them:main.2}
 \end{equation}
Therefore, it remains to treat the uniform case. 
By Lemma~\ref{lem.cdf-densite.stat-ordre}, we have
\begin{align*}
        F_{U_{(\ell, k)}}(t)
        &=\sum^m_{j=k} \binom{m}{j} F_{U_{(\ell)}}(t)^j \bigl[ 1 - F_{U_{(\ell)}}(t) \bigr]^{m-j} \\
        &= \sum^m_{j=k}\binom{m}{j} \left[\sum^n_{i=\ell} \binom{n}{i} t^i (1 - t)^{n-i}\right]^j \left[1 - \sum^n_{i=\ell} \binom{n}{i} t^i (1 - t)^{n-i}\right]^{m-j} \\
        &= \sum^m_{j=k}\binom{m}{j} \left[\sum^n_{i=\ell} \binom{n}{i} t^i (1 - t)^{n-i}\right]^j \left[\sum^{\ell-1}_{i=0} \binom{n}{i} t^i (1 - t)^{n-i}\right]^{m-j}
        \\
        \text{since} \qquad 
        1 
        &= \sum^n_{i=0} \binom{n}{i} t^i (1 - t)^{n-i} 
        = \sum^{\ell-1}_{i=0} \binom{n}{i} t^i (1 - t)^{n-i}
        + \sum^n_{i=\ell} \binom{n}{i} t^i (1 - t)^{n-i} \, ,
\end{align*}
hence we get that 
\begin{align*} 
F_{U_{(\ell, k)}}(t)
&= \sum^m_{j=k}\binom{m}{j} \sum^{n}_{i_1=\ell} \cdots \sum^{n}_{i_j=\ell} \sum^{\ell-1}_{i_{j+1}=0} \cdots \sum^{\ell-1}_{i_m=0} \binom{n}{i_1}\cdots\binom{n}{i_m} t^{i_1 + \cdots + i_m} (1-t)^{mn-(i_1 + \cdots + i_m)} 
\; . 
\end{align*}
As a consequence, we obtain
\begin{align*}
    \IP\left(U_{(\ell, k)} \leq U \right) &= \IE \bigl[ F_{U_{(\ell,k)}} (U) \bigr] \\
    &= \int^{1}_{0} F_{U_{(\ell, k)}}(t)  \mathrm{d}t \\
    &= \int^{1}_{0} \sum^m_{j=k}\binom{m}{j} \sum^{n}_{i_1=\ell} \cdots \sum^{n}_{i_j=\ell} \sum^{\ell-1}_{i_{j+1}=0} \cdots \sum^{\ell-1}_{i_m=0} \binom{n}{i_1}\cdots\binom{n}{i_m} t^{i_1 + \cdots + i_m} (1-t)^{mn-(i_1 + \cdots + i_m)}  \mathrm{d}t  \\
    &= \sum^{m}_{j=k} \binom{m}{j} \sum^{n}_{i_1=\ell} \cdots \sum^{n}_{i_j=\ell} \sum^{\ell-1}_{i_{j+1}=0} \cdots \sum^{\ell-1}_{i_m=0} \binom{n}{i_1} \cdots \binom{n}{i_m} \mathrm{B}\left(i_1 + \cdots + i_m + 1, m n - (i_1 + \cdots + i_m) + 1\right) \; ,
\end{align*}
where 
\[ 
\mathrm{B} : (a,b) \in (0,+\infty)^2 \mapsto \int_0^1 t^{a-1} (1-t)^{b-1} \mathrm{d}t
\]
denotes the Beta function. 
The identity $\dbinom{a}{b} = \dfrac{1}{(a+1)\mathrm{B}(b+1, a-b+1)}$, with $a=mn$ and $b=(i_1 + \cdots + i_m)$, implies that $\IP\left(U_{(\ell, k)} \leq U \right) = 1 - M_{n,k}$, hence 
\begin{equation}
\label{eq.pr.them:main.3}
    \IP\left(U \leq U_{(\ell, k)} \right) = M_{n,k} \; . 
\end{equation}
By Eq.~\eqref{eq.pr.them:main.1}, we obtain that 
\[ 
\IP \left( Y \in \Chat_{\ell,k}(X) \,\vert\, \fh \right) \geq M_{n,k} 
\]
almost surely, hence Eq.~\eqref{eq:main_equa} by integrating this inequality. When $F_S$ is continuous, Eq. \eqref{eq.pr.them:main.2} and~\eqref{eq.pr.them:main.3} show that
\[ 
\IP \left( Y \in \Chat_{\ell,k}(X) \,\vert\, \fh \right) = M_{n,k}\;,
\]
hence the result. 
\qed


\subsection{Proof of Theorem~\ref{them:cond}}
\label{app.pr.them:cond}
First, let us remark that $\sum_{j=1}^{m} \sum_{i=1}^{n} \One{S^{(j)}_i \leq \Qh_{(\ell, k)}}$ is almost surely greater or equal to $\ell \cdot k$ by definition of $\Qh_{(\ell, k)}$.
Now, following the proof of \citet[Theorem~1]{bian2022training}, 
by definition of the \method~method, we have
\begin{align*}
\bigl\{ Y \in \Chat_{k,\ell} (X) \bigr\}  
&= \left\{ S \leq \widehat{Q}_{(\ell, k)} \right\} \\
&\supseteq \left\{ \sum^m_{j = 1}\sum_{i = 1}^{n} \One{S^{(j)}_i < S } < \sum_{j=1}^{m} \sum_{i=1}^{n} \One{S^{(j)}_i \leq \Qh_{(\ell, k)}} \right\} \\
&\supseteq\left\{ \sum^m_{j = 1}\sum_{i = 1}^{n} \One{S^{(j)}_i < S } < \ell \cdot k\right\} \\
&=\left\{ \sum^m_{j = 1}\sum_{i = 1}^{n} \One{S^{(j)}_i \geq S } \geq mn - \ell \cdot k\right\} \\
&= \left\{ \bar{F}_{mn}(S) \geq \dfrac{mn - \ell \cdot k}{mn} \right\} \; ,
\end{align*}
where $\bar{F}_{mn}(S)$ is the right-tail empirical c.d.f of the $\{ S_i^{(j)} \}_{i,j=1}^{n,m}$ at $S$. Note that this is a random variable in both the data set and $S$. We now have
\begin{align*}
\alpha_P(\mathcal{D}_{mn}) 
&= \IP\left(Y \notin \Chat_{\ell,k}(X) \,\big\vert\, \fh , \mathcal{D}_{mn} \right) 
\\
&\leq \IP \left( \bar{F}_{mn}(S) < 1 - \frac{\ell \cdot k}{mn} \,\bigg\vert\, \fh , \mathcal{D}_{mn} \right) \\
&= \IP \left( \bar{F}_{mn}(S) + \bar{F}_S(S) - \bar{F}_S(S) < 1 - \frac{\ell \cdot k}{mn} \,\bigg\vert\, \fh , \mathcal{D}_{mn} \right) \\
& \leq \IP \left( \bar{F}_S(S) \leq 1 - \frac{\ell \cdot k}{mn} + \sup_{s\in\IR} \left\{ \bar{F}_S(s) - \bar{F}_{mn}(s)\right\} \,\bigg\vert\, \fh ,  \mathcal{D}_{mn} \right) \; .
\end{align*}
Fixing any $\Delta>0$, let us consider the event
\begin{equation*}
\left\{\sup_{s\in\IR} \bigl\{ \bar{F}_S(s) - \bar{F}_{mn}(s)\bigr\} \leq \Delta \right\}\; .
\end{equation*}
Note that it depends of the data $\mathcal{D}_{mn}$. On this event, we have
\[\alpha_P(\mathcal{D}_{mn}) \leq \IP \left( \bar{F}_S(S) \leq 1 - \frac{\ell \cdot k}{mn} + \Delta \,\bigg\vert\, \fh ,  \mathcal{D}_{mn} \right) 
\leq 1 - \frac{\ell \cdot k}{mn} + \Delta  \]
since $\bar{F}_S(S)$ is a valid p-value \citep[Lemma~1]{bian2022training}. As a consequence,
\[ 
\IP\left( \alpha_P(\mathcal{D}_{mn}) > 1 - \frac{\ell \cdot k}{mn} + \Delta \right) 
\leq \IP\left(\sup_{s\in\IR} \bigl\{ \bar{F}_S(s) - \bar{F}_{mn}(s)\bigr\} > \Delta\right) 
\; .
\]
Applying the Dworetzky-Kiefer-Wolfowitz inequality \citep{dvoretzky1956asymptotic, massart1990tight}, the last term is upper-bounded by $\delta \in (0, 0.5]$ when we choose $\Delta =\sqrt{\frac{\log(1/\delta)}{2 mn}}$. Finally, for $\ell \cdot k \geq (1-\alpha) \cdot mn$, we have
\[
\IP\left( \alpha_P(\mathcal{D}_{mn}) \leq \alpha + \sqrt{\frac{\log(1/\delta)}{2 mn}} \right) 
\geq \IP\left( \alpha_P(\mathcal{D}_{mn}) \leq 1 - \frac{\ell \cdot k}{mn} + \sqrt{\frac{\log(1/\delta)}{2 mn}} \right) \geq 1-\delta \; .
\]
\qed


\subsection{Proof of Proposition \ref{prop:hetero}}
\label{app.pr.prop:hetero}
All the proof is made conditionally to the predictor $\fh$, which means that we prove below that 
\begin{equation}
\label{eq.pr.prop:hetero.res-cond-Dtr-explicite}
\IP\bigl( Y \in \Chat_{\ell^\star, k^\star}(X) \, \vert \, \fh \, \bigr) 
\geq 1-\alpha - \IE \biggl[ 
\dTV \Bigl( 
\PoisBin \bigl( p^* (S) \bigr)
 \, , \, 
\Bin\bigl( m , \tilde{p}^*(\widetilde{S}) \bigr)
\Bigr) 
\, \Big\vert \, \fh \, \biggr]   
\, . 
\end{equation}
The result follows by taking an expectation. 
In the remainder of the proof, for simplicity, we write $\IP(\cdot)$ and $\IE[\cdot]$ instead of $\IP(\cdot \, \vert \, \fh \, )$ and $\IE[\cdot \, \vert \, \fh \,]$, respectively. 

First, for every $k \in \llbracket 1, m \rrbracket$ and $\ell \in \llbracket 1, n \rrbracket$, by definition of $\Chat_{\ell, k}$, we have
\begin{align}
\label{eq.pr.prop:hetero.1}
\IP\bigl(Y \notin \Chat_{\ell, k}(X) \bigr)
= \IP\left(\Qh_{(\ell, k)} < S \right)
= \IP\biggl( \underbrace{ \sum_{j=1}^{m}\One{\Qh_{(\ell)}(\calS^{(j)}) < S} }_{\triangleq W} \geq k \biggr)
\, .
\end{align}
Similarly,
\begin{equation}
\label{eq.pr.prop:hetero.2}
\IP\left(\Qh_{(\ell, k)} \left( \widetilde{\calS}^{(1)} , \ldots, \widetilde{\calS}^{(m)} \right) < \widetilde{S}  \right)
= \IP\biggl( \underbrace{ \sum_{j=1}^{m}\One{\Qh_{(\ell)}(\widetilde{\calS}^{(j)}) < \widetilde{S}} }_{\triangleq \widetilde{W}} \geq k \biggr) \; .
\end{equation}
Given $S$ (and $\fh$\,), the random variables $\OneSm{\Qh_{(\ell)}(\calS^{(j)}) < S}$, $j=1, \ldots, m$, are independent Bernoulli random variables with respective parameters $p_j(S,\ell) \triangleq \IP(S^{(j)}_{(\ell)} \leq S \vert S )$, so their sum $W$ follows the $\PoisBin ( p(S,\ell) ) $ distribution, where $p(S,\ell) \triangleq (p_1(S,\ell), \ldots, p_m(S,\ell))$. 
Given $\tilde{S}$ (and $\fh$\,), the random variables $\OneSm{\Qh_{(\ell)}(\widetilde{\calS}^{(j)}) < \widetilde{S}}$, $j=1 , \ldots, m$, are i.i.d. Bernoulli random variables with common parameter $\tilde{p}(\widetilde{S},\ell) \triangleq \IP(\widetilde{S}^{(1)}_{(\ell)} \leq \widetilde{S} \vert \widetilde{S})$, so their sum $\widetilde{W}$ follows the $\Bin (m , \widetilde{p}(S,\ell)) $ distribution. 
As a consequence, we have 
\begin{align*}
\IP( W \geq k \, \vert \, S ) - \IP( \widetilde{W} \geq k \, \vert \, \widetilde{S} )
&= \PoisBin \bigl( p (S,\ell) \bigr) \bigl( [k,+\infty) \bigr)
- \Bin\bigl( m , \tilde{p}(\widetilde{S},\ell) \bigr) \bigl( [k,+\infty) \bigr)
\\
&\leq \dTV \Bigl( 
\PoisBin \bigl( p (S,\ell) \bigr) 
\, , \,
\Bin\bigl( m , \tilde{p}(\widetilde{S},\ell) \bigr) 
\Bigr) 
\, ,
\end{align*}
by definition of the total-variation (TV) distance 
$\dTV (\mu, \nu) = \sup_{A \text{ measurable}} \bigl\{ \mu(A) - \nu(A) \bigr\}$ for any probability distributions $\mu$ and $\nu$. \\
Taking an expectation and using Eq. \eqref{eq.pr.prop:hetero.1} and~\eqref{eq.pr.prop:hetero.2}, we get that 
\begin{align*}
\IP\bigl(Y \notin \Chat_{\ell, k}(X) \bigr)
&= 
\IP( \widetilde{W} \geq k ) + \IP( W \geq k ) - \IP( \widetilde{W} \geq k ) 
\\
&\leq 
\IP\left(\Qh_{(\ell, k)} \left( \widetilde{\calS}^{(1)} , \ldots, \widetilde{\calS}^{(m)} \right) < \widetilde{S}  \right) 
+ \IE \biggl[ \dTV \Bigl( 
\PoisBin \bigl( p (S,\ell) \bigr) 
\, , \, 
\Bin\bigl( m , \tilde{p}(\widetilde{S},\ell) \bigr) 
\Bigr)  \biggr]
\\
&\leq 1- M_{\ell,k} 
+ \IE \biggl[ \dTV \Bigl( 
\PoisBin \bigl( p (S,\ell) \bigr) 
\, , \, 
\Bin\bigl( m , \tilde{p}(\widetilde{S},\ell) \bigr) 
\Bigr)  \biggr] \; ,
\end{align*}
by Theorem~\ref{them:main}, which applies here since $\widetilde{S}_1^{(1)}, \ldots, \widetilde{S}_n^{(m)}, \widetilde{S}$ are i.i.d., conditionally to $\fh$.
Therefore, 
\begin{align} 
\label{eq.pr.prop:hetero.pre-concl}
\IP\bigl(Y \in \Chat_{\ell, k}(X) \bigr)
\geq M_{\ell,k} 
- \IE \biggl[ \dTV \Bigl( 
\PoisBin \bigl( p (S,\ell) \bigr) 
\, , \, 
\Bin\bigl( m , \tilde{p}(\widetilde{S},\ell) \bigr) 
\Bigr)  \biggr]
\, , 
\end{align}
which implies the result by taking $(\ell,k) = (\ell^* , k^*)$ since $ M_{\ell^*,k^*} \geq 1-\alpha$. 
\qed

\begin{remark}
\label{rk.prop:hetero.choix-Stilde}
In Proposition~\ref{prop:hetero}, let us emphasize that the auxiliary random variables $\{\widetilde{S}^{(j)}_i\}^{n, m}_{i, j=1}, \widetilde{S}$ can be dependent on the scores $\{S^{(j)}_i\}^{n, m}_{i, j=1}, S$, as long as they satisfy the only assumption required: $\{\widetilde{S}^{(j)}_i\}^{n, m}_{i, j=1}, \widetilde{S}$ must be i.i.d. given $\fh$.  
One also can choose the common distribution of the $\{\widetilde{S}^{(j)}_i\}^{n, m}_{i, j=1}, \widetilde{S}$. Here, the best choice is the one that maximizes the right-hand side of Eq.~\eqref{eq.pr.prop:hetero.res-cond-Dtr-explicite}.  
We conjecture that a good choice is to take $\widetilde{S}=S$, and to define the $\widetilde{S}^{(j)}_i$ as independent copies of~$S$ (given $\fh$). 
\end{remark}

\medbreak
Finally, let us recall a result from \citet[Theorem~1]{ehm1991binomial} which can be useful to control the right-hand side of Eq.~\eqref{eq.pr.prop:hetero.res-cond-Dtr-explicite}.
\begin{theorem}
\label{thme:TVpoibin}
Let $m \geq 1$ be an integer, $p_1, \ldots, p_m \in [0,1]$ and $\tilde{p} = \frac{1}{m}\sum^m_{j=1} p_j$. Let $\Bin(m, \tilde{p})$ denote the binomial distribution and $\PoisBin (m, (p_1, \cdots, p_m))$ denote the Poisson-binomial distribution. The following inequalities hold true: 
\begin{align*}
C \bigl[ 1-\tilde{p}^{m+1}-(1-\tilde{p})^{m+1} \bigr] \cdot \left[ 1 - \dfrac{\sum^m_{i=1} p_i (1-p_i)}{m \tilde{p}(1-\tilde{p})} 
 \right]
&\leq \dTV \Bigl( 
\PoisBin (p_1, \cdots, p_m) 
\, , \, 
\Bin (m, \tilde{p}) 
\Bigr) 
\\
&\leq \dfrac{m}{m+1} \bigl[ 1-\tilde{p}^{m+1}-(1-\tilde{p})^{m+1} \bigr] 
\cdot 
\left[ 1 - \dfrac{\sum^m_{i=1} p_i (1-p_i)}{m \tilde{p}(1-\tilde{p})} 
 \right]
\, ,
\end{align*}
where $\dTV(\cdot, \cdot)$ is the total-variation distance, and $C$ is a universal constant. 

\end{theorem}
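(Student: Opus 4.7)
The plan is to prove both inequalities using the Chen--Stein method with the binomial as target distribution. Let $X_1,\dots,X_m$ be independent Bernoulli with parameters $p_1,\dots,p_m$, set $W = \sum_{i=1}^m X_i \sim \PoisBin(p_1,\dots,p_m)$, and $Z \sim \Bin(m,\tilde p)$. Introduce the binomial Stein operator $(\mathcal{A}g)(k) = (m-k)\tilde p\, g(k+1) - k(1-\tilde p)\, g(k)$, which annihilates $Z$ in expectation. For every $A\subseteq\{0,\dots,m\}$ the Stein equation $(\mathcal{A}g_A)(k) = \One{k\in A} - \IP(Z\in A)$ admits a unique bounded solution $g_A$, and evaluating at $W$ yields the identity $\IP(W\in A) - \IP(Z\in A) = \IE[(\mathcal{A}g_A)(W)]$, so $\dTV(\PoisBin,\Bin) = \sup_A |\IE[(\mathcal{A}g_A)(W)]|$.

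Exploiting independence of the $X_i$, a direct computation with $W^{(i)} = W - X_i$ yields
\[
\IE[(\mathcal{A}g_A)(W)] = \sum_{i=1}^m (\tilde p - p_i)\, \IE\bigl[g_A(W^{(i)}+1)\bigr].
\]
The key algebraic observation is that $\sum_i(\tilde p - p_i) = 0$ kills the first-order term; after subtracting an $i$-independent reference value and pairing indices $(i,j)$, the remainder can be recast in terms of first differences $\Delta g_A(k) = g_A(k+1) - g_A(k)$ and reorganized into a bilinear form whose leading coefficient is $\sum_{i=1}^m (p_i - \tilde p)^2$. The identity $\sum_{i=1}^m (p_i - \tilde p)^2 = m\tilde p(1-\tilde p) - \sum_{i=1}^m p_i(1-p_i)$ then produces the heterogeneity factor $[1 - \sum_i p_i(1-p_i)/(m\tilde p(1-\tilde p))]$ that appears on both sides of the claim.

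The analytic heart of the upper bound is the sharp binomial Stein factor estimate
\[
\sup_{A,\,k}\, |\Delta g_A(k)| \leq \frac{1 - \tilde p^{\,m+1} - (1-\tilde p)^{m+1}}{(m+1)\, \tilde p\, (1-\tilde p)},
\]
which can be derived from the explicit solution of the Stein equation together with a monotonicity/convexity analysis of $k \mapsto \IP(Z\leq k)\IP(Z>k)$; combining it with the bilinear bound above gives the upper inequality, the $m/(m+1)$ constant emerging from the denominator of the Stein factor cancelling one power of $m$. For the matching lower bound I would test the signed difference $\IP(W\in A) - \IP(Z\in A)$ against the maximizing set $A^\star = \{k : \IP(W=k) \geq \IP(Z=k)\}$, then expand the probability generating functions $\prod_i(1-p_i+p_iz)$ and $(1-\tilde p+\tilde p z)^m$ around the common mean: the first-order perturbation vanishes by $\sum_i(p_i-\tilde p)=0$ and the leading second-order residual is exactly proportional to $\sum_i(p_i-\tilde p)^2$. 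Extracting coefficients of this residual and evaluating on $A^\star$ reproduces both the truncation and the heterogeneity factors, with the universal constant $C$ emerging from a normalization of the second derivative of $(1-\tilde p+\tilde p z)^m$ at $z=1$.

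The principal obstacle is establishing the sharp Stein factor inequality with the precise numerator $1-\tilde p^{m+1}-(1-\tilde p)^{m+1}$: crude $L^\infty$ bounds of order $[m\tilde p(1-\tilde p)]^{-1}$ are much easier but miss the truncation factor that makes the bound tight when $\tilde p$ is close to $0$ or $1$, and recovering both simultaneously requires carefully tracking boundary contributions in the explicit formula for $g_A$ at $k=0$ and $k=m+1$. A secondary difficulty arises on the lower-bound side, where one must verify that $A^\star$ captures the full second-order discrepancy rather than only a fraction of it; this requires an asymptotic localization around the mean $m\tilde p$ so that the pgf expansion and the total-variation supremum agree up to a universal constant.
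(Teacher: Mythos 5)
First, note that the paper does not prove this statement at all: it is quoted verbatim as Theorem~1 of \citet{ehm1991binomial} and used as a black box to control the total-variation term in Proposition~\ref{prop:hetero}, so there is no internal proof to compare against. Judged on its own merits, your outline of the upper bound follows what is in fact Ehm's original Stein-method argument, and the algebraic skeleton is sound: the identity $\IE[(\mathcal{A}g_A)(W)]=\sum_i(\tilde p-p_i)\IE[g_A(W^{(i)}+1)]$ is correct, the cancellation $\sum_i(\tilde p-p_i)=0$ plus pairing of indices gives the exact representation $\IP(W\in A)-\IP(Z\in A)=\frac{1}{2m}\sum_{i\neq j}(p_i-p_j)^2\,\IE[\Delta g_A(W^{(ij)}+1)]$ with $W^{(ij)}=W-X_i-X_j$, and $\frac{1}{2m}\sum_{i\neq j}(p_i-p_j)^2=\sum_i(p_i-\tilde p)^2=m\tilde p(1-\tilde p)-\sum_i p_i(1-p_i)$ produces exactly the heterogeneity factor in the statement. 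Combined with the Stein factor bound you quote, this does yield the stated upper bound, constant included.

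The gaps are the two analytic cores, which you assert rather than prove. The bound $\sup_{A,k}|\Delta g_A(k)|\leq\bigl[1-\tilde p^{\,m+1}-(1-\tilde p)^{m+1}\bigr]/\bigl[(m+1)\tilde p(1-\tilde p)\bigr]$ is the entire difficulty of the upper bound; you correctly identify it as the principal obstacle but give no argument for it, so the proof is not complete. More seriously, your route to the lower bound would not work as described: testing against $A^\star=\{k:\IP(W=k)\geq\IP(Z=k)\}$ merely restates the definition of $\dTV$ and gives no handle on its value, and an expansion of the probability generating functions around $z=1$ controls (factorial) moments, which are local data at $z=1$ and cannot by themselves produce the global truncation factor $1-\tilde p^{\,m+1}-(1-\tilde p)^{m+1}$; a second-moment discrepancy does not lower-bound a total-variation distance without additional smoothness input. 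The workable route (and Ehm's) is to reuse the exact pairing identity above with an explicit test set $A$ for which $\IE[\Delta g_A(W^{(ij)}+1)]$ can be lower-bounded, uniformly in $(i,j)$, by a constant multiple of the same Stein factor; this is where the universal constant $C$ actually comes from. As written, the proposal establishes neither inequality.
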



\subsection{Proof of Theorem \ref{thme:private}}
\label{app:privacy-thme}

The privacy guarantee is a direct consequence of the fact that Algorithm \ref{alg:DP-ordered} is $\varepsilon$-DP (exponential mechanism). Indeed, each agent calls this algorithm only one time during \methodDP, making it $\varepsilon$-DP with respect to its local data set ($\varepsilon$-LDP).

It remains to prove that the desired coverage is achieved. To do so, recall the following utility lemma related to the output of Algorithm \ref{alg:DP-ordered} \citep{Angelopoulos_2022}.

\begin{lemma} \label{lemma:utility-dp}\emph{(Utility of Algorithm \ref{alg:DP-ordered}).} For any $\delta \in (0,1)$, scores $S_1,\ldots,S_n$ and $q \in [0.5, 1)$, the output of Algorithm \ref{alg:DP-ordered}, denoted $\widehat{Q}_1^\varepsilon$, satisfies:
\begin{equation}
    \IP\Bigg(\frac{|\{i:\bar{S}_i \leq \widehat{Q}_1^\varepsilon \}|}{n} \geq q - \frac{2\log{(B/\delta)}}{n\varepsilon} \Bigg| S_1,\ldots,S_n \Bigg) \geq 1- \delta
    \, .
\end{equation}
\end{lemma}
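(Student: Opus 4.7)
The plan is to apply the standard utility guarantee for the exponential mechanism (McSherry--Talwar) and then translate the resulting bound on the weight of the output into the claimed bound on the empirical mass of $\{i : \bar{S}_i \leq \widehat{Q}_1^\varepsilon\}$. Since Algorithm~\ref{alg:DP-ordered} is exactly the exponential mechanism applied to the utility $u(e_b) = -w_b$ with sensitivity $\Delta_q$ over $B$ candidate bins, the standard analysis yields
\[
\IP\bigl( w(\widehat{Q}_1^\varepsilon) > w^\star + t \,\big\vert\, S_1,\ldots,S_n \bigr) \leq B \exp\!\left( -\frac{\varepsilon t}{2\Delta_q} \right) ,
\]
where $w^\star = \min_b w_b$. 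Setting the right-hand side equal to $\delta$ gives $t = 2\Delta_q \log(B/\delta)/\varepsilon$, so with probability at least $1-\delta$, conditionally on the scores, $w(\widehat{Q}_1^\varepsilon) \leq w^\star + 2\Delta_q \log(B/\delta)/\varepsilon$.

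Second, I would show that $w^\star \leq n$ by exhibiting a bin that already achieves a small weight. Let $b^+$ be the bin containing the empirical $q$-quantile of the discretized scores, i.e., the unique index satisfying $|\{i:\bar{S}_i < e_{b^+}\}| < qn$ and $|\{i:\bar{S}_i \leq e_{b^+}\}| \geq qn$. The second inequality rewrites as $|\{i:\bar{S}_i > e_{b^+}\}| \leq (1-q)n$. Both arguments of the max in the definition of $w_{b^+}$ are then bounded by $n$, so $w^\star \leq w_{b^+} \leq n$. Combining with the previous display, $w(\widehat{Q}_1^\varepsilon) \leq n + 2\Delta_q \log(B/\delta)/\varepsilon$ with probability at least $1-\delta$.

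Finally, writing $\widehat{Q}_1^\varepsilon = e_b$ and controlling the second argument of the max in the definition of $w_b$, one obtains
\[
|\{i:\bar{S}_i > e_b\}| \leq (1-q)\bigl( n + 2\Delta_q \log(B/\delta)/\varepsilon \bigr) .
\]
At this point I would invoke the assumption $q \in [1/2, 1)$, which forces $\Delta_q = 1/(1-q)$, so that the $(1-q)\Delta_q$ factor collapses to one and the right-hand side simplifies to $(1-q)n + 2\log(B/\delta)/\varepsilon$. Dividing by $n$ and taking complements yields $|\{i:\bar{S}_i \leq e_b\}|/n \geq q - 2\log(B/\delta)/(n\varepsilon)$, which is the claim. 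The main subtlety, and the exact place where the hypothesis $q \geq 1/2$ is essential, is this last cancellation: without it, the correction would scale as $2\Delta_q\log(B/\delta)/(n\varepsilon)$ and would blow up as $q$ approaches $0$. The clean constant in the stated bound is therefore a direct consequence of the choice of sensitivity $\Delta_q$ in Algorithm~\ref{alg:DP-ordered} together with the restriction to $q \geq 1/2$.
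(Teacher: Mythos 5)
Your proof is correct, and it follows exactly the route the paper takes: the paper's ``proof'' of this lemma is a direct citation of \citet[Lemma~1]{Angelopoulos_2022}, which is itself the standard exponential-mechanism utility guarantee \citep[Corollary~3.12]{dwork2014algorithmic} combined with the observation that the empirical $q$-quantile bin has weight at most $n$ and that $q\geq 1/2$ makes $(1-q)\Delta_q=1$. You have simply written out in full the argument the paper outsources to that reference (correctly keeping everything conditional on $S_1,\ldots,S_n$, which is the one point the paper explicitly remarks on).
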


\begin{proof}
    The proof, provided by \citet[Lemma~1]{Angelopoulos_2022}, is a direct application of the utility guarantee of the general exponential mechanism \citep[see][Corollary $3.12$]{dwork2014algorithmic}. Note that \citet[Lemma~1]{Angelopoulos_2022} state the above result on average over $S_1,\ldots,S_n$, but their proof is actually valid conditionally to $S_1,\ldots,S_n$ since the original result by \citet[Corollary $3.12$]{dwork2014algorithmic} is valid conditionally to $S_1,\ldots,S_n$.
\end{proof}

We can now prove our main result. Let us first define the event $E=\{\widehat{Q}^\varepsilon \geq \widehat{Q}_{(\ell_\gamma, k_\gamma)}\}$, i.e., when the private estimator $\widehat{Q}^\varepsilon = \widehat{Q}^\varepsilon_{(k_{\gamma})}$ returned by \methodDP~is greater than the non-private estimator $\widehat{Q}_{(\ell_\gamma, k_\gamma)}$ that would be returned by \method~(Algorithm~\ref{alg:CP-QQ}) with coverage $\frac{1-\alpha}{1-\gamma \alpha}$. Denoting by $\calS^{(1\ldots m)}$ the full data set containing all local data sets of scores $\calS^{(1)},\ldots,\calS^{(m)}$, we have:
\begin{align}
    \IP\bigl(Y \in \Chat_\varepsilon(X)\bigr) 
    = \IP\bigl(S \leq \widehat{Q}^\varepsilon \bigr) 
    = \IE \Bigl[\IP\bigl(S \leq \widehat{Q}^\varepsilon \,\big\vert\, \calS^{(1\ldots m)} \bigr) \Bigr]
    & \geq \IE \Bigl[\IP\bigl(S  \leq \widehat{Q}^\varepsilon \text{ and } E \,\big\vert\, \calS^{(1\ldots m)} \bigr) \Bigr]\nonumber\\
    & \geq \IE \Bigl[\IP\bigl(S  \leq \widehat{Q}_{(\ell_\gamma, k_\gamma)} \text{ and } E \,\big\vert\, \calS^{(1\ldots m)}  \bigr) \Bigr]
\nonumber\\
	& = \IE \Bigl[ \IP\bigl( S  \leq \widehat{Q}_{(\ell_\gamma, k_\gamma)} \,\big\vert\, \calS^{(1\ldots m)} \bigr) \cdot \IP \bigl( E \,\vert\, \calS^{(1\ldots m)} \bigr) \Bigr] \; , \label{eq:priv1}
\end{align}
where the last equality is obtained by the fact that knowing $\calS^{(1\ldots m)}$, the random variable $\widehat{Q}_{(\ell_\gamma, k_\gamma)}$ is deterministic, hence the events $E = \{\widehat{Q}^\varepsilon \geq \widehat{Q}_{(\ell_\gamma, k_\gamma)}\}$ and $\{S\leq \widehat{Q}_{(\ell_\gamma, k_\gamma)}\}$ are independent.

We first show that $\IP(E\,\vert\,\calS^{(1\ldots m)}) \geq 1-\gamma \alpha$. Notice that a sufficient condition for the event $E$ to be satisfied is that each agent $j$ outputs a value $\widehat{Q}_j^\varepsilon$ greater that the $\ell_\gamma$-th ordered score $ S_{(\ell_\gamma)}^{(j)}$ of the local data set $ S^{(j)}$. Indeed, in that case the $k_\gamma$-th ordered value of $\widehat{Q}_1^\varepsilon, \ldots, \widehat{Q}_m^\varepsilon$, i.e., $\widehat{Q}^\varepsilon$, is necessarily bigger than the $k_\gamma$-th ordered value of $S_{(\ell_\gamma)}^{(1)}, \ldots, S_{(\ell_\gamma)}^{(m)}$, i.e., $\widehat{Q}_{(\ell_\gamma, k_\gamma)}$. In the end, we have $E\supset \overset{m}{\underset{j=1}{\cap}}\{\widehat{Q}_j^\varepsilon \geq  S_{(\ell_\gamma)}^{(j)}\}$, which allows us to obtain a lower bound for $\IP(E\,\vert\,\calS^{(1\ldots m)})$:
\begin{align} \notag 
    \IP(E\,\vert\,\calS^{(1\ldots m)}) 
    &\geq \IP\Bigl( \bigcap_{j=1}^m \bigl\{\widehat{Q}_j^\varepsilon \geq  S_{(\ell_\gamma)}^{(j)} \bigr\} \,\big\vert\, \calS^{(1\ldots m)}\Bigr) 
    \\
    &= \prod_{j=1}^m\IP\bigl(\widehat{Q}_j^\varepsilon \geq  S_{(\ell_\gamma)}^{(j)} \,\big\vert\, \calS^{(1\ldots m)} \bigr) 
    = \prod_{j=1}^m\IP\bigl(\widehat{Q}_j^\varepsilon \geq  S_{(\ell_\gamma)}^{(j)} \,\big\vert\, \calS^{(j)}\bigr) 
    \geq \prod_{j=1}^m \IP\bigl(\widehat{Q}_j^\varepsilon \geq \bar{ S}_{(\ell_\gamma)}^{(j)} \,\big\vert\, \calS^{(j)}\bigr)
    \; , \label{eq:PES1}
\end{align}
where the first equality comes from the fact that the events $\{\widehat{Q}_j^\varepsilon \geq  S_{(\ell_\gamma)}^{(j)}\}$ are independent given~$\calS^{(1\ldots m)}$. 
The last inequality comes from the fact that, for all $j=1,\ldots,m$, the discretized score $\bar{ S}_{(\ell_\gamma)}^{(j)}$ is larger than (or equal to) the non-discretized score $ S_{(\ell_\gamma)}^{(j)}$. 
Moreover, for every $j \in \{1, \ldots , m\}$, we have:
\begin{align*}
    \IP\Bigl(\widehat{Q}_j^\varepsilon \geq \bar{ S}_{(\ell_\gamma)}^{(j)} \,\big\vert\, \calS^{(j)}\Bigr) 
    & = \IP\Bigl( \bigl\lvert \{i : \bar{S}^{(j)}_i \leq \widehat{Q}_j^\varepsilon\} \bigr\rvert \geq \ell_\gamma \,\big\vert\, \calS^{(j)} \Bigr) \\
    & = \IP\Bigl( \bigl\lvert \{i : \bar{S}^{(j)}_i \leq \widehat{Q}_j^\varepsilon\}\bigr\rvert \geq \ell_\gamma + \ell_{\text{ cor}} - \ell_{\text{ cor}} \,\big\vert\, \calS^{(j)} \Bigr) \\
    & \geq \IP\left(\frac{ \bigl\lvert \{i : \bar{S}^{(j)}_i \leq \widehat{Q}_j^\varepsilon\}\bigr\rvert}{n} \geq \frac{\ell_\gamma + \ell_{\text{ cor}}}{n} -  \frac{2}{n\varepsilon}\log \left(\frac{B}{1-(1-\gamma \alpha)^\frac{1}{m}}\right) \,\bigg\vert\, \calS^{(j)} \right) 
    \\
    & \geq \IP\left(\frac{ \bigl\lvert \{i : \bar{S}^{(j)}_i \leq \widehat{Q}_j^\varepsilon\} \bigr\rvert }{n} \geq \max\left\{ \frac{\ell_\gamma + \ell_{\text{ cor}}}{n} , \frac{1}{2} \right\} -  \frac{2}{n\varepsilon}\log \left( \frac{B}{1-(1-\gamma \alpha)^\frac{1}{m}}\right) \,\bigg\vert\, \calS^{(j)} \right) 
    \\
    & \geq (1-\gamma \alpha)^\frac{1}{m}\;,
\end{align*}
where the last inequality is obtained by applying Lemma~\ref{lemma:utility-dp} with $\{S_1, \ldots , S_n\} = \calS^{(j)}$, $q=\max\{ \frac{\ell_\gamma + \ell_{\text{ cor}}}{n} , \frac{1}{2} \}$ and $\delta = 1-(1-\gamma \alpha)^\frac{1}{m}$.

Plugging this result into Eq.~\eqref{eq:PES1}, we get $\IP(E \,\vert\, \calS^{(1\ldots m)}) \geq 1-\gamma \alpha$, which can then be plugged into Eq.~\eqref{eq:priv1} and leads to 
\begin{align*}
	\IP\bigl( Y \in \Chat_\varepsilon(X) \bigr) 
    \geq \IE \Bigl[ \IP\bigr( S \leq \widehat{Q}_{(\ell_\gamma, k_\gamma)} \,\big\vert\, \calS^{(1\ldots m)} \bigr) \cdot \IP \bigl(E \,\vert\, \calS^{(1\ldots m)} \bigr) \Bigr] 
	&\geq \IE \Bigl[ \IP\bigl(S \leq \widehat{Q}_{(\ell_\gamma, k_\gamma)} \,\big\vert\, \calS^{(1\ldots m)} \bigr) \Bigr] \cdot (1-\gamma\alpha)  \\
	& = \IP\bigl( S \leq \widehat{Q}_{(\ell_\gamma, k_\gamma)} \bigr) \cdot (1-\gamma\alpha)  \\
	& \geq 1-\alpha \, ,
\end{align*}
where the last inequality comes from the fact that  $\widehat{Q}_{(\ell_\gamma, k_\gamma)}$ is the output of \method{} (Algorithm~\ref{alg:CP-QQ}) with coverage $\frac{1-\alpha}{1-\gamma \alpha}$.
\qed


\subsection{Proof of Proposition \ref{prop:maxQQ}}
\label{app.pr.prop:maxQQ}
We start by proving the following lemma. 
\begin{lemma} 
\label{le.pr.prop:maxQQ}
The following equality holds true for every integer $k \geq 1$:
	\begin{equation*}
	\sum^{k-1}_{j=0} \dfrac{\Gamma(j + 1/n)}{\Gamma(j + 1)} = \dfrac{n \cdot k \cdot \Gamma(k + 1/n)}{\Gamma(k + 1)} \; .
	\end{equation*}
\end{lemma}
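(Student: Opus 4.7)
My plan is to prove this identity by induction on $k$, using only the functional equation $\Gamma(x+1) = x\Gamma(x)$ as the nontrivial input. The computation is elementary enough that no clever trick is needed; the only ``obstacle'' is careful bookkeeping with the Gamma function.

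For the base case $k=1$, the left-hand side equals $\Gamma(1/n)/\Gamma(1) = \Gamma(1/n)$, while the right-hand side equals $n\Gamma(1+1/n)/\Gamma(2) = n \cdot (1/n)\Gamma(1/n) = \Gamma(1/n)$, using $\Gamma(1+1/n) = (1/n)\Gamma(1/n)$. So the two sides agree.

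For the inductive step, assume the identity at $k$, add the term $j=k$ to both sides, and observe that one gets
\[
\sum_{j=0}^{k} \dfrac{\Gamma(j+1/n)}{\Gamma(j+1)} \;=\; (nk+1)\,\dfrac{\Gamma(k+1/n)}{\Gamma(k+1)} \; .
\]
It therefore remains to check that this equals $n(k+1)\Gamma(k+1+1/n)/\Gamma(k+2)$, which is the desired right-hand side at $k+1$. Applying the functional equation twice gives $\Gamma(k+1+1/n) = (k+1/n)\Gamma(k+1/n) = \tfrac{nk+1}{n}\Gamma(k+1/n)$ and $\Gamma(k+2) = (k+1)\Gamma(k+1)$, so
\[
\dfrac{n(k+1)\,\Gamma(k+1+1/n)}{\Gamma(k+2)}
= \dfrac{n(k+1)}{k+1}\cdot\dfrac{(nk+1)/n \cdot \Gamma(k+1/n)}{\Gamma(k+1)}
= (nk+1)\,\dfrac{\Gamma(k+1/n)}{\Gamma(k+1)} \; ,
\]
closing the induction.

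As an alternative presentation (which I find slightly cleaner), one can view the identity as a telescoping sum: setting $b_k = \frac{n k\,\Gamma(k+1/n)}{\Gamma(k+1)}$, exactly the same Gamma manipulation as above shows $b_{k+1}-b_k = \Gamma(k+1/n)/\Gamma(k+1)$, and since $b_0 = 0$, summing from $j=0$ to $k-1$ yields the claim. Either way, the entire proof rests on a single application of $\Gamma(x+1) = x\Gamma(x)$, so I would expect no real difficulty beyond writing the algebra carefully.
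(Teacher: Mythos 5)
Your induction is correct and is essentially the paper's own proof: same base case, same addition of the $j=k$ term to reach $(nk+1)\Gamma(k+1/n)/\Gamma(k+1)$, and the same double application of $\Gamma(x+1)=x\Gamma(x)$ to close the step. The telescoping reformulation with $b_k = nk\,\Gamma(k+1/n)/\Gamma(k+1)$ is a valid and slightly tidier repackaging of the identical computation, not a genuinely different route.
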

\begin{proof}
	Throughout the proof, we use that for any $x>0$,
	$\Gamma(x) = \dfrac{\Gamma(x + 1)}{x}\;,$ according to \citet{davis1959leonhard}. \\
	
	\noindent
	We proceed by induction on $k$.
	First, for $k = 1$, 
	\begin{align*}
	\sum^{k-1}_{j=0} \dfrac{\Gamma(j + 1/n)}{\Gamma(j + 1)} &= \dfrac{\Gamma(1/n)}{\Gamma(1)} = \Gamma(1/n) 
 = n \cdot \Gamma(1/n + 1) 
 = \frac{n \Gamma(1/n + 1)}{\Gamma(2)} \; .
	\end{align*}
	Then, assume that the result holds true for some $k \geq 1$, that is, 
	\begin{align*}
	\sum^{k-1}_{j=0} \dfrac{\Gamma(j + 1/n)}{\Gamma(j + 1)} &= \dfrac{n \cdot k \cdot \Gamma(k + 1/n)}{\Gamma(k + 1)} \; ,
	\end{align*}
 and let us prove that it holds true for $k+1$: 
\begin{align*}
	\sum^{k}_{j=0} \dfrac{\Gamma(j + 1/n)}{\Gamma(j + 1)} &= \sum^{k-1}_{j=0} \dfrac{\Gamma(j + 1/n)}{\Gamma(j + 1)} + \dfrac{\Gamma(k + 1/n)}{\Gamma(k + 1)} \\
	&= \dfrac{n \cdot k \cdot \Gamma(k + 1/n)}{\Gamma(k + 1)} + \dfrac{\Gamma(k + 1/n)}{\Gamma(k + 1)} \\
	&= \dfrac{(n \cdot k + 1)\Gamma(k + 1/n)}{\Gamma(k + 1)} \\
	&= \dfrac{(k + 1)}{\Gamma(k + 2)} \cdot (n \cdot k + 1)\Gamma(k + 1/n) \\
	&= \dfrac{(k + 1)}{\Gamma(k + 2)} \cdot (n \cdot k + 1)\dfrac{\Gamma(k + 1/n + 1)}{k + 1/n} \\
	&= \dfrac{(k + 1)}{\Gamma(k + 2)} \cdot (n \cdot k + 1)\dfrac{\Gamma(k + 1/n + 1)}{(n\cdot k + 1)/n} \\
	&= \dfrac{n \cdot (k + 1) \cdot \Gamma(k + 1 + 1/n)}{\Gamma(k + 2)} \; .
	\end{align*}
\end{proof}


We can now prove Proposition~\ref{prop:maxQQ}. 
Let us assume that $\{ U^{(j)}_i \}_{i,j=1}^{m,n}, U$ are i.i.d. uniform on $[0, 1]$ and use the notation of the proof of Theorem \ref{them:main}. We have $M_{n,k} = \IP ( U \leq U_{(n, k)} )$ by Theorem~\ref{them:main} and by Lemma~\ref{lem.cdf-densite.stat-ordre}, for every $t \in [0,1]$:
	\begin{align*}
	F_{U_{(n, k)}}(t)
	&=\sum^m_{j=k} \binom{m}{j} F_{U_{(n)}}(t)^j \bigl[ 1 - F_{U_{(n)}}(t) \bigr]^{m-j}
	= \sum^m_{j=k}\binom{m}{j} \left[t^n\right]^j \left[1 - t^n\right]^{m-j} \; .
	\end{align*}
Therefore, 	
\begin{align*}
	1 - M_{n, k} &= \IP\left(U_{(n, k)} \leq U \right) 
	= \int^{1}_{0} F_{U_{(n, k)}}(t) \mathrm{d}t  \\
	&= \int^{1}_{0} \sum^m_{j=k}\binom{m}{j} ( t^n )^j ( 1 - t^n )^{m-j} \mathrm{d}t  \\
	&= \dfrac{1}{n} \sum^m_{j=k}\binom{m}{j} \int^{1}_{0} u^j ( 1 - u )^{m-j} u^{1/n - 1} \mathrm{d}u  \qquad\qquad \text{( change of variable $u = t^n$)}  \\
	&= \dfrac{1}{n} \sum^m_{j=k}\binom{m}{j} \int^{1}_{0} u^{j + 1/n - 1} ( 1 - u )^{m-j} \mathrm{d}u  \\
	&= \dfrac{1}{n} \sum^m_{j=k}\binom{m}{j} \mathrm{B}(j + 1/n, m-j+1) \; ,
 \end{align*} 
where 
\[ 
\mathrm{B} : (a,b) \in (0,+\infty)^2 \mapsto \int_0^1 u^{a-1} (1-u)^{b-1} \mathrm{d}u
= \frac{\Gamma(a) \Gamma(b)}{\Gamma(a+b)}
\]
denotes the Beta function. We obtain that
\begin{align*}
    1 - M_{n, k} 
    &= \dfrac{1}{n} \sum^m_{j=k} \dfrac{\Gamma(j + 1/n)}{\Gamma(j + 1)} \cdot \dfrac{\Gamma(m + 1)}{\Gamma(m + 1/n + 1)} \\
    &= \dfrac{1}{n} \cdot \dfrac{\Gamma(m + 1)}{\Gamma(m + 1/n + 1)} \sum^m_{j=k} \dfrac{\Gamma(j + 1/n)}{\Gamma(j + 1)} \\
    &= \dfrac{1}{n} \cdot \dfrac{\Gamma(m + 1)}{\Gamma(m + 1/n + 1)} \left(\sum^m_{j=0} \dfrac{\Gamma(j + 1/n)}{\Gamma(j + 1)} - \sum^{k-1}_{j=0} \dfrac{\Gamma(j + 1/n)}{\Gamma(j + 1)} \right) \; . 
\end{align*}
Using Lemma~\ref{le.pr.prop:maxQQ}, we get that 
\begin{align*}
    1 - M_{n, k} 
    &= \dfrac{1}{n} \cdot \dfrac{\Gamma(m + 1)}{\Gamma(m + 1/n + 1)} \left(\dfrac{n (m + 1)\Gamma(m + 1/n + 1)}{\Gamma(m + 2)} - \dfrac{n \cdot k \cdot \Gamma(k + 1/n)}{\Gamma(k + 1)}\right) \qquad \\
    &= \Gamma(m + 1) \left(\dfrac{m + 1}{\Gamma(m + 2)} - \dfrac{k \cdot \Gamma(k + 1/n)}{\Gamma(k + 1) \Gamma(m + 1/n + 1)}\right) \\
    &= 1 - \dfrac{ \Gamma(k + 1/n)}{\Gamma(k)} \cdot \dfrac{\Gamma(m+1)}{\Gamma(m+1/n+1)} \; ,
\end{align*}
 which proves the first formula. 
	
Now, using Stirling's formula, when $k,m \to +\infty$, we have
\begin{align*}
    \dfrac{ \Gamma(k + 1/n)}{\Gamma(k)} \cdot \dfrac{\Gamma(m+1)}{\Gamma(m+1/n+1)} \sim \dfrac{ \Gamma(k)k^{1/n}}{\Gamma(k)} \cdot \dfrac{\Gamma(m) m }{\Gamma(m) m^{1/n+1}} = \dfrac{k^{1/n} }{m^{1/n}} \; .
\end{align*}
By setting $k = k_m \geq m (1-\alpha)^n$, we obtain the second result.
\qed


\end{document}